\newtheorem{theorem}{Theorem}
\newtheorem{corollary}{Corollary}
\newtheorem{lemma}{Lemma}
\newtheorem{definition}{Definition}
\newcommand{\tr}{\text{tr}}
\newcommand{\bl}[1]{{\mathbf #1}}
\newcommand{\bs}[1]{\boldsymbol #1}
\newcommand{\mb}[1]{\mathbb #1}
\newcommand{\mc}[1]{\mathcal #1}
\newcommand{\argmin}{\text{argmin}}
\newcommand{\argmax}{\text{argmax}}
\begin{document}

\title{The Nondecreasing Rank}
\author{Andrew McCormack} 
\affil{Department of Mathematical and Statistical Sciences, University of Alberta} 
\date{\today}

\maketitle

\begin{abstract}
In this article the notion of the nondecreasing (ND) rank of a matrix or tensor is introduced.  A tensor has an ND rank of $r$ if it can be represented as a sum of $r$ outer products of vectors, with each vector satisfying a monotonicity constraint. It is shown that for certain poset orderings finding an ND factorization of rank $r$ is equivalent to finding a nonnegative rank-$r$ factorization of a transformed tensor. However, not every tensor that is monotonic has a finite ND rank. Theory is developed describing the properties of the ND rank, including typical, maximum, and border ND ranks. Highlighted also are the special settings where a matrix or tensor has an ND rank of one or two. As a means of finding low ND rank approximations to a data tensor we introduce a variant of the hierarchical alternating least squares algorithm. Low ND rank factorizations are found and interpreted for two datasets concerning the weight of pigs and a mental health survey during the COVID-19 pandemic.

\smallskip
 \noindent\textit{Keywords:} contingency table; hierarchical alternating least squares; M\"obius inversion; nonnegative matrix factorization; partially ordered set; polytope; simplicial cone; tensor; rank. 
\end{abstract}

\section{Introduction}
Matrix and tensor factorizations are indispensable tools in data science, both for finding easily interpretable representations of complex data, and for obtaining more stable estimates of estimands of interest. An important example of a matrix factorization technique is nonnegative matrix factorization (NMF) \citep{GillisNNMF}, which has been leveraged in many applications ranging from hyperspectral imaging \citep{HyperspectralUnmixing} to recommender systems \citep{RecommenderSystemNMF}. In a nonnegative matrix factorization a matrix is decomposed into a product of two, low-rank, nonnegative matrices. The rows and columns of the matrix factors often have intuitive interpretations as atomic units that the dataset is built from. 

Nonnegative matrix factorizations have a statistical motivation as they also arise in the study of contingency tables as mixture models of independent random variables
\citep[Ch 4]{DrtonLecturesonAlgstats}. One strand of literature within the field of algebraic statistics has been to understand the semialgebraic geometry of such mixture models. Despite being simple to formulate, the geometry of NMF is involved, even in the rank-$3$ case \citep{KubjasNMFRank3Geometry, KubjasRobevaExpMaxNMF}. 

The present paper is concerned with a generalization of NMF; in addition to nonnegativity constraints, we seek matrix and tensor factorizations that fulfill monotonicity constraints with respect to a user-specified partial order. Termed a nondecreasing (ND) factorization, much of the theory for nonnegative factorizations will hold in this generalized setting. There are some notable exceptions where the theory for ND factorizations differs. For instance, the maximum ND rank of a matrix can be significantly larger than the maximum nonnegative rank. 

Monotonicity constraints with respect to a partially ordered set feature prominently in order constrained statistical inference \citep{ConstrainedStatInferenceSilvapuleSen}. 
Order constraints often arise in hypothesis testing problems where the efficacy of a collection of treatments are compared to determine which treatment appears to produce the best results \citep{TreamentOrderConstrainedsnapinn1987evaluating}. When observing functional data order constraints are also relevant. It may for example be natural to require that a function be monotone increasing \citep{RamsayMonotone}. Monotonicity constraints for matrix or tensor data have been previously modeled by multi-way, order constrained, analyses of variance (ANOVA) \citep{OrderedANOVA}. ND factorization, while similar, is not equivalent to order constrained ANOVA. 

An overview of this paper is as follows: Section \ref{sec:background} provides background material and Section \ref{sec:NonDecRankIntro} provides further introduction and motivation for the concept of the ND rank. Sections \ref{sec:OrderConeGeomExisenceofFactor} and \ref{sec:EquivofNDandNNRanks} address the questions of when exact ND factorizations exist and when they can be reframed as nonnegative factorizations up to a linear change in coordinates. The geometry of order cones is also explicated here. Section \ref{sec:MaxTypicalRanks} provides exact expressions and bounds for the maximum ND rank in certain cases. It is shown that knowing the maximum ND rank is enough to fully determine the typical ND ranks. Section \ref{sec:NDRankOneTwo} shows that if the rank of a matrix is one or two and it has a finite ND rank, then the ND rank is also equal to one or two respectively. Section \ref{sec:NDBorderRank} shows that the border ND rank is equivalent to the ND rank, a fact which is relevant for optimization. Section \ref{sec:Optimization} proposes an ND hierarchical least squares optimization algorithm. The paper concludes with an application section illustrating how ND factorizations are fruitful for uncovering structure in data.

\section{Notation}
\label{sec:Notation}
Some of the notation used in this work is listed here for easy reference.  Partially ordered sets (posets) are represented by $\mc{P}$, the order cone associated with this poset is denoted by $\mc{C}(\mc{P})$, and the set of tensors with ND rank at most $r$ is $\mc{N}_{\leq r}$. The dual of a cone $\mc{C}$ is $\mc{C}^*$.   
 A curly inequality $\prec$ represents inequality with respect to a poset while $x \lessdot y$, means that the element $y$ covers $x$ in the poset. $\mb{R}^p_+$ is equal to all vectors in $\mb{R}^p$ with nonnegative entries and $\bl{e}_i$ is the $i$th standard basis vector. Vectors, matrices, and tensors are all displayed in bold font, while scalars are not. Given any finite-dimensional vector spaces $V_1,\ldots,V_k$ and sets $S_i \subseteq V_i$,  $\otimes_{i = 1}^k V_i$ is the tensor product space of the $V_i$ and $\otimes_{i = 1}^k S_i$ is the subset $\cup_{r = 1}^\infty \{\sum_{i = 1}^r \otimes_{j = 1}^k \bl{s}^{(ij)}:\bl{s}^{(ij)} \in S_j\}$ of this tensor product space. We will use $k$ to refer to the number of factors in the tensor product and take $p_i \coloneq \text{dim}(V_i)$, where usually $V_i = \mb{R}^{\mc{P}_j}$ with $\mc{P}_j$ a poset containing $p_j$ elements. Shorthand for the set $\{1,\ldots,p\}$ is $[p]$. The standard inner product between two order-$k$ tensors is $\langle \bl{S},\bl{T}\rangle = \sum_{i_1 = 1}^{p_1}\cdots \sum_{i_k = 1}^{p_k}S_{i_1\ldots i_k}T_{i_1 \ldots i_k}$. This induces the Frobenius norm $\Vert \bl{T}\Vert_F^2 = \langle \bl{T}, \bl{T} \rangle$.    

\section{Preliminaries on Low-rank Tensors and Convex Geometry}
\label{sec:background}

For the purposes of this work a tensor $\bl{T} \in \mb{R}^{p_1 \times \cdots \times p_k}$ will be viewed as an array of real numbers $T_{i_1\ldots i_k}$ where $i_j$ ranges from $1$ to $p_j$ for all $j$. Equivalently, a tensor $\bl{T}$ can be viewed as a function on the index set $\times_{j = 1}^k [p_j]$ with $\bl{T}(i_1,\ldots,i_k) \coloneq T_{i_1\ldots i_k}$. We say that such a tensor has order $k$ and dimension $(p_1,\ldots,p_k)$. Order one tensors are vectors and order two tensors are matrices. A large portion of the applications of this work are to matrix-valued data and a reader unacquainted with tensors may restrict their focus to matrices. Modes of the tensor refer to one of the $k$ index sets $[p_j]$, and a fibre of this tensor is a vector that holds every index of $\bl{T}$ fixed except for one. For example, $\bl{S}_{\bullet 4} \in \mb{R}^{p_1}$ represents the fourth column of a matrix --- a mode-one fibre, while $\bl{T}_{3 \, 2  \bullet 4} \in \mb{R}^{p_3}$ is a mode-three fibre of an order four tensor.   

A special class of tensors are tensors that have rank-one, meaning that they can be written multiplicatively as
\begin{align*}
    T_{i_1\ldots i_k} = v^{(1)}_{i_1} \cdots v^{(k)}_{i_k},\;\; \forall (i_1,\ldots,i_k) \in \times_{j = 1}^k [p_j],
\end{align*}
where every $(\bl{v}^{(j)})^\intercal = (v_1^{(j)},\ldots,v_{p_j}^{(j)})$ is a vector in $\mb{R}^{p_j}$. The tensor product notation $\otimes$ is used to denote the above rank-one structure: $\bl{T} = \otimes_{j = 1}^k \bl{v}^{(j)}$. The real (CP \citep[Sec 3]{KoldaTensor}) rank of a tensor is defined as 
\begin{align}
\label{eqn:RankDefinition}
 \text{rank}(\bl{T}) =  \min\bigg\{r: \bl{T} = \sum_{i = 1}^r \otimes_{j = 1}^k \bl{v}^{(ij)}\bigg\},
\end{align}
where every $\bl{v}^{(ij)} \in \mb{R}^{p_j}$. That is, it is the smallest number of rank-one tensors required to express $\bl{T}$ as a sum of rank-one tensors. Whenever rank is referred to in this work it is taken to mean rank over the real numbers. The nonnegative rank \citep{GillisNNMF} places extra nonnegativity constraints on the vectors $\bl{v}^{(ij)}$ and is defined as
\begin{align}
\label{eqn:NNRankDefinition}
 \text{rank}_+(\bl{T}) =  \min\bigg\{r: \bl{T} = \sum_{i = 1}^r \otimes_{j = 1}^k \bl{v}^{(ij)}, \;v^{(ij)}_l \geq 0 \; \forall i,j,l \bigg\}. 
\end{align}
In general, the nonnegative rank of a tensor may exceed its rank \citep{CohenNonNegative,thomas1974rank}.

We will be examining low-rank decompositions where the vectors appearing in the rank-one factors satisfy a conical constraint. A brief summary of some standard ideas from convex geometry are provided below, where the reader is referred to \citep{BoydConvex,ZieglerPolytopes} for more details.

A (convex) cone $\mc{C} \subset \mb{R}^p$ satisfies the property that for all $\bl{x},\bl{y} \in \mc{C}$ and $\alpha,\beta \geq 0$ the vector $\alpha \bl{x} + \beta \bl{y}$ is in $\mc{C}$. The conical hull of the vectors $\bl{v}_1,\ldots,\bl{v}_m$ is defined as $ \{\sum_{i = 1}^m \alpha_i \bl{v}_i: \alpha_i \geq 0\}$. Any cone that can be expressed in this way is called polyhedral with the representation referred to as a vertex or $\mc{V}$-representation. Polyhedral cones can also equivalently be expressed in a halfspace or $\mc{H}$-representation, where a cone is defined by finitely many halfspace constraints: $\{\bl{x}: \bl{a}_i^\intercal \bl{x} \geq 0, i = 1,\ldots,l\}$. The dual cone $\mc{C}^*$ is closely related to the $\mc{H}$-representation as it is defined as the closed, convex set $\mc{C}^* = \{\bl{a}: \bl{a}^\intercal \bl{x} \geq 0, \forall \bl{x} \in \mc{C
}\}$. A cone is said to be pointed if $\mc{C} \cap (-\mc{C}) = \{\bl{0}\}$, solid if it has a non-empty interior, and proper if it is pointed, solid, and closed. Polyhedral cones are pointed if and only if the $\bl{a}_i$ appearing in the $\mc{H}$-representation span $\mb{R}^p$, are solid if and only if the $\bl{v}_i$ appearing in the $\mc{V}$-representation span $\mb{R}^p$, and are always closed. Whenever $\mc{C}$ is polyhedral or proper the dual cone $\mc{C}^*$ inherits these properties respectively. 

One of the simplest types of polyhedral cones are simplicial cones that are generated by $p$ linearly independent vectors $\bl{v}_1,\ldots,\bl{v}_p \in \mb{R}^p$; other classes of proper cones are generated by more than $p$ vectors. The nonnegative orthant $\mb{R}^p_+ = \{\sum_{i = 1}^p \alpha_i\bl{e}_i: \alpha_i \geq 0\}$ is simplicial as it is generated by the standard basis vectors $\bl{e}_i$. A face $\mc{F} \subset \mc{C}$ of a cone is a subset where if $\bl{x} + \bl{y} \in \mc{F}$ and $\bl{x},\bl{y} \in \mc{C}$ then $\bl{x},\bl{y} \in \mc{F}$. One-dimensional faces, referred to as extremal rays, have the form $\{\alpha\bl{v}:\alpha \geq 0\}$ of a ray protruding from $\bl{0}$ in the direction $\bl{v}$. For brevity we will at times refer to $\bl{v}$ as an extremal ray or as being extremal. The extremal rays of $\mb{R}_+^p$ are exactly the coordinate axes $\{\alpha \bl{e}_i:\alpha \geq 0\}$. Any face of $\mb{R}^p_+$ is equal to 
$\{\sum_{i \in I} \alpha_i \bl{e}_i: \alpha_i \geq 0\} = \{\bl{x} \in \mb{R}^p_+: x_i = 0, \forall i \notin I\}$ for some set of indices $I \subset [p]$. The set of faces of a polyhedral cone forms a partially ordered set under set inclusion. For example, the face $\{\alpha_1 \bl{e}_1 + \alpha_2 \bl{e}_2: \alpha_1,\alpha_2 \geq 0\}$ contains the two extremal rays $\{\alpha \bl{e}_1: \alpha \geq 0\}, \{\alpha \bl{e}_2: \alpha \geq 0\}$. As this example illustrates, any face in a proper cone is equal to the conical hull of the extremal rays contained in the face. In particular, if $\mc{C} = \text{cone}(\bl{v}_1,\ldots,\bl{v}_m)$ then the faces of $\mc{C}$ are equal to $\text{cone}(\bl{v}_{i_j}: i_j \in I)$ for some index set $I$, but not every index set $I \subset [m]$ gives rise to a face. A facet of a cone is a face that has dimension equal to the dimension of the cone minus one.

\section{The Nondecreasing Rank}
\label{sec:NonDecRankIntro}
Going beyond nonnegativity constraints, in many applications a tensor that arises from data may also possess some kind of ordering constraint. Consider the following dose-response data in Table \ref{tab:Seleniumdoseresponse} that displays the percentage of flies that died in response to exposure to different types of selenium at different concentrations \citep{Seleniumjeske2009testing,DRCPackage}. As might be expected, there is a general increasing trend within each row of the table, as more flies expire when selenium is present in higher concentrations. It is also apparent from the table that the third type of selenium appears more harmful than the first two types, while the first type may be more harmful than the second, but this is much less clear.

\begin{table}[h!]
    \centering
    \begin{tabular}{|c|cccc|}
\hline 
 & \multicolumn{4}{|c|}{Concentration}
 \\
 \hline
  & 0 & 100 & 200 & 400
    \\
    \hline
    \hline
         Type I & 2.0 & 27.4 & 26.7 & 68.0 \\
        Type II & 1.4 & 19.6 & 41.5 & 40.3 \\
            Type III & 2.9 & 24.4 & 75.0 & 96.5 \\
         \hline
    \end{tabular}
    \caption{Percentages of flies that died after being exposed to three different types of selenium at various concentrations.}
    \label{tab:Seleniumdoseresponse}
\end{table}

If a concise representation of this data was sought, a simple assumption would be that the dose-response functions $f_i:\{0,100,200,400\} \rightarrow [0,100]$ for each type of selenium satisfy $f^{(i)} = c_i f$ for some nondecreasing function $f$ and scalars $c_3 \geq c_1,c_2 \geq 0$, with the latter assumption coming from the observation that the third type of selenium appears to be most harmful. This representation amounts to assuming that the dose-response matrix $\bl{T} \approx \bl{c} \bl{f}^\intercal$ is rank-one and that the nonnegative vectors $\bl{c}$ and $\bl{v}$ are congruous with the respective orderings $\text{Type I, Type II} \leq \text{Type III}$, and $0 \leq 100 \leq 200 \leq 400$ that are imposed on the row and column variables of the matrix. If $\bl{c}$ is scaled so that $\tfrac{1}{3}(c_1 + c_2 + c_3) = 1$ then the vector $\bl{f}$ could be interpreted as the average dose-response curve across all three types of selenium, while $\bl{c}$ reflects the multiplicative, noxious effects of the different varieties of selenium.

Formally, in the selenium example we have associated two partially ordered sets (posets) with the rows and columns of the table, namely $\mc{P}_1 = \{\mathrm{Type\, I, Type \, II, Type \, III}\}$ and $\mc{P}_2 = \{0,100,200,400\}$. The vectors $\bl{c}$ and $\bl{f}$ can be viewed as functions defined on $\mc{P}_1$ and $\mc{P}_2$ respectively. To say that both $\bl{c}$ and $\bl{f}$ respect the given partial orderings is equivalent to saying that these vectors lie in the order cones $\mc{C}(\mc{P}_1)$ and $\mc{C}(\mc{P}_2)$:

\begin{definition}[Order cone and order polytope of functions] Let $\mc{P}$ be a finite poset and take $\bl{f} \in \mb{R}^{\mc{P}}$ be a real-valued function defined on this poset. The function $\bl{f}$ is said to be nondecreasing if $f_x \leq f_y$ whenever $x \preceq y$ in $\mc{P}$. The order cone $\mc{C}(\mc{P})$ consists of all nonnegative, nondecreasing functions on $\mc{P}$, while the order  polytope $\mc{O}(\mc{P})$ consists of all tensors in $\mc{C}(\mc{P})$ with $f_x \leq 1$ for all $x \in \mc{P}$ \citep{StanleyPosetPolytopes}.    
\end{definition}
An order cone is a proper, polyhedral cone that has the $\mc{H}$-representation 
\begin{align*}
    \mc{C}(\mc{P}) = \cap_{x \preceq y}\{\bl{f}: f_x \leq f_y \} \cap \{ \bl{f}: f_x \geq 0, x \in \mc{P}\}.
\end{align*}
More will be said about the cone structure of $\mc{C}(\mc{P})$ in the next section.

It may be the case that a rank-one factorization of the form $\bl{c}\bl{f}^\intercal$ is not sufficient to capture all of the structure inherent to the selenium dose-response matrix; more general low-rank representations are sought. Pairing a monotonicity requirement with a low-rank factorization we now introduce the central notion of interest in this work:
\begin{definition}[Nondecreasing (ND) rank] Let $\bl{T} \in \mb{R}^{\mc{P}_1 \times \cdots \times \mc{P}_k}$ be a tensor defined on the Cartesian product index set $\mc{P} \coloneq \mc{P}_1 \times \cdots \times \mc{P}_k$, where each $\mc{P}_j$ is a finite poset. Any decomposition of the form $\bl{T} = \sum_{i = 1}^r \otimes_{j = 1}^k \bl{v}^{(ij)}$ with $\bl{v}^{(ij)} \in \mc{C}(\mc{P}_j)$ is referred to as a nondecreasing decomposition or factorization. The nondecreasing rank of a tensor $\bl{T}$ is the minimum number of rank-one factors needed in a nondecreasing factorization: 
    \begin{align}
        \mathrm{NDrank}(\bl{T}) = \min\bigg\{r:  \bl{T} = \sum_{i = 1}^r \otimes_{j = 1}^k \bl{v}^{(ij)}, \;\; \bl{v}^{(ij)} \in \mc{C}(\mc{P}_j),  \; \forall i,j  \bigg\}.
    \end{align}
    If $\bl{T}$ does not possess a nondecreasing factorization then it has an infinite ND rank. The set of all tensors with finite ND rank, ND rank equal to $r$, and ND rank less than or equal to $r$ are respectively denoted by $\mc{N}_{< \infty}(\mc{P})$, $\mc{N}_r(\mc{P})$, and $\mc{N}_{\leq r}(\mc{P})$. The poset $\mc{P}$ will generally be dropped from this notation.
\end{definition}
The ND rank refines the nonnegative rank \eqref{eqn:NNRankDefinition} by possibly adding extra constraints, implying that $\mathrm{rank}_+(\bl{T}) \leq \mathrm{NDrank(\bl{T}})$. We note that the terminology of monotone rank has appeared in the literature \citep{TsimermanMonotonerank}, although it is synonymous with the nonnegative rank over an ordered field. Other very general notions of rank that encompass the ND rank as a special case are that of the X-rank \citep{ComonXrank} and the atomic cone rank \citep{AtomicconeRankParrilo}.

The tensor $\bl{T}$ appearing in the definition of ND rank can be viewed as a function on the Cartesian product of posets $\mc{P} \coloneq \times_{j = 1}^k \mc{P}_j$. The set $\mc{P}$ can itself be given the structure of a poset, called the product poset, by defining the ordering with $(x_1,\ldots,x_k) \preceq (y_1,\ldots,y_k)$ if and only if $x_j \preceq y_j$ in $\mc{P}_j$ for every $j$. In the selenium example we have that $(\mathrm{Type \,II},300) \preceq (\mathrm{Type \,III},400)$ but  $(\mathrm{Type \,III},0) \npreceq (\mathrm{Type \,I},400)$. A helpful way to visualize posets is by their $\textbf{Hasse diagrams}$. A Hasse diagram is a directed graph with nodes given by the elements of the poset and arrows drawn from an element $x$ to an element $y$ if and only if $y$ \textbf{covers} $x$, meaning that $x \prec y$ and there does not exist an element $z$ with $x \prec z \prec y$. The covering relation is denoted by $x \lessdot y$.  Figures \ref{fig:HasseDiagramPair} and \ref{fig:HasseDiagramProduct} illustrate the Hasse diagrams of the partial orderings $\mc{P}_1$, $\mc{P}_2$, and $\mc{P}_1 \times \mc{P}_2$ in the selenium example. We see that there is an arrow from $0$ to $100$ as $0 \lessdot 100$, but not from $0$ to $200$ since $100$ lies in between $0$ and $200$. The Hasse diagram of the poset $\mc{P}_2$ is a path and has the additional property of being totally ordered. Totally ordered posets are referred to as \textbf{chains}: for any two elements $x,y\in \mc{P}_2$ either $x \preceq y$ or $y \preceq x$. The poset $\mc{P}_1$ is not a chain since Type I is neither less than or greater than Type II.

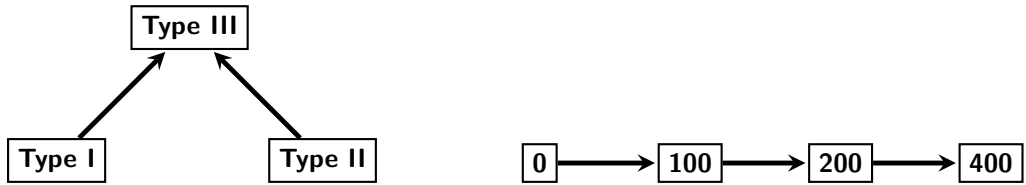
\begin{figure*}[!ht]
    \centering
    \begin{subfigure}[t]{0.5\textwidth}
        \centering
\begin{tikzpicture}[->,>=stealth,shorten >=1pt,auto,node distance=2.5cm,thick,main node/.style={draw,font=\sffamily\small\bfseries}]
  \node[main node] (3) {Type III};
  \node[main node] (1) [below left of=3] {Type I};
  \node[main node] (2) [below right of=3] {Type II};
  \path[every node/.style={font=\sffamily\small},line width=.65mm]
    (1) edge [left] node[left] {} (3)
    (2) edge [right] node[right] {} (3);
\end{tikzpicture}
    \end{subfigure}%
    ~ 
    \begin{subfigure}[t]{0.4\textwidth}
        \centering
\begin{tikzpicture}[->,>=stealth,shorten >=1pt,auto,node distance=2cm,thick,main node/.style={draw,font=\sffamily\small\bfseries}]

  \node[main node] (0) {0};
  \node[main node] (100) [right of=0] {100};
  \node[main node] (200) [right of=100] {200};
    \node[main node] (400) [right of=200] {400};
  \path[every node/.style={font=\sffamily\small},line width=.65mm]
    (0) edge [right] (100)
    (100) edge [right] (200)
        (200) edge [right] (400);
\end{tikzpicture}
    \end{subfigure}
    \caption{The respective Hasse diagrams of $\mc{P}_1 = \{\text{Type I,\, Type II,\, Type III}\}$ and $\mc{P}_2 = \{0,100,200,400\}$.}
    \label{fig:HasseDiagramPair}
\end{figure*}

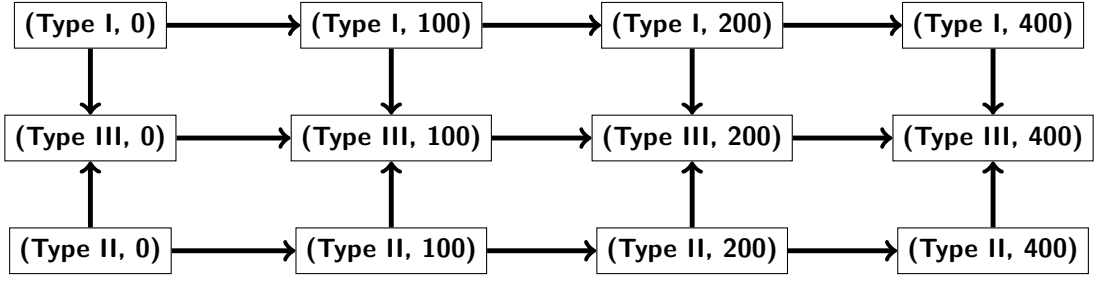
\begin{figure}[!ht]
    \centering
\begin{tikzpicture}[every node/.style={font=\sffamily\small\bfseries}]
 \draw (0,0)  node[draw] (A) {(Type II, 0)};
  \draw (4,0) node[draw] (B) {(Type II, 100)};
    \draw (8,0) node[draw] (C) {(Type II, 200)};
      \draw (12,0) node[draw] (D) {(Type II, 400)};
 \draw (0,1.5)  node[draw] (E) {(Type III, 0)};
  \draw (4,1.5) node[draw] (F) {(Type III, 100)};
    \draw (8,1.5) node[draw] (G) {(Type III, 200)};
      \draw (12,1.5) node[draw] (H) {(Type III, 400)};
 \draw (0,3)  node[draw] (I) {(Type I, 0)};
  \draw (4,3) node[draw] (J) {(Type I, 100)};
    \draw (8,3) node[draw] (K) {(Type I, 200)};
      \draw (12,3) node[draw] (L) {(Type I, 400)};
      \draw [->, thick,line width=.65mm] (A) edge (B) (B) edge (C) (C) edge (D) (E) edge (F) (F) edge (G) (G) edge (H) (I) edge (J) (J) edge (K) (K) edge (L) (I) edge (E) (A) edge (E) (J) edge (F) (B) edge (F) (C) edge (G) (K) edge (G) (D) edge (H) (L) edge (H);
\end{tikzpicture}
    \caption{Hasse Diagram of the product poset $\{\text{Type I,\, Type II,\, Type III}\} \times \{0,100,200,400\}$. }
    \label{fig:HasseDiagramProduct}
\end{figure}

It is worth contemplating why there is a requirement that vectors must be nonnegative in the definition of the order cone, and accordingly in ND factorizations. One motivation for this is that if $\bl{T} = \sum_{i = 1}^r \otimes_{j = 1}^k \bl{v}^{(ij)}$ is an ND factorization then $\bl{T} \in \mc{C}(\mc{P})$ is guaranteed to also respect the order of the product poset $\mc{P}$. If we contrarily permitted a nondecreasing $\bl{v}^{(ij)}$ to contain negative entries the tensor $\otimes_{j = 1}^k \bl{v}^{(ij)}$ would not necessarily be nondecreasing over $\mc{P}$.

\section{Geometry of the Order Cone and the Existence of Nondecreasing Factorizations}
\label{sec:OrderConeGeomExisenceofFactor}

Any nonnegative tensor has a nonnegative factorization. However, the situation differs for monotone tensors; if $\bl{T} \in \mc{C}(\mc{P})$ it is not necessarily the case that $\bl{T}$ has a nondecreasing factorization. In this section and the next we highlight conditions that ensure that a tensor has a finite ND rank. The geometry of order cones and the finite-rank cone will also be examined.

\begin{definition}[Projective Tensor Product of Cones]
    Given convex cones $\mc{C}_j$, $j = 1,\ldots,k$, the projective tensor product \citep{TensorProdConeMulansky} of the $\mc{C}_j$s is equal to
    \begin{align*}
        \otimes_{j = 1}^k \mc{C}_j \coloneq \bigg\{\bl{T}: \exists r, \; \bl{T} = \sum_{i = 1}^r \otimes_{j = 1}^k \bl{v}^{(ij)}, \bl{v}^{(ij)} \in \mc{C}_j\bigg\}.
    \end{align*}
\end{definition}
When $\mc{C}_j = \mc{C}(\mc{P}_j)$ in the above definition the tensor product of cones is the same as the set $\mc{N}_{< \infty}(\times_{i = 1}^k\mc{P}_j)$. The projective tensor product is a proper, polyhedral cone of dimension $\prod_{j = 1}^k p_j$ whenever every $\mc{C}_j \subset \mb{R}^{p_j}$ is proper and polyhedral \citep{TensorProdConeMulansky}. As the projective tensor product is equal to $\text{cone}(\otimes_{j = 1}^k \bl{v}^{(j)}: \bl{v}^{(j)} \in \mc{C}_j)$ the extremal rays must be rank-one tensors. The following result \citep[Thm 3.22]{deBruynTensorProducts} identifies all such extremal rays.

\begin{lemma}
\label{lem:ExtremalRaysTensorProduct}
The extremal rays of $\otimes_{j = 1}^k \mc{C}_j$ consist exactly of the rank-one tensors $\otimes_{j = 1}^k \bl{v}^{(j)}$ where every $\bl{v}^{(j)} \in \mc{C}_j$ is extremal. 
\end{lemma}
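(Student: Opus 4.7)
The plan is to verify both directions of the claimed equality. For the direction from products to extremality, suppose, toward a contradiction, that some $\bl{v}^{(j_0)}$ is \emph{not} extremal in $\mc{C}_{j_0}$, so $\bl{v}^{(j_0)} = \bl{u} + \bl{w}$ with $\bl{u},\bl{w} \in \mc{C}_{j_0}$ and neither lying in $\mb{R}_+ \bl{v}^{(j_0)}$. Distributing the tensor product yields
\[
\bigotimes_{j = 1}^k \bl{v}^{(j)} \;=\; \Big(\bigotimes_{j \neq j_0} \bl{v}^{(j)}\Big) \otimes \bl{u} \;+\; \Big(\bigotimes_{j \neq j_0} \bl{v}^{(j)}\Big) \otimes \bl{w},
\]
a decomposition of $\otimes_j \bl{v}^{(j)}$ into two elements of $\otimes_j \mc{C}_j$, neither of which is a nonnegative scalar multiple of $\otimes_j \bl{v}^{(j)}$. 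Hence $\otimes_j \bl{v}^{(j)}$ fails to generate an extremal ray.

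For the reverse direction I assume every $\bl{v}^{(j)}$ is extremal in its cone and construct separating linear functionals. Since each $\mc{C}_j$ is proper and polyhedral, every extremal ray is exposed, so there exists $\bl{a}^{(j)} \in \mc{C}_j^*$ with $\{\bl{x} \in \mc{C}_j : \langle \bl{a}^{(j)}, \bl{x}\rangle = 0\} = \mb{R}_+ \bl{v}^{(j)}$. Fix, for every $i$, an interior point $\bl{b}^{(i)} \in \mathrm{int}(\mc{C}_i^*)$, which exists since $\mc{C}_i^*$ is also proper. For each mode $j$ define the linear functional $L_j$ on tensors by contracting against the rank-one tensor with $\bl{a}^{(j)}$ in mode $j$ and $\bl{b}^{(i)}$ in each other mode $i \neq j$. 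Evaluated on a rank-one element $\otimes_i \bl{x}^{(i)}$ of $\otimes_j \mc{C}_j$ this is
\[
L_j\big(\otimes_i \bl{x}^{(i)}\big) \;=\; \langle \bl{a}^{(j)}, \bl{x}^{(j)}\rangle \prod_{i \neq j} \langle \bl{b}^{(i)}, \bl{x}^{(i)}\rangle \;\geq\; 0,
\]
so $L_j$ is nonnegative throughout $\otimes_j \mc{C}_j$, and $L_j(\otimes_i \bl{v}^{(i)}) = 0$.

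Now assume $\otimes_j \bl{v}^{(j)} = \bl{S} + \bl{T}$ with $\bl{S},\bl{T} \in \otimes_j \mc{C}_j$. Nonnegativity of each $L_j$ on the cone, combined with $L_j(\otimes_i \bl{v}^{(i)}) = 0$, forces $L_j(\bl{S}) = L_j(\bl{T}) = 0$ for every $j$. Expand $\bl{S} = \sum_i \otimes_j \bl{s}^{(ij)}$ with $\bl{s}^{(ij)} \in \mc{C}_j$; after discarding any rank-one summands in which some factor is zero, the interior property of $\bl{b}^{(i')}$ yields $\langle \bl{b}^{(i')}, \bl{s}^{(ii')}\rangle > 0$ for $i' \neq j$, so $L_j(\bl{S}) = 0$ forces $\langle \bl{a}^{(j)}, \bl{s}^{(ij)}\rangle = 0$ for each remaining summand, i.e.\ $\bl{s}^{(ij)} \in \mb{R}_+ \bl{v}^{(j)}$. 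Running this argument over every $j$ shows that each rank-one term of $\bl{S}$ is a nonnegative multiple of $\otimes_j \bl{v}^{(j)}$, hence $\bl{S} \in \mb{R}_+ \otimes_j \bl{v}^{(j)}$, and symmetrically for $\bl{T}$. The main technical obstacle is precisely this final per-mode bookkeeping: a single $L_j$ only exposes the extremal ray in mode $j$, so $k$ distinct functionals together with the interior-point choice in every other mode are needed to pin down each factor of each summand on the appropriate extremal ray.
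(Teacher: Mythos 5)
The paper offers no proof of this lemma at all --- it is imported from \citep[Thm 3.22]{deBruynTensorProducts} --- so your proposal is being compared against a citation rather than an argument. Your proof is correct and self-contained. The forward direction (split a non-extremal factor as $\bl{u}+\bl{w}$ and distribute the tensor product) is the standard one, and the reverse direction is a clean execution of the standard exposing-functional technique: exposing $\mb{R}_+\bl{v}^{(j)}$ by some $\bl{a}^{(j)}\in\mc{C}_j^*$, padding the remaining modes with interior points of the dual cones so that those inner products are strictly positive on nonzero cone elements, and using the $k$ functionals $L_j$ to force every factor of every surviving rank-one summand of $\bl{S}$ and $\bl{T}$ onto the appropriate extremal ray. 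Two small caveats. First, your argument genuinely uses that the $\mc{C}_j$ are proper and polyhedral: exposedness of extremal rays and nonemptiness of $\mathrm{int}(\mc{C}_j^*)$ both require it. The bare statement of the lemma only says ``convex cones,'' but properness and polyhedrality are the standing hypotheses in the surrounding text (the $\mc{C}_j$ are order cones), so this is an acceptable reading --- just state it. Second, your first direction only shows that a rank-one tensor $\otimes_j\bl{v}^{(j)}$ with some non-extremal factor fails to be extremal; to get that the extremal rays \emph{consist exactly of} such products you also need that every extremal ray of $\otimes_{j=1}^k\mc{C}_j$ is generated by a rank-one tensor with factors in the $\mc{C}_j$ in the first place. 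This is a one-line consequence of the cone being generated by rank-one tensors (if a sum of generators lies on a one-dimensional face, each summand lies on that face), and the paper records exactly this observation in the sentence immediately preceding the lemma, but your write-up should include it to close the ``exactly.''
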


In general, the facial structure of $\otimes_{j = 1}^k \mc{C}_j$ is more complicated. An exception is in the special case where almost every $\mc{C}_j$ is simplicial and the dual cone can be described \citep{BarkerTheoryofCones,InjectiveTensorAubrunConeEntagleFeasbility,deBruynTensorProducts}:

\begin{theorem}
\label{thm:InjectiveEqualsProjective}
If at most one of the $\mc{C}_j$ cones is not simplicial then $\big(\otimes_{j = 1}^k \mc{C}_j\big)^* = \otimes_{j = 1}^k \mc{C}_j^*$. If every $\mc{C}_j$ is also polyhedral then the facets of $\otimes_{j = 1}^k \mc{C}_j$ have the form $\{\bl{T}: \langle \bl{T}, \otimes_{j = 1}^k \bl{h}^{(j)}\rangle = 0\} \cap \otimes_{j = 1}^k \mc{C}_j$ where every $\bl{h}^{(j)}$ is extremal in $\mc{C}_j^*$.
\end{theorem}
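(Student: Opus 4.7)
The plan is to prove the cone identity $(\otimes_{j=1}^k \mc{C}_j)^* = \otimes_{j=1}^k \mc{C}_j^*$ first, and then deduce the facet description as a corollary by combining it with Lemma~\ref{lem:ExtremalRaysTensorProduct} and the standard facet-to-extremal-ray duality for proper polyhedral cones.

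For the inclusion $\otimes_{j=1}^k \mc{C}_j^* \subseteq (\otimes_{j=1}^k \mc{C}_j)^*$, I would start from the observation that a rank-one tensor $\otimes_j \bl{h}^{(j)}$ with $\bl{h}^{(j)} \in \mc{C}_j^*$ pairs with a rank-one tensor $\otimes_j \bl{v}^{(j)}$, $\bl{v}^{(j)} \in \mc{C}_j$, to give the nonnegative number $\prod_j \langle \bl{h}^{(j)}, \bl{v}^{(j)}\rangle$, and then extend by bilinearity across conic combinations on both sides. This direction uses no simplicial hypothesis and is essentially bookkeeping.

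The reverse inclusion is the heart of the proof, and is where the simplicial hypothesis is indispensable. Suppose $\mc{C}_1, \ldots, \mc{C}_{k-1}$ are simplicial with generators $\bl{v}_1^{(j)}, \ldots, \bl{v}_{p_j}^{(j)}$, and let $\bl{w}_1^{(j)}, \ldots, \bl{w}_{p_j}^{(j)}$ be the biorthogonal dual basis, so that $\langle \bl{w}_i^{(j)}, \bl{v}_l^{(j)}\rangle = \delta_{il}$. A key preliminary I would establish is that for a simplicial cone the dual is again simplicial, with extremal generators given precisely by this dual basis; this follows by reading off the $\mc{H}$-representation of $\mc{C}_j^*$ from the coordinate expansion in the $\bl{v}_i^{(j)}$-basis. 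Then, given $\bl{T} \in (\otimes_j \mc{C}_j)^*$, I expand $\bl{T}$ uniquely in the $\bl{w}^{(j)}$-bases along the first $k-1$ modes,
\begin{align*}
\bl{T} \;=\; \sum_{i_1, \ldots, i_{k-1}} \bl{w}_{i_1}^{(1)} \otimes \cdots \otimes \bl{w}_{i_{k-1}}^{(k-1)} \otimes \bl{u}_{i_1, \ldots, i_{k-1}},
\end{align*}
with slices $\bl{u}_{i_1, \ldots, i_{k-1}} \in \mb{R}^{p_k}$ extracted by contracting $\bl{T}$ against $\bl{v}_{i_1}^{(1)} \otimes \cdots \otimes \bl{v}_{i_{k-1}}^{(k-1)}$ in the first $k-1$ modes (using biorthogonality). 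The decisive calculation is that for any $\bl{v}^{(k)} \in \mc{C}_k$,
\begin{align*}
\langle \bl{u}_{i_1, \ldots, i_{k-1}}, \bl{v}^{(k)}\rangle \;=\; \langle \bl{T}, \bl{v}_{i_1}^{(1)} \otimes \cdots \otimes \bl{v}_{i_{k-1}}^{(k-1)} \otimes \bl{v}^{(k)}\rangle \;\geq\; 0,
\end{align*}
since the test tensor lies in $\otimes_j \mc{C}_j$ and $\bl{T}$ is in its dual. Hence $\bl{u}_{i_1, \ldots, i_{k-1}} \in \mc{C}_k^*$, exhibiting $\bl{T}$ as a conic sum of rank-one tensors with all factors in the respective dual cones, i.e., $\bl{T} \in \otimes_j \mc{C}_j^*$.

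For the facet statement, I would invoke the standard bijection between facets of a proper polyhedral cone $K$ and extremal rays of $K^*$, given by $\bl{r} \mapsto \{\bl{T} \in K : \langle \bl{T}, \bl{r}\rangle = 0\}$. Taking $K = \otimes_j \mc{C}_j$, which is proper polyhedral under the stated hypotheses (by the property of the projective tensor product noted before Lemma~\ref{lem:ExtremalRaysTensorProduct}), the cone equality just proved gives $K^* = \otimes_j \mc{C}_j^*$. Applying Lemma~\ref{lem:ExtremalRaysTensorProduct} to $\otimes_j \mc{C}_j^*$ identifies its extremal rays as rank-ones $\otimes_j \bl{h}^{(j)}$ with each $\bl{h}^{(j)}$ extremal in $\mc{C}_j^*$, and substitution into the facet correspondence produces the claimed description. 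The main obstacle throughout is the slice-extraction step for the reverse inclusion, and specifically the fact that the dual basis of a simplicial cone generates its dual cone — this is exactly what fails for general proper polyhedral cones and explains why simpliciality of at least $k-1$ factors cannot be dropped.
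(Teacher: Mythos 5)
The paper does not prove this theorem at all: it is imported verbatim from the cited literature (Barker; Aubrun et al.; De Bruyn, Thm 3.22), so there is no in-paper argument to compare against. Your proposal supplies a complete and correct self-contained proof, and it is essentially the standard argument from those references. The easy inclusion $\otimes_j \mc{C}_j^* \subseteq (\otimes_j \mc{C}_j)^*$ by multiplicativity of the pairing on rank-one tensors is right, and the reverse inclusion is handled correctly: the observation that the dual of a simplicial cone is simplicial with the biorthogonal basis as generators, the unique expansion of $\bl{T}$ over the basis $\bl{w}_{i_1}^{(1)} \otimes \cdots \otimes \bl{w}_{i_{k-1}}^{(k-1)}$ with slices in the last mode, and the extraction identity $\langle \bl{u}_{i_1,\ldots,i_{k-1}}, \bl{v}^{(k)}\rangle = \langle \bl{T}, \bl{v}_{i_1}^{(1)} \otimes \cdots \otimes \bl{v}_{i_{k-1}}^{(k-1)} \otimes \bl{v}^{(k)}\rangle \geq 0$ together show each slice lies in $\mc{C}_k^*$, which is exactly where the simpliciality of $k-1$ factors is used. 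The facet claim then follows, as you say, from the facet/extremal-ray polarity for proper polyhedral cones combined with Lemma~\ref{lem:ExtremalRaysTensorProduct} applied to $\otimes_j \mc{C}_j^*$. The only point worth making explicit is that this last step needs $\otimes_j \mc{C}_j$ to be pointed and full-dimensional, which requires the single non-simplicial factor to be a proper cone; the theorem's hypotheses only say ``polyhedral,'' so you are relying on the same implicit properness assumption the paper makes (harmless here, since all order cones in the paper are proper, but it should be stated).
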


Using Lemma \ref{lem:ExtremalRaysTensorProduct}, a $\mc{V}$-representation of $\mc{N}_{< \infty}$ can be obtained from the extremal rays of each $\mc{C}(\mc{P}_j)$. The next result \citep[Thm 7]{OrderConeFacesgarcia2020order} describes the extremal rays of any order cone in terms of upsets. An \textbf{upset} $\mc{U} \subseteq \mc{P}$ is a subset with the property that if $x \in \mc{U}$ and $x \preceq y$ then $y \in \mc{U}$. An upset is connected if the subgraph of the Hasse diagram corresponding to the upset is connected.

\begin{lemma}[\cite{OrderConeFacesgarcia2020order}]
\label{lem:OrderConeExtremalRays}
    The extremal rays of $\mc{C}(\mc{P})$ consist vectors of the form $\sum_{i \in I} \bl{e}_i$, where $I$ is a non-empty, connected upset of $\mc{P}$. 
\end{lemma}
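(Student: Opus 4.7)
The plan is to establish both directions of this characterization: first, that any vector $\bl{f}_I \coloneq \sum_{i \in I} \bl{e}_i$ with $I$ a non-empty connected upset generates an extremal ray of $\mc{C}(\mc{P})$, and conversely, that every extremal ray has this form. The backward direction will be handled via a level-set decomposition that expresses an arbitrary element of $\mc{C}(\mc{P})$ as a positive combination of such indicator vectors, after which extremality forces the combination to be trivial.

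For the forward direction, membership of $\bl{f}_I$ in $\mc{C}(\mc{P})$ is immediate from the upset property. For extremality, suppose $\bl{f}_I = \bl{g} + \bl{h}$ with $\bl{g}, \bl{h} \in \mc{C}(\mc{P})$. For indices $i \notin I$, nonnegativity forces $g_i = h_i = 0$. For any covering relation $x \lessdot y$ with $x, y \in I$, monotonicity of $\bl{g}$ and $\bl{h}$ combined with $g_x + h_x = 1 = g_y + h_y$ forces $g_x = g_y$ and $h_x = h_y$. Since $I$ is connected in the Hasse diagram, propagating along edges shows $\bl{g}$ is constant on $I$, so $\bl{g} = \alpha \bl{f}_I$ for some $\alpha \in [0,1]$, establishing extremality.

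For the converse, let $\bl{v}$ be extremal and enumerate its distinct positive values as $0 < v_{(1)} < \cdots < v_{(m)}$. Setting $I_k \coloneq \{i \in \mc{P} : v_i \geq v_{(k)}\}$, each $I_k$ is an upset by monotonicity of $\bl{v}$. A key sub-claim is that any connected component $I_k^{(s)}$ of $I_k$ in the Hasse diagram is itself an upset: for $x \in I_k^{(s)}$ and $x \preceq y$, any saturated chain from $x$ to $y$ consists of covering relations in $\mc{P}$, and all intermediate elements $z$ satisfy $v_z \geq v_x \geq v_{(k)}$ by monotonicity, so the chain lies in $I_k$ and links $x$ to $y$ in the Hasse subgraph of $I_k$, placing $y \in I_k^{(s)}$. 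With $v_{(0)} \coloneq 0$, one obtains the decomposition
\begin{align*}
    \bl{v} = \sum_{k=1}^m (v_{(k)} - v_{(k-1)}) \sum_{s=1}^{s_k} \bl{f}_{I_k^{(s)}},
\end{align*}
a strictly positive combination of indicator vectors of connected upsets, each lying in $\mc{C}(\mc{P})$. If more than one term appears in the double sum, then some summand $\bl{g} \coloneq c \bl{f}_{I_k^{(s)}}$ has support strictly contained in $\mathrm{supp}(\bl{v}) = I_1$; this yields $\bl{v} = \bl{g} + (\bl{v} - \bl{g})$ with both summands in $\mc{C}(\mc{P}) \setminus \{\bl{0}\}$ and $\bl{g}$ not proportional to $\bl{v}$, contradicting extremality. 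Hence the sum reduces to a single term and $\bl{v}$ is a positive multiple of $\bl{f}_I$ for some non-empty connected upset $I$.

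The main obstacle is the sub-claim that connected components of an upset are themselves upsets, which requires carefully invoking a saturated chain to witness $x \preceq y$ and checking that the chain stays within the upset; without this the level-set decomposition would produce indicators of disconnected sets that are not in $\mc{C}(\mc{P})$, and the whole approach would collapse. The remaining pieces are a routine pairing of the level-set decomposition with the definition of an extremal ray.
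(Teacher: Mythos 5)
Your proof is correct. Note that the paper does not actually prove this lemma: it is imported verbatim as Theorem 7 of the cited reference on order cones, so there is no in-text argument to compare against. Your self-contained argument is the standard one and it holds up. The forward direction (propagating $g_x=g_y$ across covering edges of the induced Hasse subgraph, using that the two one-sided inequalities $g_x\leq g_y$ and $h_x\leq h_y$ must both be equalities when they sum to $1-1=0$) is complete, and the converse via the layer-cake decomposition $\bl{v}=\sum_k (v_{(k)}-v_{(k-1)})\bl{1}_{I_k}$ followed by splitting each level set into connected components is exactly the right mechanism. You correctly identified and discharged the one genuinely delicate point, namely that a connected component of an upset is again an upset --- the saturated-chain argument (every intermediate element $z$ of a chain from $x$ up to $y$ satisfies $v_z\geq v_x\geq v_{(k)}$, so the whole chain stays in $I_k$ and witnesses connectivity) is what makes each $\bl{f}_{I_k^{(s)}}$ land in $\mc{C}(\mc{P})$. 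The final step is also sound: if the double sum has more than one term then either $m\geq 2$, in which case $I_m\subsetneq I_1$, or $m=1$ with several components, and in either case some summand has support strictly inside $\mathrm{supp}(\bl{v})$ and hence cannot be proportional to $\bl{v}$, contradicting that the ray through $\bl{v}$ is a face. The only cosmetic omission is that extremality of the ray formally requires handling decompositions of $\alpha\bl{f}_I$ for arbitrary $\alpha>0$, not just $\alpha=1$, but this follows by rescaling and does not affect correctness.
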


Order cones that are simplicial will be of special interest due to Theorem \ref{thm:InjectiveEqualsProjective} and the linear equivalence of simplicial cones and the nonnegative orthant. To describe orders that have simplicial order cones we first introduce some terminology borrowed from the literature on directed graphical models \citep{PearlCausality}.

\begin{definition}
\label{def:Collider}
    A collider in the Hasse diagram of a poset $\mc{P}$ refers to a subgraph of the form
    \begin{center}
        \begin{tikzpicture}[->,>=stealth,shorten >=1pt,auto,node distance=1.5cm,thick,main node/.style={circle, draw, minimum size=.8cm, font=\sffamily\small\bfseries}]
  \node[circle,minimum size= 4pt, main node] (3) {c};
  \node[circle,minimum size= 4pt, main node] (1) [below left of=3] {a};
  \node[circle,minimum size= 4pt, main node] (2) [below right of=3] {b};
  \path[every node/.style={font=\sffamily\small},line width=.65mm]
    (1) edge [left] node[left] {} (3)
    (2) edge [right] node[right] {} (3);
\end{tikzpicture}
    \end{center}

Equivalently, a collider is a set of three elements $a,b,c \in \mc{P}$ where $a \lessdot c$ and $b \lessdot c$.
\end{definition}

\begin{theorem}
\label{thm:SimplicialConeCondition}
            The cone $\mc{C}(\mc{P})$ is simplicial if and only if the Hasse diagram does not contain any colliders. A Hasse diagram has no colliders if and only if as an undirected graph it is a disjoint union of trees where each tree has edges directed away from one its nodes. 
\end{theorem}
\begin{proof}
    If $\mc{P}$ has $p$ elements, by Lemma \ref{lem:OrderConeExtremalRays}, the cone $\mc{C}(\mc{P})$ is simplicial if and only if the number of connected upsets of $\mc{P}$ is $p$.  For every $x \in \mc{P}$, the intervals $[x,\infty) =\{y: x \preceq y\}$ are distinct connected upsets. Thus, the number of connected upsets is at least $p$. To show that the order cone is simplicial it suffices to show that the number of connected upsets within each connected component of size $m$ in the Hasse diagram is equal to $m$.

    Assume that the Hasse diagram contains no colliders. Any upset $\mc{U}$ has the form $\mc{U} = \cup_{u \in \mc{U}} [u,\infty)$. If $\mc{U}$ contains two or more minimal elements $u_1,u_2$ the upset $\mc{U}$ will not be connected since the only undirected path in $\mc{U}$ from $u_1$ to $u_2$ must be directed away from both $u_1$ and $u_2$ since these are minimal elements in $\mc{U}$. Any such path must include a collider, so no such path exists. Every connected upset $\mc{U}$ must therefore have a minimum element $u$, where $\mc{U} = [u,\infty)$. The number of connected upsets in a connected component of size $m$ in this Hasse diagram is equal to $m$. 
    
 Conversely, assume that $\mc{P}$ has a connected component of the Hasse diagram of size $m$ that has a collider. There exists three elements $a,b,c$ with $a,b$ incomparable and $a \prec c$, $b \prec c$. The upset $a \cup b \cup [c,\infty)$ is connected and does not equal $[x,\infty)$ for any $x \in \mc{P}$. This shows that this connected component of $\mc{P}$ has at least $m+1$ connected upsets. 

 The last statement that the absence of colliders implies the given tree structure is proven by noting that if the undirected graph associated with the Hasse diagram contains a cycle, the only way to orient the edges in a cycle to avoid the appearance of a collider is to create a directed cycle. This contradicts the fact that Hasse diagrams are directed acyclic graphs. If a tree does not contain a minimum element that all edges are directed away from there exists at least two distinct minimal elements $a,b$ in the tree. As the tree is connected there exists a path between $a$ and $b$ with arrows directed away from both $a$ and $b$. There is no way to orient the edges of such a path without obtaining a collider. The converse direction that unions of trees with the given structure contain no colliders is apparent. 
\end{proof}


Figure \ref{fig:TreewithNoColliders} illustrates the tree structure described in Theorem \ref{thm:SimplicialConeCondition}, while Figure \ref{fig:OrderConePlot} (a) and (b) respectively illustrate order cones that are and are not simplicial. In the order constrained statistical inference literature a few orderings that commonly occur are chains, the tree order where $x_1 \prec x_2,\ldots,x_p$, and the umbrella order where $x_1 \prec \cdots \prec x_{l-1} \prec x_l \succ x_{l+1} \cdots \succ x_p$ \citep[Sec 2.3]{ConstrainedStatInferenceSilvapuleSen}. The umbrella order represents a kind of unimodality where the mode is known to occur at $x_l$.  Only the umbrella order out of these three orders is not simplicial.

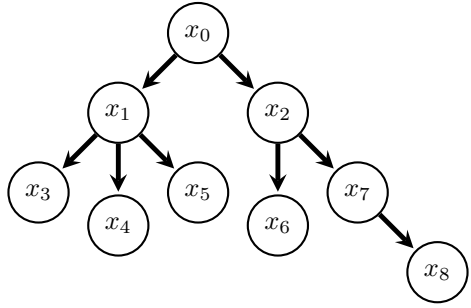
\begin{figure}[H]
    \centering
           \begin{tikzpicture}[->,>=stealth,shorten >=1pt,auto,node distance=1.5cm,thick,main node/.style={circle, draw, minimum size=.8cm, font=\sffamily\small\bfseries}]
  \node[circle,minimum size= 4pt, main node] (0) {$x_0$};
  \node[circle,minimum size= 4pt, main node] (1) [below left of=0] {$x_1$};
  \node[circle,minimum size= 4pt, main node] (2) [below right of=0] {$x_2$};
    \node[circle,minimum size= 4pt, main node] (3) [below left of=1] {$x_3$};
        \node[circle,minimum size= 4pt, main node] (4) [below of=1] {$x_4$};
            \node[circle,minimum size= 4pt, main node] (5) [below right of=1] {$x_5$};
    \node[circle,minimum size= 4pt, main node] (6) [below of=2] {$x_6$};
\node[circle,minimum size= 4pt, main node] (7) [below right of=2] {$x_7$};
\node[circle,minimum size= 4pt, main node] (8) [below right of=7] {$x_8$};
  \path[every node/.style={font=\sffamily\small},line width=.65mm]
    (0) edge [left] node[left] {} (1)
       (1) edge [left] node[left] {} (3)
          (1) edge [left] node[left] {} (4)
             (1) edge [left] node[left] {} (5)
    (2) edge [left] node[left] {} (6)
    (2) edge [left] node[left] {} (7)
        (7) edge [left] node[left] {} (8)
    (0) edge [right] node[right] {} (2);
\end{tikzpicture}
    \caption{A Hasse diagram with a tree shape that has no colliders. }
    \label{fig:TreewithNoColliders}
\end{figure}

\begin{figure}[h!]
    \centering
    \begin{subfigure}[t]{0.5\textwidth}
        \centering
        \includegraphics[width = .8\textwidth]{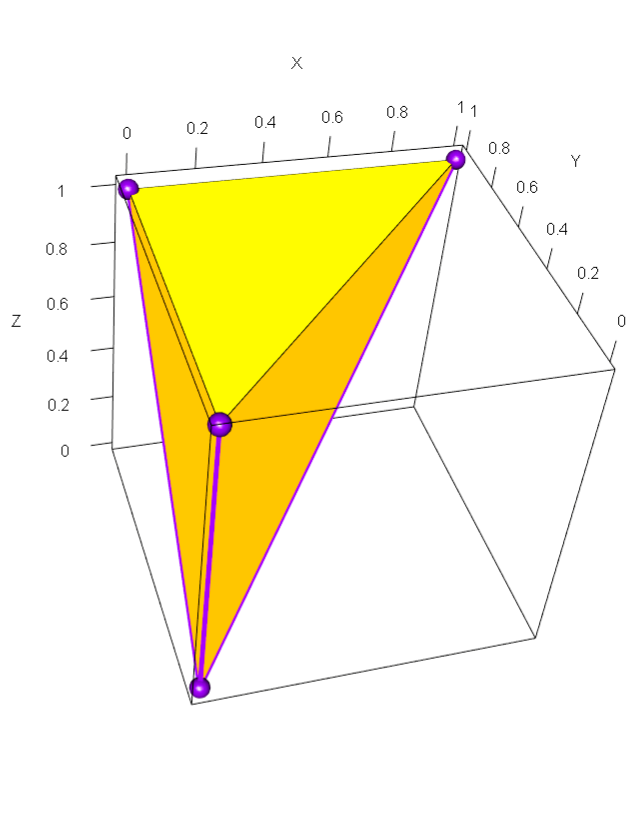}
        \caption{Order polytope of $Z \prec Y \prec X$.}
        \label{fig:OrderConePlotChain}
    \end{subfigure}%
    ~ 
    \begin{subfigure}[t]{0.5\textwidth}
        \centering
        \includegraphics[width = .8\textwidth]{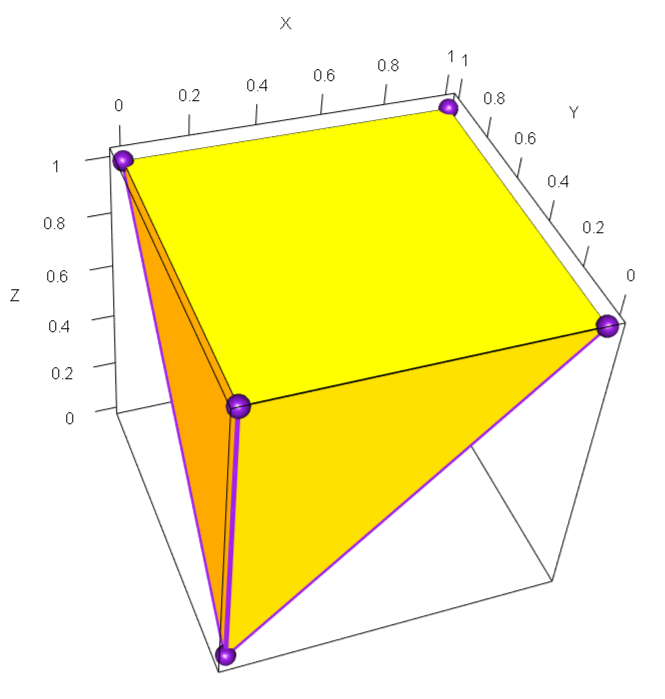}
        \caption{Order polytope of $Z,Y \prec X$.}
        \label{fig:OrderConePlotCollider}
    \end{subfigure}
    \caption{Order polytopes for two different orders. The order cone is found by extending the rays connecting the origin to the top face.}
        \label{fig:OrderConePlot}
\end{figure}

We now pair Lemma \ref{lem:OrderConeExtremalRays} with Lemma \ref{lem:ExtremalRaysTensorProduct} to find a $\mc{V}$-representation of $\mc{N}_{< \infty}$. To illustrate, set $\mc{P}_1 = \{1,2\}$ and $\mc{P}_2 = \{1,2,3\}$, both with the standard ordering. The extremal rays of $\mc{C}(\mc{P}_1 \times \mc{P}_2)$ are all possible ``staircase'' matrices, corresponding to upsets of $\mc{P}_1 \times \mc{P}_2$, that have a staircase of ones in the lower-right corner of the matrix: 
\begin{align}
\label{eqn:ExtremalRaysBox}
    \begin{bmatrix}
        1 & 1 & 1
        \\
        1 & 1 & 1 
    \end{bmatrix},    
        \begin{bmatrix}
        0 & 1 & 1
        \\
        0 & 1 & 1 
    \end{bmatrix},     \begin{bmatrix}
        0 & 0 & 0
        \\
        1 & 1 & 1 
    \end{bmatrix},
    \begin{bmatrix}
        0 & 0 & 0
        \\
        0 & 1 & 1 
    \end{bmatrix},     \begin{bmatrix}
        0 & 0 & 1
        \\
        0 & 0 & 1 
    \end{bmatrix},    \begin{bmatrix}
        0 & 0 & 0
        \\
        0 & 0 & 1 
    \end{bmatrix},
    \\
    \label{eqn:ExtremalRaysPureStaircase}
        \begin{bmatrix}
        0 & 1 & 1
        \\
        1 & 1 & 1 
    \end{bmatrix},
     \begin{bmatrix}
        0 & 0 & 1
        \\
        1 & 1 & 1 
    \end{bmatrix},
    \begin{bmatrix}
        0 & 0 & 1
        \\
        0 & 1 & 1 
    \end{bmatrix}.
\end{align} 
Any connected upset of the poset $\{1,\ldots,m\}$ with the standard ordering has the form $\{a,a+1,\ldots,m\}$ for some $a \in [m]$. Consequently, the extremal rays of $\mc{C}(\mc{P}_1) \otimes \mc{C}(\mc{P}_2)$ have the form $\big(\sum_{i = a}^{2} \bl{e}_i\big)\big(\sum_{j = b}^3 \bl{e}_i)^\intercal$ for $a \in [2]$, $b \in [3]$. These extremal rays are enumerated in the first row of \eqref{eqn:ExtremalRaysBox} and consist of exactly the ``box'' matrices that have a rectangle of ones in the lower-right corner.  The three matrices in \eqref{eqn:ExtremalRaysPureStaircase} are monotone with respect to the partial order $\mc{P}_1 \times \mc{P}_2$ but do not possess ND factorizations. In general, the extremal rays of $\otimes_{j = 1}^k \mc{C}(\mc{P}_j)$ for arbitrary posets $\mc{P}_j$ have the form $\otimes_{j = 1}^k (\sum_{i_j \in I_j} \bl{e}_{i_j})$, where $I_j$ is a non-empty, connected upset of $\mc{P}_j$.

\begin{theorem}
\label{thm:EqualityofMonotoneProjective}
    When $\mc{P} = \times_{j = 1}^k \mc{P}_j$ the cone $\mc{C}(\mc{P})$ is equal to $\mc{N}_{< \infty}$ if and only if every poset $\mc{P}_j$ except for one is trivial. A trivial poset has no arrows in its Hasse diagram. If every $\mc{P}_j = [p_j] = \{1,\ldots,p_j\}$ is a chain, the cone $\mc{N}_{< \infty}$ is generated by $\prod_{j = 1}^k p_j$ extremal rays, while $\mc{C}(\mc{P})$ has the same number of extremal rays as there are non-empty subsets $\mc{A} \subset \mc{P}$, where no two elements of $\mc{A}$ are comparable. In particular, when $k = 2$ the number of extremal rays of $\mc{C}(\mc{P})$ is ${p_1 + p_2 \choose p_1} - 1$. 
\end{theorem}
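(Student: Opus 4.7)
The plan is to treat the three assertions in the theorem separately and in order: the equivalence $\mc{C}(\mc{P}) = \mc{N}_{<\infty}$ if and only if all but one $\mc{P}_j$ is trivial, the two extremal-ray counts under the chain assumption, and the binomial identity in the $k = 2$ case. For the easy direction of the equivalence, if all $\mc{P}_j$ with $j \neq 1$ are trivial then $\mc{C}(\mc{P}_j) = \mb{R}^{p_j}_+$ and a tensor $\bl{T} \in \mc{C}(\mc{P})$ is monotone only in its first coordinate, so the fibrewise expansion
\[
\bl{T} = \sum_{(x_2,\ldots,x_k)} \bl{T}_{\bullet, x_2,\ldots,x_k} \otimes \bl{e}_{x_2} \otimes \cdots \otimes \bl{e}_{x_k}
\]
is a finite ND factorization, every mode-one fibre $\bl{T}_{\bullet, x_2,\ldots,x_k}$ lying in $\mc{C}(\mc{P}_1)$ by the definition of the product order. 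For the converse, suppose $\mc{P}_1$ and $\mc{P}_2$ are both nontrivial; pick strictly comparable pairs $a \prec b$ in $\mc{P}_1$ and $c \prec d$ in $\mc{P}_2$ and arbitrary $z_j \in \mc{P}_j$ for $j \geq 3$, and consider
\[
f(x_1,\ldots,x_k) = \mathbf{1}[x_1 \succeq b \text{ or } x_2 \succeq d] \cdot \prod_{j=3}^k \mathbf{1}[x_j \succeq z_j],
\]
which is nonnegative and nondecreasing on $\mc{P}$. If $f$ admitted an ND factorization, restricting to the slab $\{a,b\} \times \{c,d\} \times \{z_3\} \times \cdots \times \{z_k\}$ and absorbing the scalars $\prod_{j \geq 3} v^{(ij)}_{z_j}$ into the first two factors would yield a nondecreasing rank-one decomposition of the $2 \times 2$ staircase $\bigl[\begin{smallmatrix} 0 & 1 \\ 1 & 1 \end{smallmatrix}\bigr]$, a reduction of the non-factorizable matrices in display \eqref{eqn:ExtremalRaysPureStaircase}. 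Non-factorizability of this staircase follows because its $(a,c)$ zero forces every rank-one summand to vanish either at $a$ in the first mode or at $c$ in the second; splitting the summands accordingly and using monotonicity of the surviving factors then gives $T(b,d) \geq T(b,c) + T(a,d) = 2$, contradicting $T(b,d) = 1$.

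When each $\mc{P}_j = [p_j]$ is a chain, the connected upsets of $[p_j]$ are exactly the tails $\{a,a+1,\ldots,p_j\}$ for $a \in [p_j]$, so Lemma \ref{lem:OrderConeExtremalRays} furnishes $p_j$ extremal rays of $\mc{C}(\mc{P}_j)$. Lemma \ref{lem:ExtremalRaysTensorProduct} then identifies the extremal rays of $\mc{N}_{<\infty}$ with the rank-one products
\[
\bigotimes_{j=1}^k \Bigl(\sum_{i=a_j}^{p_j} \bl{e}_i\Bigr), \qquad (a_1,\ldots,a_k) \in \prod_{j=1}^k [p_j],
\]
yielding $\prod_j p_j$ distinct rays since distinct tuples produce linearly independent rank-one directions. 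For $\mc{C}(\mc{P})$ itself, the key observation is that in a product of chains every non-empty upset is automatically connected in the Hasse diagram: given $u, v$ in an upset $\mc{U}$, their coordinatewise maximum $w$ lies in $\mc{U}$ by the upset property, and one can walk from $u$ (and from $v$) to $w$ by unit coordinate increments, every such step being a covering edge that remains in $\mc{U}$. Lemma \ref{lem:OrderConeExtremalRays} then identifies the extremal rays of $\mc{C}(\mc{P})$ with \emph{all} non-empty upsets of $\mc{P}$, and the classical bijection $\mc{U} \mapsto \min(\mc{U})$ matches these with the non-empty antichains of $\mc{P}$.

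For the $k = 2$ formula, upsets of $[p_1] \times [p_2]$ are in bijection with monotone lattice paths from $(0,0)$ to $(p_1, p_2)$ via the staircase boundary between $\mc{U}$ and its complement, giving $\binom{p_1+p_2}{p_1}$ upsets; removing the empty one leaves $\binom{p_1+p_2}{p_1} - 1$ non-empty antichains. The main obstacle in the whole argument is the non-factorizability half of the equivalence, specifically confirming that the $2 \times 2$ staircase admits no ND factorization; the remaining parts reduce to direct applications of Lemmas \ref{lem:OrderConeExtremalRays} and \ref{lem:ExtremalRaysTensorProduct} together with standard combinatorial bijections.
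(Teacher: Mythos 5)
Your proposal is correct, and for the central equivalence it takes a genuinely different route from the paper. The paper works entirely at the level of extremal-ray counting: it observes that every extremal ray of $\mc{N}_{<\infty}$ (a tensor product of connected-upset indicators, via Lemmas \ref{lem:ExtremalRaysTensorProduct} and \ref{lem:OrderConeExtremalRays}) is also an extremal ray of $\mc{C}(\mc{P})$, so the two polyhedral cones coincide exactly when their extremal-ray counts agree; the forward direction then follows from the identity $q_1\prod_{j\geq 2}p_j = \prod_j q_j$ for a disjoint union of copies of $\mc{P}_1$, and the converse from exhibiting a connected upset of $\mc{P}$ that is not a Cartesian product of upsets. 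You instead argue both directions directly on tensors: the fibrewise expansion $\bl{T} = \sum_{(x_2,\ldots,x_k)} \bl{T}_{\bullet,x_2,\ldots,x_k}\otimes \bl{e}_{x_2}\otimes\cdots\otimes\bl{e}_{x_k}$ gives an explicit finite ND factorization when all but one factor is trivial, and for the converse you show the indicator of (essentially the same) non-product upset is not finitely factorizable by restricting to a $2\times 2$ slab and running the classical staircase obstruction $T(b,d)\geq T(b,c)+T(a,d)$. Your version is more self-contained --- it does not need the facial-structure machinery of Lemma \ref{lem:ExtremalRaysTensorProduct} for the equivalence, only for the ray counts --- and it produces a concrete certificate of non-membership rather than a cardinality mismatch; the paper's version is shorter given the lemmas already in hand and sets up the counting formulas that follow. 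The remaining parts (connectedness of all upsets in a product of chains via the coordinatewise maximum, the antichain bijection $\mc{U}\mapsto\min(\mc{U})$, and the lattice-path count $\binom{p_1+p_2}{p_1}-1$) match the paper's argument essentially step for step.
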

\begin{proof}
Let $\mc{P}_j$ have size $p_j$ and have $q_j$ connected upsets. The extremal rays of $\mc{N}_{< \infty}$ are all extremal rays of $\mc{C}(\mc{P})$. To see this, observe that every extremal ray of $\mc{N}_{ < \infty}$ has the form $\otimes_{j = 1}^k \bl{v}^{(j)}$ where $\bl{v}^{(j)}$ is extremal in $\mc{C}(\mc{P}_j)$ by Lemma \ref{lem:ExtremalRaysTensorProduct}. By Lemma \ref{lem:OrderConeExtremalRays} each $\bl{v}^{(j)}$ is associated with a connected upset $\mc{U}_j \subset \mc{P}_j$. Any product $\times_{j = 1}^k \mc{U}_j$ of connected upsets is a connected upset in $\mc{P}$. As $\otimes_{j = 1}^k \bl{v}^{(j)}$ is associated with the connected upset $\times_{j = 1}^k \mc{U}_j$, another invocation of Lemma \ref{lem:OrderConeExtremalRays} shows that $\otimes_{j = 1}^k \bl{v}^{(j)}$ is extremal in $\mc{C}(\mc{P})$. 

As a consequence, $\mc{N}_{< \infty} = \mc{C}(\mc{P})$ if and only if these two cones have the same number of extremal rays. Assume that $\mc{P}_1$ is the only non-trivial poset. Then $\times_{j = 1}^k \mc{P}_j$ is a disjoint union of $\prod_{j = 2}^k p_j$ copies of the poset $\mc{P}_1$. The number of connected upsets of $\mc{P}$ is $q_1\prod_{j = 2}^k p_j$ and is equal to the number of extremal rays of $\mc{C}(\mc{P})$. As $p_j = q_j$ for $j > 1$, $q_1\prod_{j = 2}^k p_j = \prod_{i = 1}^k q_j$ is equal to the product of the number of connected upsets of each $\mc{P}_j$, namely the number of extremal rays of $\mc{N}_{< \infty}$. 

To show the other direction, assume that the inequalities $a_j \prec b_j$ are present in $\mc{P}_j$ for $j = 1,2$. 
Letting $x_j$ be a maximal element of $\mc{P}_j$, $j = 3,\ldots,k$, the set
\begin{align*}
    \{(y_1,y_2,x_3,\ldots,x_k): a_1 \preceq y_1, b_2 \preceq y_2 \} \cup     \{(y_1,y_2,x_3,\ldots,x_k): b_1 \preceq y_1, a_2 \preceq y_2\}
\end{align*}
is a connected upset that contains elements of the form $(a_1,b_2,x_3,\ldots,x_k)$ and $(b_1,a_2,x_3,\ldots,x_k)$, but does not contain $(a_1,a_2,x_3,\ldots,x_k)$, implying that it cannot equal a Cartesian product upset $\times_{j = 1}^k \mc{U}_j$. The number of connected upsets of $\mc{P}$ is at least equal to $\prod_{j = 1}^k q_j + 1$ --- the number of connected, product upsets plus the non-product upset described above. Therefore $\mc{C}(\mc{P})$ has more extremal rays than $\prod_{j = 1}^k q_j$, the number of extremal rays of $\mc{N}_{< \infty}$. 

To prove the last result we count the number of connected upsets of $\times_{j = 1}^k[p_j]$. This is equal to the number of non-empty antichains --- sets where no two elements in a set are comparable. There is a bijection between antichains $\mc{A} \subset \mc{P}$ and connected upsets $\cup_{a \in \mc{A}}[a,\infty)$. If $\mc{A} = \{a_1,\ldots,a_m\}$ are the minimal elements of $\mc{U}$ then $\mc{A}$ is an antichain with $\mc{U} = \cup_{a \in \mc{A}} [a,\infty)$ and conversely any antichain $\mc{A}$ produces the upset $\cup_{a \in \mc{A}} [a,\infty)$ with minimal elements $\mc{A}$. This upset, and in fact any upset, is connected when the poset contains a maximum element, as is the case when $\mc{P} = \times_{j = 1}^k [p_j]$ is a product of chains.  When $k = 2$ the number of antichains is equal to the number of staircase matrices; if $(x_1,y_1),\ldots,(x_l,y_l)$ is an antichain then this determines a unique staircase matrix $\bl{M}$, where $M_{ab} = 1$ if and only if there exists an $i = 1,\ldots,l$ with $(a,b) \succeq (x_i,y_i)$.  A staircase matrix is also determined by a path with non-decreasing coordinates in the lattice $\mb{N} \times \mb{N}$ from $(0,0)$ to $(p_1,p_2)$. Such a path must have length $p_1 + p_2$, and out of the $p_1+p_2$ steps along the path, $p_1$ of these steps must be upward in the direction $(1,0)$. The number of such paths is ${p_1 + p_2 \choose p_1}$. As the path $(0,0) - (1,0) - \cdots - (p_1,0) - (p_1,1) - \cdots - (p_1,p_2)$ corresponds to the $\bl{0}$ staircase matrix, or the empty antichain, we must remove this path from consideration. 
\end{proof}

An implication of Theorem \ref{thm:EqualityofMonotoneProjective} is that if a random tensor $\bl{T}$ is drawn from a measure that is absolutely continuous with respect to the Lebesgue measure on $\mc{C}(\mc{P})$ then $\text{Pr}(\bl{T} \in \mc{N}_{< \infty}) > 0$ and $\text{Pr}(\bl{T} \in \mc{C}(\mc{P}) \backslash \mc{N}_{< \infty}) > 0$.  The probability that a $\bl{T}$ drawn from the uniform distribution on the order polytope $\mc{O}(\mc{P})$ lies in $\mc{N}_{< \infty} \cap \mc{O}(\mc{P})$ when $\mc{P} = [m] \times [m]$ is found to equal $50\%, 2.38\%$, and $0.004\%$ when $m = 2,3,4$ by a volume computation. As the dimension of the matrix grows, the number of extremal rays of $\mc{C}(\mc{P})$ becomes significantly larger than that of $\mc{N}_{< \infty}$, and the probability of observing a random monotone tensor with finite ND rank is small. This suggests that when a data matrix $\bl{T} = \bl{X} + \bs{\epsilon}$ is observed by taking a finite ND rank matrix $\bl{X}$ and perturbing it by noise $\bs{\epsilon}$, there is only a small probability that the observed $\bl{T}$ has finite ND rank. From a parameter estimation perspective this does not pose a serious issue as the observed $\bl{T}$ can be projected onto $\mc{N}_{< \infty}$ or $\mc{N}_{\leq r}$. We consider such low-rank approximations in Section \ref{sec:Optimization}.  

Given a tensor $\bl{T}$, it must first be ascertained that $\bl{T} \in \mc{N}_{< \infty}$ if $\bl{T}$ is to have an exact ND factorization. Equations for determining if $\bl{T} \in \mc{N}_{< \infty}$ can be found algorithmically by converting the $\mc{V}$-representation of  $\mc{N}_{< \infty}$ in Lemma \ref{lem:ExtremalRaysTensorProduct} into a $\mc{H}$-representation. In the next section we provide simple equations for testing membership when every $\mc{P}_i$ except for one has a tree structure. Below, the $24$ equations of $\mc{N}_{< \infty} \subset \mb{R}_+^{3 \times 3}$ are computed in Polymake \citep{gawrilow2000Polymake} for the case where $\mc{P}_i = \{a^{(i)},b^{(i)},c^{(i)}\}$, $i = 1,2$ is given by the collider structure specified in Definition \ref{def:Collider}, and the third row and column respectively correspond to the maximum elements $c^{(1)},c^{(2)}$.

\begin{align*}
-t_{11} \leq 0,\; 
-t_{12} \leq 0,\;
-t_{21} \leq 0,\;
-t_{22} \leq 0,\\
t_{21} - t_{23} \leq 0,\;
t_{12} - t_{32} \leq 0,\;
t_{22} - t_{23} \leq 0,\;
t_{11} - t_{31} \leq 0,\\
t_{12} - t_{13} \leq 0,\;
t_{22} - t_{32} \leq 0,\;
t_{21} - t_{31} \leq 0,\;
t_{11} - t_{13} \leq 0,\\
    -t_{11} + t_{13} + t_{31} - t_{33} \leq 0, \;
-t_{22} + t_{23} + t_{32} - t_{33} \leq 0,\\
-t_{21} + t_{23} + t_{31} - t_{33} \leq 0, \;
-x_2 + t_{13} + t_{32} - t_{33} \leq 0,\\
t_{11} + t_{12} - t_{13} + t_{21} - t_{22} - t_{31} \leq 0,\;
t_{11} + t_{12} - t_{13} - t_{21} + t_{22} - t_{32} \leq 0,\\
t_{11} - t_{12} + t_{21} + t_{22} - t_{23} - t_{31} \leq 0,\;
-t_{11} + t_{12} + t_{21} + t_{22} - t_{23} - t_{32} \leq 0,\\
-t_{11} - t_{12} + t_{13} - t_{21} + t_{22} + t_{31} - t_{33} \leq 0, 
-t_{11} - t_{12} + t_{13} + t_{21} - t_{22} + t_{32} - t_{33} \leq 0,\\
-t_{11} + t_{12} - t_{21} - t_{22} + t_{23} + t_{31} - t_{33} \leq 0,\;
t_{11} - t_{12} - t_{21} - t_{22} + t_{23} + t_{32} - t_{33} \leq 0.
\end{align*}
The first three rows of equations correspond to simple positivity and monotonicity constraints. The next two rows of equations have four non-zero variables and will appear in the next section as the type of equations required to cut-out $\mc{N}_{< \infty}$ when the posets are trees. The final four rows of equations are more exotic as they can be written as $\tr(\bl{C}^\intercal\bl{T}) \leq 0$ where the matrix $\bl{C}$ has a rank greater than one.  For example, the last two inequalities have the corresponding $\bl{C}$ matrices
\begin{align*}
    \begin{bmatrix}
        1 & -1 & 0
        \\
        1 & 1 & -1
        \\
        -1 & 0 & 1
    \end{bmatrix}, \;     \begin{bmatrix}
        -1 & 1 & 0
        \\
        1 & 1 & -1
        \\
        0 & -1 & 1
    \end{bmatrix}.
\end{align*}
These matrices do not lie in $\otimes_{j = 1}^2 \mc{C}(\mc{P}_j)^*$, for if they did they would be extremal in $\otimes_{j = 1}^2 \mc{C}(\mc{P}_j)^*$ as they are normal vectors of facets of $\otimes_{j = 1}^2\mc{C}(\mc{P}_j)$, and by Lemma \ref{lem:ExtremalRaysTensorProduct} they would be rank one matrices.

\section{Reduction of the ND rank to Nonnegative Rank}
\label{sec:EquivofNDandNNRanks}
For invertible linear maps $\bl{A}^{(j)}$, we define the tensor product of these linear maps as the linear map $\bl{A}^{(1)} \otimes \cdots \otimes \bl{A}^{(k)}:\mb{R}^{p_1 \times \cdots \times p_k} \rightarrow \mb{R}^{p_1 \times \cdots \times p_k}$ that is given in coordinates by
\begin{align}
\label{eqn:TensorProductofMatrices}
    [(\bl{A}^{(1)} \otimes \cdots \otimes \bl{A}^{(k)})(\bl{T})]_{j_1\ldots j_k} = \sum_{i_1 = 1}^{p_1}\cdots \sum_{i_k = 1}^{p_k} A_{j_1 i_1}^{(1)} \cdots A_{j_ki_k}^{(k)} T_{i_1\ldots i_k}.
\end{align}
On rank-one tensors this linear map is given by the simple formula $(\otimes_{j = 1}^k \bl{A}^{(j)})(\otimes_{j = 1}^k \bl{v}^{(j)}) = \otimes_{j = 1}^k \bl{A}^{(j)}\bl{v}^{(j)}$. The idea in this section is to choose the maps $\bl{A}^{(j)}$ so that the order cone $\mc{C}(\mc{P}_j)$ is mapped bijectively onto the positive orthant $\mb{R}^{p_j}_+$ by $\bl{A}^{(j)}$. As the tensor product map preserves rank-one tensors, the problem of determining the ND rank can be transformed into a problem of determining the nonnegative rank whenever such $\bl{A}^{(j)}$ exist. 


\begin{theorem}
\label{thm:NDNonnegativeFactEquivalence}
    Assume that $\mc{C}(\mc{P}_j)$ is simplicial for every $j = 1,\ldots,k$ and let $\bl{A}^{(j)} \in \mathrm{GL}(p_j)$ be an invertible linear map where $\bl{A}^{(j)}(\mc{C}(\mc{P}_j)) = \mb{R}^{p_j}_+$. If $\bl{T} \in \mc{N}_{< \infty}$ then $\mathrm{NDrank}(\bl{T}) = \mathrm{rank}_+\big(\otimes_{j = 1}^k \bl{A}^{(j)}(\bl{T})\big)$.    
\end{theorem}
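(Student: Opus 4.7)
The proof will proceed by establishing both inequalities $\mathrm{NDrank}(\bl{T}) \leq \mathrm{rank}_+(\otimes_j \bl{A}^{(j)}(\bl{T}))$ and $\mathrm{NDrank}(\bl{T}) \geq \mathrm{rank}_+(\otimes_j \bl{A}^{(j)}(\bl{T}))$ using the key observation that the tensor-product map $\bl{A} \coloneq \otimes_{j = 1}^k \bl{A}^{(j)}$ preserves rank-one structure. Specifically, $\bl{A}$ is invertible with $\bl{A}^{-1} = \otimes_{j = 1}^k (\bl{A}^{(j)})^{-1}$, and on rank-one tensors it acts by $\bl{A}(\otimes_{j = 1}^k \bl{v}^{(j)}) = \otimes_{j = 1}^k \bl{A}^{(j)} \bl{v}^{(j)}$. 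By hypothesis each $\bl{A}^{(j)}$ restricts to a bijection $\mc{C}(\mc{P}_j) \to \mb{R}^{p_j}_+$, and since $\bl{A}^{(j)}$ is linear and invertible its inverse also restricts to a bijection $\mb{R}^{p_j}_+ \to \mc{C}(\mc{P}_j)$.

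For the forward direction, suppose $\bl{T} = \sum_{i = 1}^r \otimes_{j = 1}^k \bl{v}^{(ij)}$ is a nondecreasing factorization of length $r = \mathrm{NDrank}(\bl{T})$, so that $\bl{v}^{(ij)} \in \mc{C}(\mc{P}_j)$ for all $i,j$. Applying the linear map $\bl{A}$ term-by-term gives $\bl{A}(\bl{T}) = \sum_{i = 1}^r \otimes_{j = 1}^k \bl{A}^{(j)} \bl{v}^{(ij)}$, with $\bl{A}^{(j)} \bl{v}^{(ij)} \in \mb{R}^{p_j}_+$ by the bijection property. This is a nonnegative factorization of $\bl{A}(\bl{T})$ of length $r$, yielding $\mathrm{rank}_+(\bl{A}(\bl{T})) \leq r$.

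For the reverse direction, take a nonnegative factorization $\bl{A}(\bl{T}) = \sum_{i = 1}^{r'} \otimes_{j = 1}^k \bl{w}^{(ij)}$ of length $r' = \mathrm{rank}_+(\bl{A}(\bl{T}))$, with $\bl{w}^{(ij)} \in \mb{R}^{p_j}_+$. Applying $\bl{A}^{-1}$ yields
\begin{align*}
\bl{T} = \bl{A}^{-1}\bl{A}(\bl{T}) = \sum_{i = 1}^{r'} \otimes_{j = 1}^k (\bl{A}^{(j)})^{-1} \bl{w}^{(ij)},
\end{align*}
and since $(\bl{A}^{(j)})^{-1}(\mb{R}^{p_j}_+) = \mc{C}(\mc{P}_j)$, each factor lies in the appropriate order cone. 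This is a valid nondecreasing factorization of length $r'$, so $\mathrm{NDrank}(\bl{T}) \leq r'$, completing the equality.

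There is no serious obstacle here once the tensor-product map $\bl{A}$ is introduced; the proof is a direct application of the fact that simplicial cones are linearly isomorphic to $\mb{R}^{p_j}_+$, combined with the multiplicative behavior of the tensor product on rank-one elements. The hypothesis $\bl{T} \in \mc{N}_{<\infty}$ is needed only to ensure the left-hand side is finite and hence that a finite nondecreasing factorization exists to begin with; the reverse direction automatically guarantees finiteness on the right whenever it holds on the left.
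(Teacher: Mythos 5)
Your proof is correct and follows essentially the same route as the paper's: apply the tensor-product map $\otimes_{j=1}^k \bl{A}^{(j)}$ term-by-term to a minimal ND factorization to get a nonnegative factorization of the same length, and apply $\otimes_{j=1}^k (\bl{A}^{(j)})^{-1}$ to a minimal nonnegative factorization to go back. The only difference is that you spell out a few more of the routine details (invertibility of the tensor-product map, the inverse bijection on cones) that the paper leaves implicit.
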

\begin{proof}
    As $\mc{C}(\mc{P}_j)$ is simplicial it is the conical hull of $\bl{v}^{(1)},\ldots,\bl{v}^{(p_j)}$. We can define $\bl{A}^{(j)}$ to be the linear map that sends $\bl{v}^{(i)}$ to $\bl{e}_i$ and thus maps $\mc{C}(\mc{P}_j)$ onto $\mb{R}^{p_j}_+$. If $\bl{T} = \sum_{i = 1}^r \otimes_{j = 1}^k \bl{v}^{(ij)}$ is a rank-$r$ ND factorization of $\bl{T}$ then $\otimes_{j = 1}^k \bl{A}^{(j)}(\bl{T}) = \sum_{i = 1}^r \otimes_{j = 1}^k \bl{A}^{(j)}\bl{v}^{(ij)}$ is a nonnegative rank-$r$ tensor factorization of $\otimes_{j = 1}^k \bl{A}^{(j)}(\bl{T})$. Conversely, if $\otimes_{j = 1}^k \bl{A}^{(j)}(\bl{T}) = \sum_{i = 1}^r \otimes_{j = 1}^k \bl{w}^{(ij)}$ is a rank-$r$ nonnegative factorization of $\otimes_{j = 1}^k \bl{A}^{(j)}(\bl{T})$ then $\sum_{i = 1}^r \otimes_{j = 1}^k (\bl{A}^{(j)})^{-1}\bl{w}^{(ij)}$ is a rank-$r$ ND factorization of $\bl{T}$.  
\end{proof}

By Lemma \ref{thm:SimplicialConeCondition}, $\mc{C}(\mc{P}_j)$ is simplicial when the Hasse diagram of $\mc{P}_j$ is a union of trees. One special case where the above theorem applies is when $\mc{P} = \times_{j = 1}^k [p_j]$ under the standard ordering; the problem of finding an exact ND factorization can be converted into the problem of finding a nonnegative factorization. The matrices $\bl{A}^{(j)}$ required for this conversion are related to M\"obius inversions. The M\"obius tranform $\bl{M}_{\mc{P}}:\mb{R}^{\mc{P}} \rightarrow \mb{R}^{\mc{P}}$ of a poset $\mc{P}$ is defined as the linear map with
\begin{align*}
    \bl{M}_{\mc{P}}(\bl{e}_x) = \sum_{x \preceq y} \bl{e}_y, \;\; \forall \bl{x} \in \mc{P},
\end{align*}
where $\{\bl{e}_x\}_{x \in \mc{P}}$ is the standard basis of $\mb{R}^{\mc{P}}$.

\begin{lemma}
\label{lem:MobiusDirectSumandTensor}
 If $\mc{P}$ has connected components $\mc{S}_1,\ldots,\mc{S}_m$ then $\bl{M}_{\mc{P}} = \oplus_{i = 1}^m \bl{M}_{\mc{S}_i}$. Moreover, if $\mc{P} = \times_{j = 1}^k \mc{P}_j$ then $\bl{M}_{\mc{P}} = \otimes_{j = 1}^k \bl{M}_{\mc{P}_j}$. 
\end{lemma}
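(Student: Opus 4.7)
Both statements are purely computational, reducing to the definition of the Möbius transform together with the definitions of connected components and of the product order. I will prove each claim by evaluating both sides on the standard basis of $\mathbb{R}^{\mathcal{P}}$ and checking agreement.

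For the first claim, the plan is to exploit the observation that comparability respects the connected components of the Hasse diagram: if $x \in \mathcal{S}_i$ and $x \preceq y$ then the chain $x \prec y$ (or equality) witnesses a path in the Hasse diagram between $x$ and $y$, so $y \in \mathcal{S}_i$ too. Consequently, for any $x \in \mathcal{S}_i$,
\begin{align*}
\mathbf{M}_{\mathcal{P}}(\mathbf{e}_x) \;=\; \sum_{x \preceq y,\; y \in \mathcal{P}} \mathbf{e}_y \;=\; \sum_{x \preceq y,\; y \in \mathcal{S}_i} \mathbf{e}_y \;=\; \mathbf{M}_{\mathcal{S}_i}(\mathbf{e}_x).
\end{align*}
Under the canonical decomposition $\mathbb{R}^{\mathcal{P}} = \oplus_{i=1}^m \mathbb{R}^{\mathcal{S}_i}$ this shows that $\mathbf{M}_{\mathcal{P}}$ stabilizes each summand and restricts to $\mathbf{M}_{\mathcal{S}_i}$ there, which is exactly the content of $\mathbf{M}_{\mathcal{P}} = \oplus_{i=1}^m \mathbf{M}_{\mathcal{S}_i}$.

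For the second claim, I will use the canonical identification $\mathbb{R}^{\mathcal{P}} \cong \otimes_{j=1}^k \mathbb{R}^{\mathcal{P}_j}$ under which $\mathbf{e}_{(x_1,\ldots,x_k)} \leftrightarrow \otimes_{j=1}^k \mathbf{e}_{x_j}$. Applying the definition of the Möbius transform and then the definition of the product order,
\begin{align*}
\mathbf{M}_{\mathcal{P}}\!\left(\otimes_{j=1}^k \mathbf{e}_{x_j}\right) \;=\; \sum_{(x_1,\ldots,x_k) \preceq (y_1,\ldots,y_k)} \otimes_{j=1}^k \mathbf{e}_{y_j} \;=\; \sum_{\substack{y_j \in \mathcal{P}_j \\ x_j \preceq y_j,\,\forall j}} \otimes_{j=1}^k \mathbf{e}_{y_j}.
\end{align*}
The constraints on $y_1,\ldots,y_k$ are now independent, so the sum factors as a product of sums and rearranges (by multilinearity of $\otimes$) into $\otimes_{j=1}^k \bigl(\sum_{x_j \preceq y_j} \mathbf{e}_{y_j}\bigr) = \otimes_{j=1}^k \mathbf{M}_{\mathcal{P}_j}(\mathbf{e}_{x_j})$. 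By the definition of the tensor product of linear maps, this last expression is precisely $\bigl(\otimes_{j=1}^k \mathbf{M}_{\mathcal{P}_j}\bigr)\!\left(\otimes_{j=1}^k \mathbf{e}_{x_j}\right)$. Since the two linear maps agree on the spanning set of elementary tensors, they agree everywhere.

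There is no real obstacle here; the only step that requires care is the bookkeeping around the canonical isomorphism $\mathbb{R}^{\mathcal{P}} \cong \otimes_j \mathbb{R}^{\mathcal{P}_j}$ and the multilinear rearrangement that turns a sum over a product index set into a tensor product of separate sums. Both are routine once the basis identification is fixed.
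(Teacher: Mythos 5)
Your proposal is correct and follows essentially the same route as the paper: both parts are verified by evaluating on the standard basis, using that comparable elements lie in the same connected component for the direct-sum claim, and factoring the sum over the product order into a tensor product of sums for the second claim. Your added justification that a relation $x \preceq y$ forces $y$ into the same Hasse-diagram component is a small but welcome elaboration of what the paper states without proof.
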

\begin{proof}
We first prove that $\bl{M}_{\mc{P}} = \oplus_{i = 1}^m \bl{M}_{\mc{S}_i}$, meaning that $\bl{M}_{\mc{P}}$ is a block-diagonal matrix with $m$ diagonal blocks that equal the $\bl{M}_{\mc{S}_i}$s.  If $x \in \mc{S}_i$ then $\bl{M}_{\mc{P}}\bl{e}_x = \sum_{y \in \mc{P}: x \preceq y} \bl{e}_y =  \sum_{y \in \mc{S}_i: x \preceq y} \bl{e}_y = \bl{M}_{\mc{S}_i}\bl{e}_x$, where we have used the fact that the only elements in $\mc{P}$ that are comparable to $x$ are in $\mc{S}_i$. This proves the first statement. The second statement follows from
    \begin{align*}
        \bl{M}_{\mc{P}}\bl{e}_{(x_1,\ldots,x_k)} = \sum_{ (x_1,\ldots,x_k) \preceq (y_1,\ldots,y_k)} \bl{e}_{(x_1,\ldots,x_k)} = \sum_{x_1 \preceq y_1}\cdots \sum_{x_k \preceq y_k} \bl{e}_{(y_1,\ldots,y_k)} = (\otimes_{j = 1}^k \bl{M}_{\mc{P}_j}) (\bl{e}_{(x_1,\ldots,x_k)}), 
    \end{align*}
    where $\bl{e}_{(x_1,\ldots,x_k)} = \otimes_{j = 1}^k \bl{e}_{x_j}$ is a rank-one, standard basis element of $\mb{R}^{\mc{P}} = \otimes_{j = 1}^k \mb{R}^{\mc{P}_j}$. That $\bl{M}_{\mc{P}} = \otimes_{j = 1}^k \bl{M}_{\mc{P}_j}$ is a consequence of the tensor product definition for linear maps provided in \eqref{eqn:TensorProductofMatrices}. 
\end{proof}

\begin{lemma}
\label{lem:SimplicialMobiusMapsOrthant}
        If $\mc{P}$ is simplicial, $\bl{M}_{\mc{P}}^{-1}$ bijectively maps $\mc{C}(\mc{P})$ onto $\mb{R}^p_+$. 
\end{lemma}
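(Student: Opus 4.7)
The plan is to identify the extremal rays of both cones and observe that $\bl{M}_{\mc{P}}$ maps one set bijectively onto the other, so that the induced linear map of simplicial cones is a bijection.

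First I would unpack what ``$\mc{P}$ is simplicial'' means: by Theorem \ref{thm:SimplicialConeCondition} the Hasse diagram of $\mc{P}$ is a disjoint union of trees (no colliders), and the proof of that theorem shows that in this setting every connected upset $\mc{U}$ has a unique minimum element $u$, so $\mc{U} = [u,\infty)$. By Lemma \ref{lem:OrderConeExtremalRays} the extremal rays of $\mc{C}(\mc{P})$ are therefore precisely the indicator vectors $\sum_{x \preceq y} \bl{e}_y = \bl{M}_{\mc{P}}(\bl{e}_x)$, one for each $x \in \mc{P}$.

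Next, I would observe that $\mb{R}^p_+$ is itself simplicial with extremal rays $\bl{e}_x$, $x \in \mc{P}$. By definition, $\bl{M}_{\mc{P}}$ sends each $\bl{e}_x$ to the extremal ray $\bl{M}_{\mc{P}}(\bl{e}_x)$ of $\mc{C}(\mc{P})$, and these $p$ image vectors are linearly independent because $\mc{C}(\mc{P})$ is simplicial of dimension $p$. Thus $\bl{M}_{\mc{P}}$ is a linear isomorphism of $\mb{R}^p$ carrying the generators of $\mb{R}^p_+$ bijectively onto the generators of $\mc{C}(\mc{P})$.

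Finally, since both cones equal the set of nonnegative combinations of their generators, the isomorphism $\bl{M}_{\mc{P}}$ restricts to a bijection $\mb{R}^p_+ \to \mc{C}(\mc{P})$: any $\bl{x} = \sum_x \alpha_x \bl{e}_x \in \mb{R}^p_+$ (so $\alpha_x \geq 0$) maps to $\sum_x \alpha_x \bl{M}_{\mc{P}}(\bl{e}_x) \in \mc{C}(\mc{P})$, and conversely any $\bl{T} \in \mc{C}(\mc{P})$ admits such a nonnegative expansion in its simplicial generators, whose preimage under $\bl{M}_{\mc{P}}$ lies in $\mb{R}^p_+$. Taking inverses gives the claim. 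There is no real obstacle here beyond correctly citing Theorem \ref{thm:SimplicialConeCondition} and Lemma \ref{lem:OrderConeExtremalRays}; the identification $\bl{M}_{\mc{P}}(\bl{e}_x) = \mathbf{1}_{[x,\infty)}$ is the key bridge.
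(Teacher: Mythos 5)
Your proposal is correct and follows essentially the same route as the paper's proof: use Theorem \ref{thm:SimplicialConeCondition} to see that connected upsets are intervals $[x,\infty)$, invoke Lemma \ref{lem:OrderConeExtremalRays} to identify the columns $\bl{M}_{\mc{P}}(\bl{e}_x)$ as exactly the extremal rays of $\mc{C}(\mc{P})$, and conclude that $\bl{M}_{\mc{P}}$ carries $\mb{R}^p_+$ bijectively onto $\mc{C}(\mc{P})$. Your version is, if anything, slightly more careful in restricting attention to \emph{connected} upsets and in spelling out the linear-independence step.
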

\begin{proof}
When $\mc{C}(\mc{P})$ is simplicial every upset $\mc{U}$ of $\mc{P}$ has the form $\mc{U} = [x,\infty)$ by the proof in Theorem \ref{thm:SimplicialConeCondition}. The column corresponding to $\bl{e}_x$ in $\bl{M}_{\mc{P}}$ is equal to $\sum_{y \in [x, \infty)} \bl{e}_y$, which is one of the extremal rays of $\mc{C}(\mc{P})$ by Lemma \ref{lem:OrderConeExtremalRays}. Thus, the columns of $\bl{M}_{\mc{P}}$ are exactly the extremal rays of $\mc{C}(\mc{P})$ and so $\bl{M}_{\mc{P}}$ maps $\mb{R}^p_+$ bijectively onto $\mc{C}(\mc{P}) = \text{cone}([\bl{M}_{\mc{P}}]_{\cdot 1},\ldots,[\bl{M}_{\mc{P}}]_{\cdot p})$.
\end{proof}

 \begin{lemma}
    If $\mc{C}(\mc{P})$ is simplicial, the matrix $\bl{M}_{\mc{P}}^{-1}$ has entries
    \begin{align*}
     [\bl{M}_{\mc{P}}^{-1}]_{xy} =   \begin{cases}
            1, \;\; &x = y
            \\
            -1, \;\; &y \lessdot x
            \\
          0, \;\; &\mathrm{otherwise} 
        \end{cases}.
    \end{align*}
    When $\mc{P} = [p] = \{1,\ldots,p\}$ is a chain the above formula simplifies to a Toeplitz matrix of the form:
     \begin{align}
          \label{eqn:MobiusInversionofChain}
         \bl{M}_{\mc{P}}^{-1} = \begin{bmatrix}
             1 & 0 & 0 & 0 &\cdots & 0 
             \\
             -1 & 1 & 0 & 0 & \cdots & 0
             \\\
             0 & -1 & 1 & 0 & \cdots & 0 
             \\
             \vdots & \ddots & \ddots & \ddots & \ddots & \vdots 
                          \\
             \vdots & \ddots & \ddots & \ddots & \ddots & 0 
             \\
             0 & \cdots & \cdots & 0 & -1 & 1
         \end{bmatrix},
     \end{align}
     where the columns and rows are ordered with respect to the basis $\bl{e}_1,\ldots,\bl{e}_p$.
 \end{lemma}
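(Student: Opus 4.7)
The plan is to verify directly that $\bl{M}_{\mc{P}}\bl{N} = \bl{I}$, where $\bl{N}$ denotes the matrix with the entries claimed in the lemma; for a square matrix a right inverse is automatically a left inverse, so this suffices. From the definition $\bl{M}_{\mc{P}}(\bl{e}_x) = \sum_{x \preceq y} \bl{e}_y$ one reads off $[\bl{M}_{\mc{P}}]_{zx} = \mathbf{1}_{x \preceq z}$, while $[\bl{N}]_{xw}$ is nonzero only when $w = x$ (contributing $1$) or $w \lessdot x$ (contributing $-1$). Assembling the product,
\begin{align*}
    (\bl{M}_{\mc{P}}\bl{N})_{zw} = \sum_{x \in \mc{P}} \mathbf{1}_{x \preceq z}\, [\bl{N}]_{xw} = \mathbf{1}_{w \preceq z} - \big|\{x \in \mc{P} : w \lessdot x,\; x \preceq z\}\big|.
\end{align*}

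Next I would show the right-hand side equals $\mathbf{1}_{z = w}$ by case analysis. If $z = w$, the counting set is empty (since $w \lessdot x$ forces $x \succ w$), giving $1 - 0 = 1$. If $w \not\preceq z$, the first term is zero and any $x$ in the counting set would yield $w \prec x \preceq z$, a contradiction, so the expression is $0$. The decisive case is $w \prec z$, where exactly one $x$ must satisfy $w \lessdot x \preceq z$, making the expression $1 - 1 = 0$. Here the simplicial assumption is essential: by Theorem \ref{thm:SimplicialConeCondition} and the analysis in its proof, the Hasse diagram of $\mc{P}$ is a disjoint union of rooted trees with edges directed away from the root, so the interval $[w, z]$ is totally ordered; its unique second element is then the desired cover of $w$ that lies below $z$.

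The chain case \eqref{eqn:MobiusInversionofChain} then follows by direct specialization: in $[p]$ with the standard order one has $y \lessdot x$ exactly when $x = y+1$, so the general entry formula collapses to the bidiagonal Toeplitz matrix displayed.

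The main (and only mild) obstacle is the uniqueness of $x$ in the case $w \prec z$, which rests entirely on the tree structure guaranteed by the no-collider hypothesis. An alternative route would be to identify $\bl{M}_{\mc{P}}$ with the transpose of the zeta matrix of $\mc{P}$ and invoke classical M\"obius inversion on posets, inductively showing $\mu(y,x) = 0$ whenever $y$ sits strictly more than one cover below $x$ in a tree; however, the direct matrix verification above seems cleaner and I will prefer it.
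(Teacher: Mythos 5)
Your proposal is correct and follows essentially the same route as the paper: both verify directly that $\bl{M}_{\mc{P}}\bl{N} = \bl{I}$, with the key step resting on the no-collider (rooted-tree) structure — the paper phrases it as the disjoint decomposition $[y,\infty) = \{y\} \cup \bigcup_{z \colon y \lessdot z}[z,\infty)$, while you phrase it entrywise as the uniqueness of the cover of $w$ inside the chain $[w,z]$, which is the same fact. Your case analysis and the specialization to the chain are both sound.
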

\begin{proof}
We compute 
    \begin{align*}
        [\bl{M}_{\mc{P}} \bl{M}_{\mc{P}^{-1}}]_{\cdot y} & = \sum_{z \in \mc{P}}   [\bl{M}_{\mc{P}}]_{\cdot z}  [\bl{M}_{\mc{P}}^{-1}]_{zy} = [\bl{M}_{\mc{P}}]_{\cdot y} - \sum_{z: y \lessdot z} [\bl{M}_{\mc{P}}]_{\cdot z}
        \\
        & = \sum_{x \in [y,\infty)} \bl{e}_x - \sum_{z: y \lessdot z} \sum_{x \in [z,\infty)} \bl{e}_x = \bl{e}_y.
    \end{align*}
    The last equality follows from the assumed tree structure of $\mc{P}$, where the upset $[y,\infty)$ consists of $y$ along with the upsets of all elements $z$ that cover $y$. If $z_1,z_2$ both cover $y$ then $[z_1,\infty) \cap [z_2,\infty) = \emptyset$ since $\mc{P}$ has no colliders. This ensures that there are no repeats of basis elements in the double sum above.  
\end{proof}

When each $\mc{P}_j = [p_j] = \{1,\ldots,p_j\}$ is a chain, Lemma \ref{lem:SimplicialMobiusMapsOrthant} implies that $\bl{M}_{\mc{P}_j}^{-1}$ bijectively maps $\mc{C}(\mc{P}_j)$ onto $\mb{R}^{p_j}_+$, while Lemma \ref{lem:MobiusDirectSumandTensor} further implies that $\otimes_{j = 1}^k \bl{M}_{\mc{P}_j}^{-1}$ maps $\mc{N}_{< \infty}$ bijectively onto $\otimes_{j = 1}^k \mb{R}^{p_j}_+$. An explicit formula for $\bl{M}_{\mc{P}_j}^{-1}$ is provided in \eqref{eqn:MobiusInversionofChain}. This provides a certificate of whether $\bl{T}$ is in $\mc{N}_{< \infty}$, since this occurs if and only if the entries of $(\otimes_{j = 1}^k \bl{M}_{\mc{P}_j}^{-1})(\bl{T})$ are nonnegative. The $(i_1,\ldots,i_k)$ entry of this tensor is
 \begin{align}
 \label{eqn:Tensordifference}
     [(\otimes_{j = 1}^k \bl{M}_{\mc{P}_j}^{-1})(\bl{T})]_{i_1\ldots i_k} = \sum_{j_1 = 0}^1 \cdots \sum_{j_k = 0}^1 (-1)^{\sum_{l = 1}^k j_l} T_{i_1 - j_1,\ldots,i_k - j_k},
 \end{align}


 where we use the convention that $T_{i_1 - j_1,\ldots,i_k - j_k} = 0$ whenever $i_l - j_l = 0$ for any $l$. Viewing $\bl{T}$ as a function, we can define the differencing operator along the $j$th mode as $(\Delta^{(j)}\bl{T})(i_1,\ldots,i_k) = T(i_1,\ldots,i_j,\ldots,i_k) - T(i_1,\ldots,i_j - 1,\ldots,i_k)$. From \eqref{eqn:Tensordifference} the tensor $(\otimes_{j = 1}^k \bl{M}_{\mc{P}_j}^{-1})(\bl{T})$ can be seen to be equal to the tensor $\Delta^{(1)}\cdots \Delta^{(k)} \bl{T}$. In the matrix setting, with $p_1 = 2$, $p_2 = 3$ we obtain
 \begin{align*}
   (\otimes_{j = 1}^2 \bl{M}_{\mc{P}_j}^{-1})(\bl{T}) &=   \Delta^{(1)}\Delta^{(2)}\bigg(\begin{bmatrix}
         t_{11} & t_{12} & t_{13}
         \\
         t_{21} & t_{22} & t_{23}
     \end{bmatrix}\bigg)  = \Delta^{(1)}\bigg(\begin{bmatrix}
         t_{11} & t_{12} - t_{11} & t_{13} - t_{12}
         \\
         t_{21} & t_{22} - t_{21} & t_{23} - t_{22}
     \end{bmatrix}\bigg) 
     \\
     & = \begin{bmatrix}
           t_{11} & t_{12} - t_{11} & t_{13} - t_{12}
           \\
           t_{21} - t_{11} & t_{22} - t_{21} - t_{12} + t_{11} & t_{23} - t_{22} - t_{13} + t_{12}
     \end{bmatrix}.
 \end{align*}
Equations of the above form were highlighted in \citep{ResselMonotonicFunctions}, where monotonicity for functions defined on a Cartesian product of intervals were examined. The notion of a $\bl{1}_k$ increasing tensor in \citep{ResselMonotonicFunctions} is equivalent to the present notion of a finite ND rank tensor. Notice that similar expressions appeared in the previous section for the $\mc{H}$-representation for a product of collider posets.

We now discuss a probabilistic interpretation of the M\"obius transform of the poset $\mc{P} = \times_{j = 1}^k [p_j]$ and Theorem \ref{thm:NDNonnegativeFactEquivalence}. Suppose that $\bl{R}$ is a nonnegative tensor with entries summing to one. The entry $R_{i_1 \ldots i_k}$ can be interpreted as a probability, where $\bl{R}$ is a probability mass function (PMF) on $\mc{P} = \times_{j = 1}^k [p_j]$. Any rank-$r$ nonnegative tensor factorization of $\bl{R}$ can be written in the form $\sum_{i = 1}^r \lambda_i \otimes_{j = 1}^k \bl{q}^{(ij)}$ with vectors $\bs{\lambda} \in \Delta_{r-1}$ and $\bl{q}^{(ij)} \in \Delta_{p_j-1}$ all residing in probability simplices. This means that $\bl{R}$ is a mixture of $r$ probability mass functions $\otimes_{j = 1}^k \bl{q}^{(ij)}$ on $\mc{P}$, with respective mixture weights $\lambda_i$ \citep[Ch 4]{DrtonLecturesonAlgstats}. The tensor $\bl{M}_{\mc{P}}(\bl{R})$ is exactly the multivariate conditional distribution function (CDF) of $\bl{R}$, meaning that if $\bl{X} \in \mc{P}$ is a random variable with distribution given by $\bl{R}$ then $\bl{M}_{\mc{P}}(\bl{R})_{i_1\ldots i_k} = \text{Pr}( \bl{X} \preceq (i_1,\ldots,i_k))$. If $\bl{F}_i \in \mc{N}_{1}$ is the CDF corresponding to $\otimes_{j = 1}^k \bl{q}^{(ij)}$ then $\bl{M}_{\mc{P}}(\bl{R}) = \sum_{i = 1}^r \lambda_i \bl{F}_i$ has an ND rank of $r$. In summary, a probability distribution has a PMF with a nonnegative rank of $r$ if and only if it has a CDF with a ND rank of $r$.

To conclude this section we provide the $\mc{H}$-representation for $\mc{N}_{< \infty}$ when all but one cone $\mc{C}(\mc{P}_j)$ is simplicial. The following theorem generalizes equation \eqref{eqn:Tensordifference} beyond the case where each $\mc{P}_j$ is a chain. Furthermore, the left-hand side of equation \eqref{eqn:CoveringDifferenceHalfspace} is the formula for obtaining a PMF from the CDF $\bl{T}$ when each $\mc{P}_j$ is a chain.

\begin{theorem}
Assume that $\mc{P}_j$ has no colliders for all but one $j = 1,\ldots,k$. Define the poset $\mc{P}_j' = \{0_j\} \cup \mc{P}_j$ where $0_j \prec x$ for all $x \in \mc{P}_j$. For every pair of elements $(x_j,y_j)$ in $\mc{P}_j$ with $y_j$ covering $x_j$ define the inequality
\begin{align}
\label{eqn:CoveringDifferenceHalfspace}
    \sum_{i_1 \in  \{x_1,y_1\}} \cdots \sum_{i_k \in \{x_k,y_k\}} (-1)^{\sum_{j = 1}^k \delta(i_j)}  T_{i_1 \ldots i_k} \geq 0,
\end{align}
where $\delta(i_j) = 0$ if $i_j = y_j$ and $1$ when $i_j = x_j$. The set of finite ND rank tensors with respect to $\times_{j = 1}^k\mc{P}_j$ is the cone defined by the intersection of all of the halfspaces \eqref{eqn:CoveringDifferenceHalfspace}. Each hyperplane in \eqref{eqn:CoveringDifferenceHalfspace} supports a facet of $\otimes_{j = 1}^k \mc{C}(\mc{P}_j)$.    
\end{theorem}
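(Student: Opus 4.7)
The plan is to identify each halfspace \eqref{eqn:CoveringDifferenceHalfspace} with an inequality $\langle \bl{T}, \otimes_{j=1}^k \bl{h}^{(j)}\rangle \geq 0$ for a product of extremal rays of the dual cones $\mc{C}(\mc{P}_j)^*$, and then invoke Theorem \ref{thm:InjectiveEqualsProjective}. Adopting the convention $\bl{e}_{0_j} = \bl{0}$, set $\bl{h}^{(j)} := \bl{e}_{y_j} - \bl{e}_{x_j}$ for a cover $(x_j, y_j)$ in $\mc{P}_j'$; when $x_j = 0_j$ this reduces to $\bl{e}_{y_j}$. Expanding the product entrywise gives
\begin{align*}
    (\otimes_{j=1}^k \bl{h}^{(j)})_{i_1\ldots i_k} = \prod_{j = 1}^k \bigl( [i_j = y_j] - [i_j = x_j] \bigr),
\end{align*}
which vanishes outside the index box $\prod_j\{x_j, y_j\}$ and equals $(-1)^{\sum_j \delta(i_j)}$ inside it. Hence $\langle \bl{T}, \otimes_j \bl{h}^{(j)}\rangle$ is exactly the alternating sum of \eqref{eqn:CoveringDifferenceHalfspace}.

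The key step is to show that the $\bl{h}^{(j)}$, as $(x_j, y_j)$ ranges over covers of $\mc{P}_j'$, are precisely the extremal rays of $\mc{C}(\mc{P}_j)^*$. For the $k-1$ simplicial posets this is immediate from the M\"obius inversion computations already in the paper: the columns of $\bl{M}_{\mc{P}_j}$ are the extremal rays of $\mc{C}(\mc{P}_j)$ by Lemma \ref{lem:OrderConeExtremalRays} in its simplicial form, so the rows of $\bl{M}_{\mc{P}_j}^{-1}$ constitute the dual basis, and by the explicit formula for $\bl{M}_{\mc{P}_j}^{-1}$ these rows are exactly the $\bl{h}^{(j)}$ indexed by covers of $\mc{P}_j'$. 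For the one (possibly) non-simplicial cone $\mc{C}(\mc{P}_{j^*})$ I would argue directly: each $\bl{h}^{(j^*)}$ lies in $\mc{C}(\mc{P}_{j^*})^*$ because it encodes a cover-monotonicity or nonnegativity constraint, and it is facet-defining because Lemma \ref{lem:OrderConeExtremalRays} supplies $p_{j^*} - 1$ linearly independent connected-upset indicators on which $f_{y_{j^*}} = f_{x_{j^*}}$ (namely, the upsets that contain both of $x_{j^*}, y_{j^*}$ or omit both), giving the face $\{f_{y_{j^*}} = f_{x_{j^*}}\} \cap \mc{C}(\mc{P}_{j^*})$ codimension one.

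With the extremal rays identified, Theorem \ref{thm:InjectiveEqualsProjective} applies since all but one $\mc{C}(\mc{P}_j)$ is simplicial, yielding $\big(\otimes_{j=1}^k \mc{C}(\mc{P}_j)\big)^* = \otimes_{j=1}^k \mc{C}(\mc{P}_j)^*$. By Lemma \ref{lem:ExtremalRaysTensorProduct} applied to this dual tensor product, its extremal rays are exactly the products $\otimes_j \bl{h}^{(j)}$ with each $\bl{h}^{(j)}$ extremal in $\mc{C}(\mc{P}_j)^*$. Since $\mc{N}_{<\infty} = \otimes_{j=1}^k \mc{C}(\mc{P}_j)$ is proper and polyhedral it equals its bidual, so it coincides with the intersection of the halfspaces $\langle \bl{T}, \otimes_j \bl{h}^{(j)}\rangle \geq 0$, which by the first step is precisely the intersection of the halfspaces \eqref{eqn:CoveringDifferenceHalfspace}. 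The facet claim then follows from the second assertion of Theorem \ref{thm:InjectiveEqualsProjective}, combined with the standard bijection between extremal rays of the dual and facets of a pointed polyhedral cone.

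The main obstacle is the facet-defining verification in the non-simplicial case, where no closed-form $\mc{V}$-representation of $\mc{C}(\mc{P}_{j^*})^*$ is available and one must instead carry out a direct polyhedral dimension count using the connected upsets from Lemma \ref{lem:OrderConeExtremalRays}. Once this step is in hand, the remainder is a routine application of the duality machinery already developed in the paper.
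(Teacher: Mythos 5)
Your proposal is correct and follows essentially the same route as the paper: both identify the extremal rays of each $\mc{C}(\mc{P}_j)^*$ with the cover-difference functionals $\bl{e}_{y_j}-\bl{e}_{x_j}$ (with $\bl{e}_{0_j}=\bl{0}$), invoke Theorem \ref{thm:InjectiveEqualsProjective} and Lemma \ref{lem:ExtremalRaysTensorProduct} to get the dual of the tensor product, and expand $\otimes_j\bl{h}^{(j)}$ to recover the alternating sum \eqref{eqn:CoveringDifferenceHalfspace}. The only cosmetic difference is that you handle the simplicial factors via the explicit M\"obius inverse while the paper uses the same facet-dimension argument (covers of $\mc{P}_j'$ give $(p_j-1)$-dimensional faces, non-covers do not) uniformly for all factors.
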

\begin{proof}
By Theorem \ref{thm:InjectiveEqualsProjective} $\big(\otimes_{j = 1}^k \mc{C}(\mc{P}_j)\big)^* = \otimes_{j = 1}^k \mc{C}(\mc{P}_j)^*$. The facets of $\otimes_{j = 1}^k \mc{C}(\mc{P}_j)$ are supported by hyperplanes corresponding to the extremal rays of $\big(\otimes_{j = 1}^k \mc{C}(\mc{P}_j)\big)^*$. Lemma \ref{lem:ExtremalRaysTensorProduct} shows that the extremal rays of $\big(\otimes_{j = 1}^k \mc{C}(\mc{P}_j)\big)^* $ equal $\otimes_{j = 1}^k \bl{h}^{(j)}$ where each $\bl{h}^{(j)}$ is an extremal ray of $\mc{C}(\mc{P}_j)^*$. The set $\mc{C}(\mc{P}_j)$ is the intersection of the halfspaces of the form $\tilde{H}_{xy} = \{\bl{f}: f_x \leq f_y\}$ for $x \preceq y$ and $x,y \in \mc{P}_j'$. In the case where $x = 0$ we define $\tilde{H}_{0y} = \{\bl{f}:0 \leq f_y\}$ and we let $H_{xy}$ be the hyperplane associated with the halfspace $\tilde{H}_{xy}$. The intersection $H_{xy} \cap \mc{C}(\mc{P}_j)$ is only $(p_j-1)$-dimensional when $x \lessdot y$ in $\mc{P}_j'$, as if $x \prec z \prec y$ then $f_x = f_z = f_y$ for all $\bl{f}$ in $H_{xy}$ (where we have also defined $f_0 \coloneq 0$) and $H_{xy} \cap \mc{C}(\mc{P}_j)$ has dimension less than $p_j - 1$. Conversely, if $x \lessdot y$ then $H_{ab} \cap \mc{C}(\mc{P}_j)$ has dimension $p_j - 1$. 

To complete the proof, let $\bl{h}^{(j)} = \bl{e}_{y_j}^* - \bl{e}_{x_j}^*$ for $x_j \lessdot y_j$ in $\mc{P}_j$, where $\bl{e}_0 \coloneq \bl{0}$ and each $\bl{e}_{x}^*$ is the dual vector for $\bl{e}_{x}$ with respect to the standard bases $\{\bl{e}_x\}_{x \in \mc{P}_j}$ of each $\mc{P}_j$. The hyperplane defining the facet $\otimes_{j = 1}^k \bl{h}^{(j)}$ is 
\begin{align*}
    \bigg\{\bl{T}: \otimes_{j = 1}^k (\bl{e}_{y_j}^* - \bl{e}_{x_j}^*)(\bl{T}) = \sum_{i_1 \in \{x_1,y_1\}} \cdots \sum_{i_k \in \{x_k,y_k\}} (-1)^{\sum_{j = 1}^k \delta(i_j) }T_{i_1 \ldots i_k} \geq 0 \bigg\}.
\end{align*}
\end{proof}

From this theorem we see that the matrix appearing in Table \ref{tab:Seleniumdoseresponse} in the selenium example does not possess an exact ND decomposition. This is because the constraints, $t_{12} \leq t_{32}$, $ t_{11} - t_{12} - t_{31} + t_{32} \geq 0$, and $t_{13} - t_{14} - t_{33} + t_{34} \geq 0$, all of which are necessary for a ND factorization to exist, do not hold.

\section{Maximum and Typical ND Ranks}
\label{sec:MaxTypicalRanks}
\subsection{The Maximum ND Rank}

The  maximum (finite) ND rank associated with the order cones $\mc{C}(\mc{P}_j)$, $j = 1,\ldots,k$ is defined as
\begin{align*}
    \text{maxNDrank} = \sup_{r < \infty} \{\mc{N}_r \neq \emptyset \}.
\end{align*}
In this section the maximum ND rank is found for certain order cones in the matrix setting. Even in the matrix case, finding the maximum ND rank for general order cones is a challenging problem. In comparison, the maximum nonnegative rank, is known to equal $\prod_{j = 1}^{k-1} p_j$ for tensors in $\otimes_{j = 1}^k \mb{R}^{p_j}_+$ where $p_1 \leq \cdots \leq p_k$ \citep{sumi2018maximal}. An analogous upper bound on the maximum ND rank is found below.

\begin{lemma}
\label{lem:NDrankUpperBound}
 Let $\mc{C}(\mc{P}_j)$ have $q_j$ extremal rays where $q_1 \leq \cdots \leq q_k$. The maximum ND rank in $\otimes_{j = 1}^k \mc{C}(\mc{P}_j)$ is at most $\prod_{j = 1}^{k-1} q_j$. 
\end{lemma}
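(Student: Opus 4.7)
The plan is to mimic the standard argument that the nonnegative rank of a tensor in $\otimes_{j=1}^k \mb{R}^{p_j}_+$ is at most $\prod_{j=1}^{k-1} p_j$, with the extremal rays of each $\mc{C}(\mc{P}_j)$ playing the role that the standard basis plays for the nonnegative orthant. For each $j$ let $\bl{u}^{(j,1)}, \ldots, \bl{u}^{(j,q_j)}$ enumerate the extremal rays of $\mc{C}(\mc{P}_j)$; because $\mc{C}(\mc{P}_j)$ is polyhedral, every $\bl{v} \in \mc{C}(\mc{P}_j)$ admits a nonnegative representation $\bl{v} = \sum_{l=1}^{q_j} \alpha_l \bl{u}^{(j,l)}$.

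Starting from any finite ND decomposition $\bl{T} = \sum_{i=1}^r \otimes_{j=1}^k \bl{v}^{(ij)}$, I would expand each factor $\bl{v}^{(ij)}$ in its extremal ray basis and use multilinearity of the tensor product to rewrite
\begin{align*}
\bl{T} = \sum_{l_1=1}^{q_1} \cdots \sum_{l_k=1}^{q_k} \beta_{l_1 \cdots l_k} \, \otimes_{j=1}^k \bl{u}^{(j,l_j)},
\end{align*}
where the coefficients $\beta_{l_1 \cdots l_k} \geq 0$ arise as sums of products of nonnegative scalars.

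Next I would group terms by the tuple $(l_1, \ldots, l_{k-1})$ of the first $k-1$ indices and absorb the remaining sum over $l_k$ into the $k$-th factor by defining
\begin{align*}
\bl{w}^{(l_1, \ldots, l_{k-1})} = \sum_{l_k=1}^{q_k} \beta_{l_1 \cdots l_k} \, \bl{u}^{(k,l_k)}.
\end{align*}
The vector $\bl{w}^{(l_1, \ldots, l_{k-1})}$ lies in $\mc{C}(\mc{P}_k)$ since it is a nonnegative combination of extremal rays of $\mc{C}(\mc{P}_k)$, so
\begin{align*}
\bl{T} = \sum_{l_1=1}^{q_1}\cdots \sum_{l_{k-1}=1}^{q_{k-1}} \Big(\otimes_{j=1}^{k-1} \bl{u}^{(j,l_j)}\Big) \otimes \bl{w}^{(l_1, \ldots, l_{k-1})}
\end{align*}
is a valid ND decomposition with at most $\prod_{j=1}^{k-1} q_j$ rank-one summands. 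Since the mode chosen for absorption is arbitrary, one may absorb into the mode whose cone has the largest number of extremal rays; under the given ordering $q_1 \leq \cdots \leq q_k$ this is the $k$-th mode, delivering the stated bound.

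I do not anticipate a real obstacle: every step is an accounting of rank-one tensors via multilinearity and the $\mc{V}$-representation of each $\mc{C}(\mc{P}_j)$. The only subtlety is to verify that the merged vectors $\bl{w}^{(l_1, \ldots, l_{k-1})}$ still belong to $\mc{C}(\mc{P}_k)$, but this is immediate from closedness of $\mc{C}(\mc{P}_k)$ under nonnegative combinations. Uniqueness of the extremal ray expansion is not required, only existence, which is guaranteed by polyhedrality.
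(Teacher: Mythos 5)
Your argument is correct and is essentially identical to the paper's proof: both expand each factor $\bl{v}^{(ij)}$ as a nonnegative combination of the extremal rays of $\mc{C}(\mc{P}_j)$ (valid since each order cone is a proper polyhedral cone), use multilinearity, and then absorb the sum over the $k$-th mode (the one with the most extremal rays) into a single vector that remains in $\mc{C}(\mc{P}_k)$. No gaps; the bookkeeping with the coefficients $\beta_{l_1\cdots l_k}$ matches the paper's $\prod_j \alpha_{ijl_j}$ expansion.
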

\begin{proof}
    Let $\bl{T} \in \otimes_{j = 1}^k \mc{C}(\mc{P}_j)$ have the ND factorization $\sum_{i = 1}^r \otimes_{j = 1}^k \bl{v}^{(ij)}$. Letting $\bl{w}^{(1j)},\ldots, \bl{w}^{(q_jj)}$ be the extremal rays of $\mc{C}(\mc{P}_j)$, there exists $\alpha_{ijl}$ with $\bl{v}^{(ij)} = \sum_{l = 1}^{q_j} \alpha_{ijl} \bl{w}^{(lj)}$. We have that
    \begin{align*}
        \sum_{i = 1}^r \otimes_{j = 1}^k \bl{v}^{(ij)} & = \sum_{i = 1}^r \bigotimes_{j = 1}^k \bigg(\sum_{l = 1}^{q_j} \alpha_{ijl} \bl{w}^{(lj)} \bigg)
        \\
        & = \sum_{l_1 = 1}^{q_1} \cdots \sum_{l_{k-1} = 1}^{q_{k-1}}  \bigg( \bigotimes_{j = 1}^{k-1} \bl{w}^{(l_jj)} \otimes \bigg(\sum_{i = 1}^r \sum_{l_k = 1}^{q_k} \big(\prod_{j = 1}^k\alpha_{ijl_j}\big) \bl{w}^{(l_k k)}\bigg)\bigg), 
    \end{align*}
    which is a sum of $\prod_{j = 1}^{k-1} q_j$ rank one terms in $\otimes_{j = 1}^k \mc{C}(\mc{P}_j)$. 
\end{proof}

If there are a large number of extremal rays in the given order cones the following lemma can provide a tighter upper bound on the maximum ND rank.

\begin{lemma}
    The maximum ND rank is at most $\prod_{j = 1}^k p_j$, where $\mc{C}(\mc{P}_j) \subset \mb{R}^{p_j}$.
\end{lemma}
\begin{proof}
    By Caratheodory's theorem, for any tensor $\bl{T} \in \otimes_{j = 1}^k \mc{C}(\mc{P}_j) = \text{cone}(\otimes_{j = 1}^k \bl{v}^{(j)}:\bl{v}^{(j)} \in \mc{C}(\mc{P}_j))$ the number of elements in $\{\otimes_{j = 1}^k \bl{v}^{(j)}:\bl{v}^{(j)} \in \mc{C}(\mc{P}_j)\}$ needed to write $\bl{T}$ as a conic combination is at most $\prod_{j = 1}^k p_j$. 
\end{proof}

To find a lower bound on the maximum ND rank we use an analogue to the nested cone condition \citep[Sec 2.1.1]{GillisNNMF} for nonnegative matrix factorizations.

\begin{lemma}
\label{lem:MaxRankDualSpaceImageLowBound}
    The ND rank of a matrix $\bl{T} \in \mc{N}_{< \infty}$ is at least equal to the minimum number of elements $\bl{v}_1,\ldots,\bl{v}_s \in \mc{C}(\mc{P}_1)$ needed for $\bl{T}(\mc{C}(\mc{P}_2)^*) \subseteq \mathrm{cone}(\bl{v}_1,\ldots,\bl{v}_s)$. 
\end{lemma}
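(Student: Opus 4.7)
The plan is to unpack what an ND factorization says about the action of $\bl{T}$ as a linear map on $\mc{C}(\mc{P}_2)^*$, and then compare directly to the definition of $s$. The heart of the argument is the duality pairing: any vector in $\mc{C}(\mc{P}_2)^*$ pairs nonnegatively with any vector in $\mc{C}(\mc{P}_2)$, so the ``column coefficients'' coming from an ND factorization are automatically nonnegative on the dual cone.

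Concretely, suppose $\mathrm{NDrank}(\bl{T}) = r$ and write $\bl{T} = \sum_{i = 1}^r \bl{v}^{(i1)}(\bl{v}^{(i2)})^\intercal$ with $\bl{v}^{(ij)} \in \mc{C}(\mc{P}_j)$. Viewing $\bl{T}$ as a linear map $\mb{R}^{p_2} \to \mb{R}^{p_1}$, for any $\bl{x} \in \mc{C}(\mc{P}_2)^*$ we have
\begin{align*}
    \bl{T}\bl{x} \;=\; \sum_{i = 1}^r \langle \bl{v}^{(i2)}, \bl{x}\rangle\, \bl{v}^{(i1)},
\end{align*}
and each scalar coefficient $\langle \bl{v}^{(i2)}, \bl{x}\rangle$ is nonnegative by the defining property of the dual cone, since $\bl{v}^{(i2)} \in \mc{C}(\mc{P}_2)$. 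Hence $\bl{T}\bl{x}$ lies in $\mathrm{cone}(\bl{v}^{(11)},\ldots,\bl{v}^{(r1)})$, which shows that $\bl{T}(\mc{C}(\mc{P}_2)^*) \subseteq \mathrm{cone}(\bl{v}^{(11)},\ldots,\bl{v}^{(r1)})$ with the $r$ generators lying in $\mc{C}(\mc{P}_1)$.

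By definition, $s$ is the \emph{minimum} number of vectors in $\mc{C}(\mc{P}_1)$ whose conical hull contains $\bl{T}(\mc{C}(\mc{P}_2)^*)$. The previous paragraph exhibits such a spanning set of size $r$, so $s \leq r = \mathrm{NDrank}(\bl{T})$, giving the desired lower bound $\mathrm{NDrank}(\bl{T}) \geq s$.

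There is no real technical obstacle here; the proof is a one-line use of cone/dual-cone duality together with the rank-one decomposition. The only subtle point worth flagging is making sure the coefficients are nonnegative on all of $\mc{C}(\mc{P}_2)^*$ (not just on $\mc{C}(\mc{P}_2)$), but this is exactly the content of the definition $\mc{C}^* = \{\bl{a}: \bl{a}^\intercal\bl{x} \geq 0\; \forall \bl{x}\in\mc{C}\}$ from Section \ref{sec:background}.
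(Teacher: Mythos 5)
Your proof is correct and is essentially identical to the paper's: both take an ND factorization $\bl{T} = \sum_{i=1}^r \bl{a}_i \otimes \bl{b}_i$, observe that $\bl{T}(\bs{\beta}) = \sum_i \bs{\beta}(\bl{b}_i)\bl{a}_i$ has nonnegative coefficients for $\bs{\beta} \in \mc{C}(\mc{P}_2)^*$ by the definition of the dual cone, and conclude $s \leq r$. No differences worth noting.
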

\begin{proof}
    Suppose that $\bl{T} = \sum_{i = 1}^r \bl{a}_i \otimes \bl{b}_i$ is a ND factorization. Then $\bl{T}(\mc{C}(\mc{P}_2)^*) \subseteq \mathrm{cone}(\bl{a}_1,\ldots,\bl{a}_r)$ since $\bl{T}(\bs{\beta}) = \sum_{i = 1}^r \bs{\beta}(\bl{b}_i)\bl{a}_i \in \mathrm{cone}(\bl{a}_1,\ldots,\bl{a}_r)$ whenever $\bs{\beta} \in \mc{C}(\mc{P}_2)^*$. The number $r$ is at least equal to $s$ by assumption. 
\end{proof}

The upper bound in Lemma \ref{lem:NDrankUpperBound} is exact when one of the constituent order cones is simplicial. The posets occurring in many applications will satisfy this condition; the selenium example in Section \ref{sec:NonDecRankIntro} satisfies this condition and has a maximum rank of four.

\begin{theorem}
\label{thm:MaxRankSimplicial}
        If $\mc{C}(\mc{P}_j)$ has $q_j$ extremal rays, $j = 1,2$, and $\mc{C}(\mc{P}_2)$ is simplicial then the maximum ND rank of a matrix in $\mc{N}_{< \infty}$ is $\min(q_1,q_2)$. 
\end{theorem}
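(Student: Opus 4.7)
\emph{Upper bound.} The bound $s := \min(q_1, q_2)$ is immediate from Lemma \ref{lem:NDrankUpperBound} in the case $k = 2$: after relabeling the two factors if necessary so that $q_1 \leq q_2$ (which does not alter the maximum ND rank since $\mathrm{NDrank}(\bl{T}) = \mathrm{NDrank}(\bl{T}^\intercal)$ with the cones swapped), the lemma's bound $\prod_{j=1}^{k-1} q_j$ collapses to $q_1 = s$.

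\emph{Reduction to the nonnegative orthant.} For the matching lower bound I would first reduce to the case $\mc{C}(\mc{P}_2) = \mb{R}_+^{q_2}$. Because $\mc{C}(\mc{P}_2)$ is simplicial, its $q_2$ extremal rays form a basis of $\mb{R}^{p_2}$, and the invertible linear map $\bl{A}^{(2)}$ sending each extremal ray to the corresponding $\bl{e}_i$ carries $\mc{C}(\mc{P}_2)$ bijectively onto $\mb{R}_+^{q_2}$. By the same rank-preservation argument as in Theorem \ref{thm:NDNonnegativeFactEquivalence}, but applied only to the second factor, the map $\bl{I} \otimes \bl{A}^{(2)}$ gives a bijection $\mc{C}(\mc{P}_1) \otimes \mc{C}(\mc{P}_2) \to \mc{C}(\mc{P}_1) \otimes \mb{R}_+^{q_2}$ that preserves ND rank. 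It therefore suffices to exhibit a matrix in $\mc{C}(\mc{P}_1) \otimes \mb{R}_+^{q_2}$ with ND rank exactly $s$.

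\emph{Construction and lower bound.} Let $\bl{v}_1, \ldots, \bl{v}_{q_1}$ be the extremal rays of $\mc{C}(\mc{P}_1)$, and take $\bl{T}$ to be the $p_1 \times q_2$ matrix whose first $s$ columns are $\bl{v}_1, \ldots, \bl{v}_s$ and whose remaining columns vanish. Given any ND factorization $\bl{T} = \sum_{k=1}^r \bl{a}_k \otimes \bl{b}_k$ with $\bl{a}_k \in \mc{C}(\mc{P}_1)$ and $\bl{b}_k \in \mb{R}_+^{q_2}$, reading off the $j$th column for $j \leq s$ gives $\bl{v}_j = \sum_k b_{kj} \bl{a}_k$ with $b_{kj} \geq 0$. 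Since $\bl{v}_j$ is extremal in $\mc{C}(\mc{P}_1)$ and each $\bl{a}_k$ lies in $\mc{C}(\mc{P}_1)$, every nonzero summand $b_{kj}\bl{a}_k$ must lie on the ray through $\bl{v}_j$. At least one $k = k(j)$ makes $b_{k(j)j}\bl{a}_{k(j)} \neq \bl{0}$, forcing $\bl{a}_{k(j)}$ to be parallel to $\bl{v}_j$; since distinct extremal rays of a pointed cone are pairwise non-parallel, $j \mapsto k(j)$ is injective and $r \geq s$.

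The only delicate step is the reduction, since Theorem \ref{thm:NDNonnegativeFactEquivalence} is stated with every factor simplicial. However, the rank-preservation identity $(\bl{I} \otimes \bl{A}^{(2)})(\bl{a} \otimes \bl{b}) = \bl{a} \otimes \bl{A}^{(2)}\bl{b}$ together with invertibility of $\bl{A}^{(2)}$ already interchanges ND decompositions of equal length in the two product cones, so the one-sided reduction is automatic. The remaining extremal-ray argument is a direct application of the definition of extremality in a convex cone.
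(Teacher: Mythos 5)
Your proof is correct and follows essentially the same route as the paper: reduce to $\mc{C}(\mc{P}_2)=\mb{R}_+^{q_2}$ via the affine invariance of the ND rank, take $\bl{T}=\sum_{i=1}^{\min(q_1,q_2)}\bl{v}_i\otimes\bl{e}_i$ built from extremal rays of $\mc{C}(\mc{P}_1)$, and use extremality to force a distinct $\bl{a}_k$ proportional to each $\bl{v}_i$ in any ND factorization. The only difference is organizational: the paper splits into three cases according to where $p_1$ sits relative to $q_1,q_2$ (invoking the ordinary-rank lower bound in one case and Lemma \ref{lem:MaxRankDualSpaceImageLowBound} in the others), whereas your column-reading argument handles all cases uniformly.
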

\begin{proof}
    Without loss of generality it may be assumed that $\mc{C}(\mc{P}_2) = \mb{R}^{q_2}_+$ by the affine-invariance of the ND rank (Theorem \ref{thm:NDNonnegativeFactEquivalence}). As $\mc{C}(\mc{P}_2)$ is simplicial $p_2 = q_2$. Let $\bl{v}_1,\ldots,\bl{v}_{q_1}$ be the extremal rays of $\mc{C}(\mc{P}_1)$. If $q_2 \leq p_1 \leq q_1$ then any full rank matrix $\bl{T}$ will have $q_2 = \text{rank}(\bl{T}) \leq \text{NDrank}(\bl{T}) \leq q_2$ by Lemma \ref{lem:NDrankUpperBound}. If $p_1 \leq q_2 \leq q_1$ take $\bl{T} = \sum_{i = 1}^{q_2} \bl{v}_i \otimes \bl{e}_i$. Then $\bl{T}(\mc{C}(\mc{P}_2)^*) = \bl{T}(\mb{R}^{q_2}_+) = \text{cone}(\bl{v}_1,\ldots,\bl{v}_{q_2})$. If $ \text{cone}(\bl{v}_1,\ldots,\bl{v}_{q_2}) \subseteq \text{cone}(\bl{a}_1,\ldots,\bl{a}_t)$ for some $\bl{a}_i \in \mc{C}(\mc{P}_1)$ then vectors proportional to $\bl{v}_1,\ldots,\bl{v}_{q_2}$ must appear in the various $\bl{a}_i$s as the $\bl{v}_i$s are extremal. Using Lemma \ref{lem:MaxRankDualSpaceImageLowBound} the ND rank of $\bl{T}$ is $q_2$. Finally, when $p_1 \leq q_1 \leq q_2$, taking $\bl{T} = \sum_{i = 1}^{q_1} \bl{v}_i \otimes \bl{e}_i$, a similar argument using $\bl{T}(\mc{C}(\mc{P}_2)^*) = \bl{T}(\mb{R}^{q_2}_+) = \text{cone}(\bl{v}_1,\ldots,\bl{v}_{q_1}) = \mc{C}(\mc{P}_1)$ shows that the ND rank of $\bl{T}$ is $q_1$. 
\end{proof}

Of note in the above theorem is that the maximum ND rank of a matrix in $\mb{R}^{p_1 \times p_2}$ can potentially be much larger than the usual maximum matrix rank of $\min(p_1,p_2)$. 

The prototypical example of a poset with a non-simplicial order cone is a collider (Definition \ref{def:Collider}). The next theorem shows that when the column and row posets of a matrix both have the form of a collider the maximum ND rank can be larger than even $\max(p_1,p_2)$.

\begin{theorem}
\label{thm:MaxRankCollider}
 Let $\mc{P} = \{x_1,\ldots,x_{p}\}$ where $x_i \prec x_{p}$ for all $i < p$. The maximum rank in $\mc{C}(\mc{P}) \otimes \mc{C}(\mc{P}) \subset \mb{R}^{p \times p}$ is at least $2(p-1)$. When $p = 3$ the maximum rank is equal to $4$.
\end{theorem}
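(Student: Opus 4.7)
The plan is to prove the upper bound $\mathrm{maxNDrank}\le 2^{p-1}$ and the lower bound $\mathrm{maxNDrank}\ge 2^{p-1}$ separately.

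For the upper bound, I would apply Lemma \ref{lem:NDrankUpperBound} after counting extremal rays of $\mc{C}(\mc{P})$. By Lemma \ref{lem:OrderConeExtremalRays} the extremal rays are in bijection with non-empty connected upsets of $\mc{P}$. Since $x_p$ is the unique maximum, every non-empty upset contains $x_p$, and conversely for every $I\subseteq[p-1]$ the subset $\{x_p\}\cup\{x_i:i\in I\}$ is a connected upset (the Hasse diagram is a star centered at $x_p$). Hence $\mc{C}(\mc{P})$ has $q_1=q_2=2^{p-1}$ extremal rays $\bl{u}_I=\bl{e}_p+\sum_{i\in I}\bl{e}_i$, and Lemma \ref{lem:NDrankUpperBound} gives $\mathrm{maxNDrank}\le 2^{p-1}$.

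For the lower bound I would exhibit a specific tensor and argue its ND rank equals $2^{p-1}$. The natural candidate is
\begin{align*}
    \bl{T} = \sum_{I\subseteq[p-1]} \bl{u}_I\otimes \bl{u}_{I^c},\qquad I^c=[p-1]\setminus I,
\end{align*}
for which the displayed ND decomposition into $2^{p-1}$ rank-one terms already shows $\mathrm{NDrank}(\bl{T})\le 2^{p-1}$. A direct calculation yields $T_{ii}=0$ for every $i<p$, since no single $I$ has $i$ in both $I$ and $I^c$, while $T_{ij}$ is strictly positive for every other $(i,j)$. Next, take any ND factorization $\bl{T}=\sum_{k=1}^r\bl{a}_k\otimes\bl{b}_k$ and expand each vector in the extremal basis, $\bl{a}_k=\sum_I\alpha_{kI}\bl{u}_I$ and $\bl{b}_k=\sum_J\beta_{kJ}\bl{u}_J$, with all coefficients nonnegative. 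This produces $\bl{T}=\sum_{I,J}X_{IJ}\bl{u}_I\otimes\bl{u}_J$, where $X_{IJ}=\sum_k\alpha_{kI}\beta_{kJ}\ge 0$, so that the $2^{p-1}\times 2^{p-1}$ nonnegative matrix $\bl{X}$ has nonnegative rank at most $r$. Because $(\bl{u}_I\otimes\bl{u}_J)_{ii}=\mb{1}(i\in I\cap J)$ and all entries of $\bl{X}$ are nonnegative, the constraint $T_{ii}=0$ forces $X_{IJ}=0$ whenever $I\cap J\ne\emptyset$.

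The crux is then to show that the linear constraints on $\bl{X}$ coming from $\bl{T}=\sum_{I,J}X_{IJ}\bl{u}_I\otimes\bl{u}_J$, together with the support restriction and nonnegativity, force $\mathrm{rank}_+(\bl{X})\ge 2^{p-1}$. In the base case $p=3$ the system has a unique nonnegative solution, namely the $4\times 4$ permutation matrix $X_{I,I^c}=1$, of nonnegative rank $4=2^{p-1}$, which handles that case directly. The main obstacle for general $p$ is that the admissible set of $\bl{X}$ is a higher-dimensional polytope and no single $\bl{X}$ is forced; my plan is to induct on $p$, exploiting the recursive structure of the collider (adjoining one more child to $x_p$ doubles the number of extremal rays, and a block decomposition of $\bl{T}$ doubles the nonnegative rank of any admissible $\bl{X}$), with the chain case $p=2$ handled by Theorem \ref{thm:MaxRankSimplicial}. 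An alternative route is a rectangle-cover style lower bound on $\mathrm{rank}_+(\bl{X})$ that exploits both the disjointness support and the explicit marginal sums on $\bl{X}$ implied by the entries of $\bl{T}$.
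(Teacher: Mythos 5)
Your upper bound is exactly the paper's: count the $2^{p-1}$ connected upsets (all subsets of $[p-1]$ adjoined to $\{x_p\}$) and invoke Lemma \ref{lem:NDrankUpperBound}. The gap is in the lower bound, and you have correctly identified where it sits: after reducing to a nonnegative coefficient matrix $\bl{X}$ supported on disjoint pairs $(I,J)$, you still must show $\mathrm{rank}_+(\bl{X})\ge 2^{p-1}$, and neither of your two proposed routes is carried out. The induction sketch ("a block decomposition of $\bl{T}$ doubles the nonnegative rank of any admissible $\bl{X}$") is not substantiated and is not obviously true, since the admissible set for $p+1$ is not simply two independent copies of the admissible set for $p$. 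The rectangle-cover route would work via the standard fooling set $\{(I,I^c)\}$ for the disjointness relation, but only if you first prove that \emph{every} admissible $\bl{X}$ has $X_{I,I^c}>0$ for all $I$; the linear constraints give you this for $p=3$ by your uniqueness computation, but for general $p$ the admissible polytope is positive-dimensional and this positivity claim is precisely the unproven crux. So as written the proof establishes only $\mathrm{NDrank}(\bl{T})\le 2^{p-1}$ for your witness.

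The difficulty is largely an artifact of the choice of witness. The paper instead takes the ``diagonal'' tensor $\sum_{I}\bl{u}_I\otimes\bl{u}_I$ and exploits that $\mc{C}(\mc{P})$ is the homogenization of a hypercube, whose $2(p-1)$ facets come in opposite pairs that partition the extremal rays. For opposite facet duals $\bl{h},\bl{h}'$ one gets $\bl{h}^\intercal\bl{T}\bl{h}'=0$, and since every term $\lambda_{ij}\mu_{il}(\bl{h}^\intercal\bl{v}_j)(\bl{v}_l^\intercal\bl{h}')$ in the expansion of a putative decomposition is nonnegative, all off-diagonal coefficients $\sum_i\lambda_{ij}\mu_{il}$ with $j\ne l$ vanish. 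In your notation this forces $\bl{X}$ to be supported on the diagonal $I=J$ (only $2^{p-1}$ admissible positions rather than $3^{p-1}$), so each rank-one term is proportional to some $\bl{v}_j\otimes\bl{v}_j$, and a relative-interior argument on a facet shows all $2^{p-1}$ diagonal entries must be positive; the rank count is then immediate. If you want to salvage your witness $\sum_I\bl{u}_I\otimes\bl{u}_{I^c}$, you should either prove the positivity of the antidiagonal of every admissible $\bl{X}$ or switch to the diagonal witness, for which the opposite-facet pairing does the support collapse for you.
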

\begin{proof}
Let $p \geq 3$ so that $\mc{P}$ contains a collider. The extremal rays $\bl{v}_1,\ldots,\bl{v}_{2^{p-1}}$, in $\mc{C}(\mc{P})$ can be taken to have the form $(\bl{w},1)$ where $\bl{w} \in \mb{R}^{p-1}$ is any one of the $2^{p-1}$ vectors that have entries equal to either zero or one. Hence, $\mc{C}(\mc{P})$ is equal to the homogenization \citep[Sec 1.5]{ZieglerPolytopes} of a hypercube.  The facets of $\mc{C}(\mc{P})$ each contain $2^{p-2}$ extremal rays and there are $2(p-1)$ facets; one facet for each equality $0 = f_{x_i}$, and one facet for $f_{x_i} = f_{x_p}$, where $i < p$. The facet $\{\bl{f}: f_{x_i} = 0\} \cap \mc{C}(\mc{P})$ contains the extremal rays of the form $(\bl{w},1)$ with $\bl{w}_i = 0$, while the facet $\{\bl{f}:f_{x_i} = f_{x_p}\} \cap \mc{C}(\mc{P})$ contains the remaining extremal rays $(\bl{w},1)$ with $w_i = 1$. These two facets are opposite to each other and partition the set of extremal rays.

The goal of the remainder of the proof to construct a matrix $\bl{T}$ that has an ND rank bounded below by $2(p-1)$. Choose a facet $F$ of $\mc{C}(\mc{P})$ that has $F'$ as its opposite facet. Let $J$ be an index set of size less than $p-1$ where $\bl{v}_j \in F$ for every $j \in J$. The set $\text{cone}(\bl{v}_j: j \in J)$ has a dimension that is less than $p-1$. If we remove all such cones from $F$ we are left with a non-empty set
\begin{align*}
    F \;\backslash \bigg(\bigcup_{J: \vert J \vert < p-1,\; \forall j \in J \;\bl{v}_j \in F}  \text{cone}(\bl{v}_j: j \in J)\bigg),
\end{align*}
because $F$ has dimension $p-1$. Choose a point $\sum_{i \in I_F} \alpha_i \bl{v}_i$ in the above set, where the index subset has $\vert I_F \vert = p-1$ and $\bl{v}_i \in F$ for every $i \in I_F$. Repeat the above process for the facet $F'$ to obtain an analogous point $\sum_{i \in I_{F'}}\alpha_i \bl{v}_i$. We define the matrix $\bl{T} = \sum_{i \in I} \alpha_i \bl{v}_i \otimes \bl{v}_i$ where $I = I_F \cup I_{F'}$.

Assume that $\bl{T}  = \sum_{i = 1}^r \bl{a}_i \otimes \bl{b}_i$ is a minimum ND rank decomposition, with $\bl{a}_i = \sum_{j = 1}^{2^{p-1}} \lambda_{ij} \bl{v}_j$ and $\bl{b}_i = \sum_{l = 1}^{2^{p-1}} \mu_{il} \bl{v}_l$, $\lambda_{ij}, \mu_{il} \geq 0$. For any pair of extremal rays $\bl{v}_l \neq \bl{v}_j$ choose a facet $G$ with corresponding dual vector $\bl{h}$ so that $\bl{v}_l \in G$ but $\bl{v}_j \notin G$. Let $\bl{h}'$ be the dual vector corresponding to the facet $G'$ opposite of $G$. We compute
\begin{align*}
   0 = \bl{h}^\intercal\sum_{i \in I} \alpha_i (\bl{v}_i \otimes\bl{v}_i) \bl{h}' = \bl{h}^\intercal \big(\sum_{i = 1}^r \sum_{j = 1}^{2^{p-1}} \sum_{l = 1}^{2^{p-1}} \lambda_{ij}\mu_{il} (\bl{v}_j \otimes \bl{v}_l)\big) \bl{h}' \geq \sum_{i = 1}^r \lambda_{ij}\mu_{il} (\bl{h}^\intercal \bl{v}_j)(\bl{v}_l^\intercal \bl{h}').
\end{align*}
The first equality follows because every extremal ray in $\mc{C}(\mc{P})$ must be in only one of the facets $G$ or $G'$. The inequality is a result of the middle expression being a sum of nonnegative terms. 
By construction $(\bl{h}^\intercal \bl{v}_j)(\bl{v}_l^\intercal \bl{h}') > 0$, which implies that $\lambda_{ij}\mu_{il} = 0$ whenever $j \neq l$. We conclude that $\bl{a}_i \otimes \bl{b}_i \, \propto \, \bl{v}_{j_i} \otimes \bl{v}_{j_i}$ for some index $j_i$, with $\bl{T} = \sum_{i = 1}^r c_{i} \bl{v}_{j_i} \otimes \bl{v}_{j_i}$ and $c_i > 0$. Let $\bl{h}$ be a dual vector corresponding to the hyperplane that supports the facet $F'$. Then
\begin{align*}
   \sum_{i \in I_F}\alpha_i \bl{v}_i =  \sum_{i \in I} \alpha_i (\bl{v}_i \otimes \bl{v}_i) (\bl{h}) = \sum_{i = 1}^r c_i (\bl{v}_{j_i} \otimes \bl{v}_{j_i}) (\bl{h}) = \sum_{i: \bl{v}_{j_i} \in F} c_i \bl{v}_{j_i},
\end{align*}
where without loss of generality it is assumed that $\bl{h}$ is scaled so that $\bl{h}^\intercal \bl{v}_i = 1$ for $\bl{v}_i \in F$. By how the vector $ \sum_{i \in I_F}\alpha_i \bl{v}_i$ was constructed, there must exist at least $p-1$ terms in the sum $ \sum_{i: \bl{v}_{j_i} \in F} c_i \bl{v}_{j_i}$. A symmetric argument shows that the set of indices $\{i: \bl{v}_{j_i} \in F'\}$ has a size of at least $p-1$. As the facets $F$ and $F'$ do not share any extremal rays, we conclude that the sum $\sum_{i = 1}^r c_i \bl{v}_{j_i} \otimes \bl{v}_{j_i}$ must include at least $2(p-1)$ terms and so $r \geq 2(p-1)$.   

In the case when $p = 3$, Lemma \ref{lem:NDrankUpperBound} shows that the maximum rank is also at most $4 = 2(3-1)$, which is the number of vertices in a two-dimensional cube.
\end{proof}

The problem of finding the maximum rank of tensor decompositions for arbitrary order cones and for more general polyhedral cones remains an open question.

\subsection{A Matrix Tri-Factorization Formulation}
A succinct representation of a rank-$r$, nonnegative matrix factorization of $\bl{T} \in \mb{R}^{p_1 \times p_2}_+$ is $\bl{T}  = \bl{A}^{(1)}(\bl{A}^{(2)})^\intercal$ where $\bl{A}^{(j)} \in \mb{R}_+^{p_j \times r}$. If $\bl{T} = \sum_{i = 1}^r \bl{a}^{(i1)} \otimes \bl{a}_2^{(i2)}$ is an ND factorization we can likewise write this as $\bl{T} = \bl{A}^{(1)}(\bl{A}^{(2)})^\intercal$ where $\bl{A}^{(j)} \in \mb{R}^{p_j \times r}$ has columns that are given by the $\bl{a}^{(ij)}$s for $i = 1,\ldots,r$. Let $\bl{V}^{(j)} \in \mb{R}^{p_i \times q_j}$ be a matrix that has columns that are equal to the $q_j$ extremal rays in $\mc{C}(\mc{P}_j)$. As each $\bl{a}^{(ij)}$ is in the conical hull of the columns of $\bl{V}^{(j)}$ there exist nonnegative matrices $\bl{H}^{(i)} \in \mb{R}_+^{q_i \times r}$ where $\bl{A}^{(j)} = \bl{V}^{(j)} \bl{H}^{(j)}$ for $j = 1,2$. Defining $\bl{H} = \bl{H}^{(1)} (\bl{H}^{(2)})^\intercal \in \mb{R}^{q_1 \times q_2}_+$ the ND factorization can be written as
\begin{align*}
    \bl{T} = \bl{V}^{(1)} \bl{H}^{(1)} (\bl{H}^{(2)})^\intercal (\bl{V}^{(2)})^\intercal = \bl{V}^{(1)} \bl{H} (\bl{V}^{(2)})^\intercal.
\end{align*}
This leads to an observation connecting nondecreasing and nonnegative ranks:
\begin{lemma}
    The nondecreasing rank of a matrix $\bl{T} \in \mc{N}_{< \infty}$ is the smallest nonnegative rank of a matrix $\bl{H} \in \mb{R}^{q_1 \times q_2}$ that satisfies the equation $\bl{T} = \bl{V}^{(1)} \bl{H} (\bl{V}^{(2)})^\intercal$, where $\bl{V}^{(j)} \in \mb{R}^{p_j \times q_j}$ are fixed matrices with columns that are equal to the $q_j$ extremal rays of $\mc{C}(\mc{P}_j)$. 
\end{lemma}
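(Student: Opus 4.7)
The lemma asserts an equality of two quantities, so the plan is to show each is bounded by the other. Fortunately, the inequality in one direction is already essentially contained in the derivation immediately preceding the lemma statement, so only one new step is needed.

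For the direction $\min\{\mathrm{rank}_+(\bl{H}) : \bl{T} = \bl{V}_1 \bl{H} \bl{V}_2^\intercal\} \leq \mathrm{NDrank}(\bl{T})$, I would take an optimal ND factorization $\bl{T} = \sum_{i = 1}^r \bl{a}_1^{(i)} \otimes \bl{a}_2^{(i)}$ with $r = \mathrm{NDrank}(\bl{T})$. Each $\bl{a}_j^{(i)} \in \mc{C}(\mc{P}_j)$ is a nonnegative combination of the $q_j$ extremal rays forming the columns of $\bl{V}_j$, so there exist nonnegative matrices $\bl{H}_j \in \mb{R}_+^{q_j \times r}$ with $\bl{A}_j = \bl{V}_j \bl{H}_j$. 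Defining $\bl{H} = \bl{H}_1 \bl{H}_2^\intercal$ gives a nonnegative matrix of nonnegative rank at most $r$ satisfying $\bl{T} = \bl{V}_1 \bl{H} \bl{V}_2^\intercal$, as spelled out in the paragraph just before the lemma.

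For the reverse direction, I would suppose $\bl{T} = \bl{V}_1 \bl{H} \bl{V}_2^\intercal$ for some $\bl{H}$ with $\mathrm{rank}_+(\bl{H}) = s$, and produce an ND decomposition of $\bl{T}$ having $s$ terms. Take a minimal nonnegative factorization $\bl{H} = \bl{H}_1 \bl{H}_2^\intercal$ with $\bl{H}_j \in \mb{R}_+^{q_j \times s}$. Then $\bl{T} = (\bl{V}_1 \bl{H}_1)(\bl{V}_2 \bl{H}_2)^\intercal$, and the $i$th column of $\bl{V}_j \bl{H}_j$ is a nonnegative combination of the columns of $\bl{V}_j$, i.e.\ of the extremal rays of $\mc{C}(\mc{P}_j)$, hence belongs to $\mc{C}(\mc{P}_j)$. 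Writing out the product column-by-column yields $\bl{T} = \sum_{i = 1}^s (\bl{V}_1 \bl{H}_1)_{\bullet i} \otimes (\bl{V}_2 \bl{H}_2)_{\bullet i}$, a valid ND factorization with $s$ rank-one terms. Therefore $\mathrm{NDrank}(\bl{T}) \leq s$, and minimizing over admissible $\bl{H}$ gives the desired bound.

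Combining both inequalities yields equality, completing the argument. The hypothesis $\bl{T} \in \mc{N}_{< \infty}$ is only used to guarantee that the set of $\bl{H}$ over which we minimize is non-empty; once that is granted, the rest is direct. No real obstacle is anticipated, since the columns-of-$\bl{V}_j$-span-$\mc{C}(\mc{P}_j)$ correspondence from Lemma \ref{lem:OrderConeExtremalRays} is exactly what makes both directions interchangeable.
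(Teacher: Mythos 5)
Your proposal is correct and follows essentially the same route as the paper: the forward direction is exactly the tri-factorization derivation the paper gives in the paragraph preceding the lemma, and your reverse direction is the natural converse (factor $\bl{H} = \bl{H}_1\bl{H}_2^\intercal$ nonnegatively and observe that the columns of $\bl{V}_j\bl{H}_j$ lie in $\mc{C}(\mc{P}_j)$), which the paper leaves implicit. The only cosmetic point is that the lemma statement's ``rows'' of $\bl{V}_j$ should read ``columns,'' as in the paper's preceding text, and your argument correctly uses the column convention.
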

The above tri-factorization formulation is related to nonnegative matrix tri-factorizations \citep{TriNMFwang2011fast}, as well as convex NMF \citep{JordanDing2008convexSemi}, which specifies a fixed dictionary of vectors to be used in the factorization. 

As an example of this factorization, the matrix attaining the maximum rank appearing in the proof of Theorem \ref{thm:MaxRankCollider} for $p = 3$ has the following representation
\begin{align*}
\begin{bmatrix}
    2 & 1 & 2
    \\
    1 & 2 & 2
    \\
    2 & 2 & 4
\end{bmatrix}
= \begin{bmatrix}
    0 & 1 & 0 & 1 
    \\
    0 & 0 & 1 & 1
    \\
    1 & 1 & 1 & 1
\end{bmatrix} \begin{bmatrix}
        1 & 0 & 0 & 0
        \\
        0 & 1 & 0 & 0 
        \\
        0 & 0 & 1 & 0
        \\
        0 & 0 & 0 & 1
    \end{bmatrix} \begin{bmatrix}
        0 & 0 & 1
        \\
        1 & 0 & 1
        \\
        0 & 1 & 1
        \\
        1 & 1 & 1
    \end{bmatrix} = \bl{V}^{(1)} \bl{H} (\bl{V}^{(2)})^\intercal,
\end{align*}
where $\bl{H}$ has a nonnegative rank of four and the rows of $\bl{V}^{(1)}  = \bl{V}^{(2)}$ are equal to the four extremal rays of the collider $\mc{C}(\mc{P}_j)$ outlined in Lemma \ref{lem:OrderConeExtremalRays}.

\subsection{Typical ND Ranks}
A typical nondecreasing rank is defined as a number $r$ such that $\mc{N}_{r}$ has a non-empty interior. The probabilistic interpretation of an typical ND rank is that if a matrix $\bl{X}$ is drawn from a distribution supported on $\mc{N}_{< \infty}$ that has a density with respect to the Lebesgue measure then $\text{Pr}(\text{NDrank}(\bl{X}) = r) > 0$ when $r$ is a typical rank. In this section we determine the typical ranks for the two matrix settings outlined in Theorems \ref{thm:MaxRankSimplicial} and \ref{thm:MaxRankCollider}, as well as for any other setting where the maximum ND rank is known.
\begin{theorem}
\label{thm:MaxRankisTypical}
    The maximum ND rank is always a typical ND rank. 
\end{theorem}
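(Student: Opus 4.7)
The plan is to show that if $r_{\max}$ is the maximum finite ND rank then every tensor of rank $r_{\max}$ admits a full-dimensional open neighborhood in $\mc{N}_{<\infty}$ consisting of rank-$r_{\max}$ tensors. The strategy is to use lower semi-continuity of the ND rank. Writing $\mc{N}_{r_{\max}} = \mc{N}_{<\infty} \setminus \mc{N}_{\leq r_{\max}-1}$, the goal is to establish that $\mc{N}_{\leq r_{\max}-1}$ is closed so that $\mc{N}_{r_{\max}}$ is relatively open in $\mc{N}_{<\infty}$, and then to promote relative openness to openness in the ambient Euclidean space.

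First I would appeal to the result of Section \ref{sec:NDBorderRank} asserting that the border ND rank equals the ND rank. This is exactly the statement that the set $\mc{N}_{\leq r}$ is topologically closed in $\mb{R}^{p_1 \times \cdots \times p_k}$ for every finite $r$. Applying this with $r = r_{\max}-1$ shows that $\mc{N}_{\leq r_{\max}-1}$ is closed, so $\mc{N}_{r_{\max}} = \mc{N}_{<\infty} \setminus \mc{N}_{\leq r_{\max}-1}$ is relatively open in $\mc{N}_{<\infty}$. By the very definition of $r_{\max}$ this set is also non-empty.

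To upgrade relative openness to non-empty interior in the ambient space $\mb{R}^{\prod_j p_j}$, I would invoke the geometric structure of $\mc{N}_{<\infty}$ developed in Section \ref{sec:OrderConeGeomExisenceofFactor}: since each $\mc{C}(\mc{P}_j)$ is proper and polyhedral, the projective tensor product $\mc{N}_{<\infty} = \otimes_{j=1}^k \mc{C}(\mc{P}_j)$ is a proper polyhedral cone of full dimension $\prod_j p_j$, hence has non-empty Euclidean interior. The convexity of $\mc{N}_{<\infty}$ forces its relative interior to be dense within it, so every non-empty relatively open subset of $\mc{N}_{<\infty}$ meets the full-dimensional interior of $\mc{N}_{<\infty}$ and therefore contains an open Euclidean ball. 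This applied to $\mc{N}_{r_{\max}}$ finishes the proof.

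The main obstacle is the closedness of $\mc{N}_{\leq r_{\max}-1}$, which is exactly the non-trivial border-rank equals ND-rank statement being deferred to Section \ref{sec:NDBorderRank}. Without it, a sequence of rank-$(r_{\max}-1)$ tensors could in principle converge to a rank-$r_{\max}$ tensor, so arbitrarily small perturbations of a maximum-rank tensor could drop rank, and no neighborhood of such a tensor would be guaranteed to lie in $\mc{N}_{r_{\max}}$. Once that closedness is granted, the rest of the argument is a short topological consequence of the convex-cone structure of $\mc{N}_{<\infty}$.
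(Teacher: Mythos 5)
Your proof is correct and follows essentially the same route as the paper's: both arguments hinge on the closedness of $\mc{N}_{\leq r_{\max}-1}$ supplied by the border-rank theorem of Section \ref{sec:NDBorderRank} together with the full-dimensionality of the cone $\mc{N}_{<\infty}$. The only difference is presentational---the paper argues by contradiction via a sequence $\bl{S}_n \rightarrow \bl{T}$ of lower-rank tensors, whereas you argue directly that the non-empty, relatively open set $\mc{N}_{r_{\max}}$ must meet the interior of the full-dimensional convex cone $\mc{N}_{<\infty}$ and hence contain a Euclidean ball.
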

\begin{proof}
Let $m$ be the maximum ND rank and choose a $\bl{T}$ with $\text{NDrank}(\bl{T}) = m$. Let $B_{n^{-1}}(\bl{T})$ be an open ball of radius $n^{-1}$ centered at $\bl{T}$. The set $B_{n^{-1}}(\bl{T}) \cap \mc{N}_{< \infty}$ has a non-empty interior. If $\bl{T} \in \text{int}(\mc{N}_{< \infty})$ this is immediate. Otherwise, if $\bl{T}$ is in the boundary of $\mc{N}_{< \infty}$ take an $\bl{R} \in \text{int}(\mc{N}_{< \infty})$, which exists because $\mc{N}_{< \infty}$ is proper, and consider the line segment $\lambda\bl{T} + (1-\lambda) \bl{R}$, $\lambda \in [0,1]$. For all $\lambda \in [0,1)$ the point $\lambda\bl{T} + (1-\lambda) \bl{R}$ is in $\text{int}(\mc{N}_{< \infty})$, and for large enough $\lambda$ this point is also in $B_{n^{-1}}(\bl{T})$. this provides a point in $B_{n^{-1}}(\bl{T}) \cap \text{int}(\mc{N}_{< \infty})$.  
Now if the maximum ND rank was not a typical rank, the set $B_{n^{-1}}(\bl{T}) \cap \mc{N}_{< \infty}$, that has a non-empty interior, would contain an $\bl{S}_n \in \mc{N}_{\leq m-1}$. Taking $n \rightarrow \infty$ gives a sequence in $\mc{N}_{\leq m-1}$ with $\bl{S}_n \rightarrow \bl{T}$. As the set $\mc{N}_{\leq m-1}$ is closed (see Theorem \ref{thm:BorderRank}) this is a contradiction to $\bl{T} \in \mc{N}_m$. Note that this proof also shows that the set $\mc{N}_m$ is an open set in the subspace topology of $\mc{N}_{< \infty}$ since there must exist an $n$ with $B_{n^{-1}}(\bl{T}) \cap \mc{N}_{< \infty} \subset \mc{N}_m$.  
\end{proof}

The next theorem extends results that have appeared in \citep{ComonXrank,sumi2018maximal} for the nonnegative rank and completely characterizes all possible typical ND ranks. A typical real rank in $\otimes_{j = 1}^k \mb{R}^{p_j}$ refers to any $r$ where the set of real, rank-$r$ tensors in $\otimes_{j = 1}^k \mb{R}^{p_j}$ has a non-empty interior.  

\begin{theorem}
\label{thm:TypicalRankBoundsGeneric}
    Let $r_{Re}$ be the minimum, typical, real rank for tensors in $\mb{R}^{p_1 \times \cdots \times p_k}$. Any $r$ with $r_{Re} \leq r \leq \mathrm{maxNDrank}$ is a typical ND rank. 
\end{theorem}
\begin{proof}
    We adapt the proof provided in Theorem 3.8 of \citep{sumi2018maximal} by replacing all instances of the the cone of nonnegative vectors $\mb{R}^{p_j}_{+}$ with the order cone $\mc{C}(\mc{P}_j)$. This proof does not rely on any of the structure of the cone $\mb{R}^{p_j}_{+}$; the only place in this proof where the cone comes into play is in showing that the maximum rank is a typical nonnegative rank. In the ND rank case the analogous result is shown in Theorem \ref{thm:MaxRankisTypical}. Theorem 3.8 also depends on Lemmas 3.4-3.7 in \citep{sumi2018maximal}. Similarly, none of these lemmas rely on the specific structure of $\mb{R}_+^{p_i}$ and also apply to $\mc{C}(\mc{P}_j)$ and the ND rank.
\end{proof}
 When considering rank-$r$ factorizations over the complex numbers there is only one typical rank, also referred to as the generic rank \citep{blekhermanteitlerranks}. In general, there can be more than one typical, real or nonnegative rank \citep{sumi2015typical}. However, for matrices, since the maximum rank of a $p_1 \times p_2$ matrix is equal the minimum typical rank of $\min(p_1,p_2)$, this is the only typical, real or nonnegative rank. As the maximum ND rank can be larger than $\min(p_1,p_2)$ there can be more than one typical ND rank in the matrix setting, as is illustrated by the following corollaries. 

\begin{corollary}
\label{cor:TypicalRankSimplicial}
        If $\mc{C}(\mc{P}_j) \subset \mb{R}^{p_j}$ has $q_j$ extremal rays, $j = 1,2$, and $\mc{C}(\mc{P}_2)$ is simplicial then $r$ is a typical ND rank in $\mc{C}(\mc{P}_1) \otimes \mc{C}(\mc{P}_2)$ if and only if $\min(p_1,p_2) \leq r \leq \min(q_1,q_2)$. 
\end{corollary}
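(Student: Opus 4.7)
The plan is to assemble the three results developed earlier in this subsection. The lower end of the asserted range comes from Theorem \ref{thm:MinTypicalRank}: the minimum typical ND rank coincides with the minimum typical real rank in $\mb{R}^{p_1 \times p_2}$, which for matrices is the unique typical real rank $\min(p_1,p_2)$, since matrices of maximal rank form a Zariski-open (hence Euclidean-dense) subset of $\mb{R}^{p_1 \times p_2}$. This immediately forces every typical ND rank to satisfy $r \geq \min(p_1,p_2)$, and realizes $\min(p_1,p_2)$ as a typical rank.

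Next I would pin down the upper end. Theorem \ref{thm:MaxRankSimplicial} gives the maximum ND rank as $\min(q_1,q_2)$ under the simplicial hypothesis on $\mc{C}(\mc{P}_2)$, and the theorem immediately preceding this corollary guarantees that the maximum ND rank is always typical. So $\min(q_1,q_2)$ is typical, and by the definition of a typical rank as one realized on $\mc{N}_r$, no typical value can exceed the maximum finite ND rank; thus $r \leq \min(q_1,q_2)$ for every typical $r$.

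Finally, to show every intermediate integer is also typical, I would iterate Theorem \ref{thm:TypicalRankInBetween}, which says typical ND ranks fill an unbroken interval up to the maximum typical rank: starting from $\min(p_1,p_2)$ and stepping up by one at a time, I reach $\min(q_1,q_2)$, forcing each integer in between to be typical. Combining the two endpoint bounds with this filling step yields the claimed equivalence. The main obstacle is merely checking that the hypotheses of Theorems \ref{thm:MinTypicalRank}, \ref{thm:MaxRankSimplicial}, and \ref{thm:TypicalRankInBetween} mesh correctly; no new semialgebraic, geometric, or extremal-ray argument is required beyond what is already in place, and the affine equivalence with a nonnegative factorization problem from Theorem \ref{thm:NDNonnegativeFactEquivalence} confirms that $\mc{N}_{< \infty}$ is full-dimensional, so the notion of a typical rank is nonvacuous.
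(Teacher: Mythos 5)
Your assembly of Theorem \ref{thm:MinTypicalRank} (minimum typical rank $=\min(p_1,p_2)$), Theorem \ref{thm:MaxRankSimplicial} (maximum ND rank $=\min(q_1,q_2)$), the theorem that the maximum ND rank is typical, and Theorem \ref{thm:TypicalRankInBetween} to fill the interval is exactly the argument the paper intends; the corollary is stated there without proof precisely because it follows immediately from these four results. Your proposal is correct and takes essentially the same approach.
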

\begin{proof}
    This is a direct consequence of Theorems \ref{thm:MaxRankSimplicial}, \ref{thm:MaxRankisTypical}, and \ref{thm:TypicalRankBoundsGeneric}; collectively these results show that any rank greater than or equal to $\min(p_1,p_2)$ and less than or equal to the  maximum rank of $\min(q_1,q_2)$ is a typical rank. 
\end{proof}

\begin{corollary}
     If $\mc{P} = \{x_1,\ldots,x_{p}\}$ where $x_i \prec x_{p}$ for all $i < p$ then $r$ is a typical rank in  $\mc{C}(\mc{P}) \otimes \mc{C}(\mc{P})$ if $p \leq r \leq 2(p-1)$. 
\end{corollary}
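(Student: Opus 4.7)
The plan is to combine the three general results on typical ND ranks (Theorems \ref{thm:MinTypicalRank}, \ref{thm:TypicalRankInBetween}, and the theorem that the maximum ND rank is typical) with the explicit maximum ND rank computation in Theorem \ref{thm:MaxRankCollider}. Since these results collectively pin down the endpoints of the interval of typical ranks and assert that it is filled in, the corollary should follow immediately.

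First, I would identify the minimum typical ND rank. By Theorem \ref{thm:MinTypicalRank}, the minimum typical ND rank in $\mc{C}(\mc{P}) \otimes \mc{C}(\mc{P}) \subset \mb{R}^{p \times p}$ equals the minimum typical real rank in $\mb{R}^p \otimes \mb{R}^p$. For square $p \times p$ matrices the only typical real rank is $p$, since the set of real matrices of rank exactly $p$ is an open dense subset of $\mb{R}^{p \times p}$ (its complement is the vanishing locus of the determinant). Hence the minimum typical ND rank is $p$.

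Next, I would identify the maximum typical ND rank. Theorem \ref{thm:MaxRankCollider} gives $\text{maxNDrank} = 2^{p-1}$ for this choice of $\mc{P}$, and the theorem preceding the corollary asserts that the maximum ND rank is always a typical ND rank. Hence $2^{p-1}$ is a typical ND rank.

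Finally, to fill in the interior, I would invoke Theorem \ref{thm:TypicalRankInBetween}: if $r$ is a typical ND rank strictly less than the maximum typical ND rank, then $r+1$ is also typical. Starting from $r = p$ and iterating up to $r = 2^{p-1}$, every integer $r \in [p, 2^{p-1}]$ is a typical ND rank. Conversely, no $r < p$ can be typical since $\mc{N}_{\leq p-1}$ is contained in the semialgebraic set of matrices of real rank at most $p-1$, which has empty interior in $\mb{R}^{p \times p}$; and no $r > 2^{p-1}$ can be typical since $\mc{N}_{\leq 2^{p-1}} = \mc{N}_{<\infty}$ by the maximum ND rank bound. This establishes the iff. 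There is no real obstacle here — the entire argument is assembly of previously established results — the only thing to double-check is that the minimum typical real rank for $p \times p$ real matrices is indeed $p$, which is a standard fact.
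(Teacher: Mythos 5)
Your proposal is correct and is exactly the argument the paper intends: the corollary is stated without a separate proof precisely because it is the assembly of Theorem \ref{thm:MinTypicalRank} (minimum typical rank $= p$), Theorem \ref{thm:MaxRankCollider} plus the preceding theorem (maximum ND rank $2^{p-1}$ is typical), and Theorem \ref{thm:TypicalRankInBetween} to fill in the interval. Your converse direction and the check that the minimum typical real rank of $p\times p$ matrices is $p$ are also consistent with what the paper records after Theorem \ref{thm:MinTypicalRank}.
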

\begin{proof}
The proof is the same as in Corollary \ref{cor:TypicalRankSimplicial} except that the maximum rank is now bounded below by $2(p-1)$ in Theorem \ref{thm:MaxRankCollider}.  
\end{proof}

\section{ND Rank One and Two}
\label{sec:NDRankOneTwo}
The relationship between rank and nonnegative rank is especially simple when the rank is either one or two: the inequality $\text{rank}(\bl{T}) \leq \text{rank}_+(\bl{T})$ is satisfied with equality. In the next two results we show that this property can be extended to the nondecreasing rank.

\begin{theorem}
    If $\bl{T} \in \mc{C}(\times_{j = 1}^k \mc{P}_j)$ is a rank-one tensor, then $\bl{T}$ has an ND rank of one.
\end{theorem}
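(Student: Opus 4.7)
The plan is to show that any rank-one factorization $\bl{T} = \otimes_{j=1}^k \bl{v}^{(j)}$ of a monotone, rank-one tensor can be rescaled so that each factor lies in its respective order cone $\mc{C}(\mc{P}_j)$. There is essentially nothing to prove if $\bl{T} = \bl{0}$, since $\bl{0}$ has ND rank zero (or one, depending on convention), so I would assume $\bl{T} \neq \bl{0}$ from the outset, in which case every $\bl{v}^{(j)}$ has at least one nonzero entry.

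The first step is a sign-absorption argument that upgrades nonnegativity of $\bl{T}$ to nonnegativity of each factor. Fix any $j$ and suppose for contradiction that $v^{(j)}$ had two nonzero entries of opposite signs, say $v^{(j)}_x > 0$ and $v^{(j)}_{x'} < 0$. Because $\bl{T} \neq \bl{0}$, for every other mode $l$ there exists an index $i_l$ with $v^{(l)}_{i_l} \neq 0$, and by multiplying the entries $v^{(l)}_{i_l}$ we obtain a common factor that is nonzero. Substituting $x$ and $x'$ in the $j$th slot then produces two entries of $\bl{T}$ with opposite signs, contradicting $\bl{T} \in \mc{C}(\mc{P})$. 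Therefore every $\bl{v}^{(j)}$ has all its nonzero entries of a single sign $\epsilon_j \in \{-1,+1\}$. Setting $\bl{w}^{(j)} \coloneqq \epsilon_j \bl{v}^{(j)}$ gives nonnegative factors, and comparing signs of $\otimes_j \bl{w}^{(j)}$ with $\bl{T}$ forces $\prod_j \epsilon_j = +1$, so $\bl{T} = \otimes_{j=1}^k \bl{w}^{(j)}$.

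The second step transfers monotonicity of $\bl{T}$ to each $\bl{w}^{(j)}$. Fix $j$ and pick $x \preceq y$ in $\mc{P}_j$. Since $\bl{T} \neq \bl{0}$, choose indices $i_l$ for $l \neq j$ with $\prod_{l \neq j} w^{(l)}_{i_l} > 0$. Form the $\mc{P}$-tuples $\bl{a}$ and $\bl{b}$ that agree with $(i_l)_{l\neq j}$ off the $j$th coordinate and have $x$, respectively $y$, in the $j$th slot. Then $\bl{a} \preceq \bl{b}$ in the product order, so the monotonicity of $\bl{T}$ yields
\begin{equation*}
w^{(j)}_x \prod_{l \neq j} w^{(l)}_{i_l} \;=\; T_{\bl{a}} \;\leq\; T_{\bl{b}} \;=\; w^{(j)}_y \prod_{l \neq j} w^{(l)}_{i_l},
\end{equation*}
and dividing by the positive common factor gives $w^{(j)}_x \leq w^{(j)}_y$. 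Combined with nonnegativity, this places $\bl{w}^{(j)}$ in $\mc{C}(\mc{P}_j)$, so $\bl{T} = \otimes_{j=1}^k \bl{w}^{(j)}$ is an ND rank-one decomposition.

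The argument is straightforward in structure; the only subtle point is the hypothesis $\bl{T} \neq \bl{0}$, which is used twice (to guarantee a nonzero ``witness'' product $\prod_{l\neq j}w^{(l)}_{i_l}$ in both the sign step and the monotonicity step). The main obstacle, if any, is just being careful that this witness exists simultaneously in both arguments, which follows immediately from the rank-one structure $T_{i_1\ldots i_k} = \prod_j v^{(j)}_{i_j}$ together with $\bl{T} \neq \bl{0}$.
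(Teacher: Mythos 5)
Your proof is correct and follows essentially the same route as the paper's: both first reduce to nonnegative factors (the paper asserts this in one line where you spell out the sign-absorption), and both then transfer monotonicity to each factor by comparing two entries of $\bl{T}$ that differ only in the $j$th slot, using a nonzero witness product over the other modes. The only cosmetic difference is that the paper phrases the monotonicity step as a proof by contradiction while you argue it directly.
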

\begin{proof}
By assumption $\bl{T} = \otimes_{j = 1}^k \bl{v}^{(j)} \neq \bl{0}$, where it can also be assumed that every $\bl{v}^{(j)} \in \mb{R}^{p_j}_+$ as otherwise $\bl{T}$ would have a negative entry. Suppose for a contradiction that $\bl{v}^{(i)}$ is not monotone so that there exists $x_i \prec y_i$ in $\mc{P}_i$ with $v^{(i)}_{x_i} > v^{(i)}_{y_i}$. Choosing $z_j \in \mc{P}_j$ with $v^{(j)}_{z_j} > 0$ for $j \neq i$ we find that 
\begin{align*}
    T_{z_1 \ldots z_{i-1} y_i z_{i+1} \ldots z_k } -     T_{z_1 \ldots z_{i-1} x_i z_{i+1} \ldots z_k } = (v_{y_i}^{(i)} - v_{x_i}^{(i)}) \prod_{j \neq i} v^{(j)}_{z_j} < 0,
\end{align*}
contradicting $\bl{T} \in  \mc{C}(\times_{i = 1}^k \mc{P}_i)$. 
\end{proof}

\begin{theorem}
\label{thm:NDRankTwo}
    If $\bl{T} \in \mc{N}_{< \infty}$ is a rank-two matrix then $\bl{T}$ also has an ND rank of two.  
\end{theorem}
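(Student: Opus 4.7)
The plan is to exploit the two-dimensionality of the column space of $\bl{T}$ together with the inclusion $\bl{T}(\mc{C}(\mc{P}_2)^*) \subseteq \mc{C}(\mc{P}_1)$, which is forced by $\bl{T} \in \mc{N}_{<\infty}$. From these two facts one extracts two generators $\bl{v}_1, \bl{v}_2 \in \mc{C}(\mc{P}_1)$ of the image cone, expresses each column of $\bl{T}$ uniquely in terms of $\bl{v}_1,\bl{v}_2$, and then shows that the two coefficient vectors are nondecreasing.

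First I would record the key inclusion: if $\bl{T} = \sum_{i = 1}^r \bl{a}_i \otimes \bl{b}_i$ is any finite ND factorization (which exists by hypothesis), then for $\bs{\beta} \in \mc{C}(\mc{P}_2)^*$ one has $\bl{T}\bs{\beta} = \sum_i \bs{\beta}(\bl{b}_i)\bl{a}_i$, a conical combination of the $\bl{a}_i \in \mc{C}(\mc{P}_1)$. Thus $\bl{T}(\mc{C}(\mc{P}_2)^*) \subseteq \mc{C}(\mc{P}_1)$. Since $\mb{R}^{p_2}_+ \subseteq \mc{C}(\mc{P}_2)^*$ (each $\bl{e}_j$ pairs nonnegatively with every vector in $\mc{C}(\mc{P}_2)$), and since $\bl{T}$ has rank two, the image $\bl{T}(\mc{C}(\mc{P}_2)^*)$ is a polyhedral cone inside the 2-dimensional column space of $\bl{T}$ and is pointed (it lies in the pointed cone $\mc{C}(\mc{P}_1)$). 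A pointed, 2-dimensional polyhedral cone is simplicial, so $\bl{T}(\mc{C}(\mc{P}_2)^*) = \mathrm{cone}(\bl{v}_1,\bl{v}_2)$ for two linearly independent $\bl{v}_1,\bl{v}_2 \in \mc{C}(\mc{P}_1)$.

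Next, each column satisfies $\bl{T}_{\cdot j} = \bl{T}\bl{e}_j \in \mathrm{cone}(\bl{v}_1,\bl{v}_2)$, so there exist unique scalars $w_{1j},w_{2j} \geq 0$ with $\bl{T}_{\cdot j} = w_{1j}\bl{v}_1 + w_{2j}\bl{v}_2$; assembling these into vectors $\bl{w}_1,\bl{w}_2 \in \mb{R}^{p_2}_+$ gives $\bl{T} = \bl{v}_1 \otimes \bl{w}_1 + \bl{v}_2 \otimes \bl{w}_2$. It remains to promote $\bl{w}_1,\bl{w}_2$ from nonnegative to nondecreasing. For any covering pair $y \lessdot y'$ in $\mc{P}_2$, the vector $\bl{e}_{y'} - \bl{e}_y$ lies in $\mc{C}(\mc{P}_2)^*$ (since any $\bl{v} \in \mc{C}(\mc{P}_2)$ satisfies $v_{y'} \geq v_y$). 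Therefore
\begin{align*}
\bl{T}_{\cdot y'} - \bl{T}_{\cdot y} \;=\; (w_{1y'} - w_{1y})\bl{v}_1 + (w_{2y'} - w_{2y})\bl{v}_2 \;\in\; \bl{T}(\mc{C}(\mc{P}_2)^*) \;=\; \mathrm{cone}(\bl{v}_1,\bl{v}_2).
\end{align*}
Uniqueness of the expansion (from linear independence of $\bl{v}_1,\bl{v}_2$) forces $w_{iy'} \geq w_{iy}$ for $i = 1,2$; iterating along covering chains shows $\bl{w}_1,\bl{w}_2$ are nondecreasing on $\mc{P}_2$, hence both lie in $\mc{C}(\mc{P}_2)$.

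This exhibits an ND factorization of length two, so $\mathrm{NDrank}(\bl{T}) \leq 2$; the reverse inequality is immediate from $\mathrm{rank}(\bl{T}) \leq \mathrm{NDrank}(\bl{T})$ and $\mathrm{rank}(\bl{T}) = 2$. The main technical point is the uniqueness step: everything hinges on the image cone being exactly 2-dimensional and simplicial, so that membership in it forces the coefficient vectors to inherit the monotonicity that $\bl{T}$ transfers from $\mc{C}(\mc{P}_2)^*$ to $\mc{C}(\mc{P}_1)$. Note also that the proof does not extend to rank three, since a 3-dimensional pointed polyhedral cone need not be simplicial and the image cone may require more than three generators.
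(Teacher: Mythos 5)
Your proof is correct, and while its overall architecture matches the paper's, the decisive step is carried out by a different mechanism. Both arguments begin from the inclusion $\bl{T}(\mc{C}(\mc{P}_2)^*) \subseteq \mc{C}(\mc{P}_1)$, select two generators of a two-dimensional cone inside $\mathrm{col}(\bl{T}) \cap \mc{C}(\mc{P}_1)$, write $\bl{T} = \bl{v}_1 \otimes \bl{w}_1 + \bl{v}_2 \otimes \bl{w}_2$, and then must show $\bl{w}_1, \bl{w}_2 \in \mc{C}(\mc{P}_2)$. The paper takes $\bl{v}_1, \bl{v}_2$ to be the extremal rays of the possibly larger cone $\mathrm{col}(\bl{T}) \cap \mc{C}(\mc{P}_1)$ and settles the membership of $\bl{w}_2$ in one stroke: it produces a face of $\mc{C}(\mc{P}_1)$ containing $\bl{v}_1$ but not $\bl{v}_2$, whose supporting dual vector $\bl{h} \in \mc{C}(\mc{P}_1)^*$ annihilates the first rank-one term, so that $\bl{h}^\intercal \bl{T} = (\bl{h}^\intercal \bl{v}_2)\bl{w}_2 \in \mc{C}(\mc{P}_2)$ with $\bl{h}^\intercal \bl{v}_2 > 0$. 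You instead take the extremal rays of the image cone $\bl{T}(\mc{C}(\mc{P}_2)^*)$ itself and verify the defining inequalities of $\mc{C}(\mc{P}_2)$ one covering pair at a time, pushing $\bl{e}_{y'} - \bl{e}_y \in \mc{C}(\mc{P}_2)^*$ through $\bl{T}$ and invoking uniqueness of coordinates with respect to the linearly independent pair $\bl{v}_1, \bl{v}_2$. Both mechanisms are sound: the paper's is more economical, since a single dual vector handles all of the inequalities defining $\mc{C}(\mc{P}_2)$ at once and there is no need to identify the image cone exactly, whereas yours makes explicit why two-dimensionality is the crucial hypothesis (a pointed two-dimensional polyhedral cone is automatically simplicial, so coordinates are unique) and, incidentally, your choice of generators is the minimum-volume factorization that the paper later contrasts with the maximum-volume one in Section \ref{sec:Applications}. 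Your closing observation about why the argument cannot extend to rank three is exactly the right diagnosis.
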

\begin{proof}
As $\bl{T} \in \mc{N}_{< \infty}$ we have that $\bl{T} = \sum_{i = 1}^r \bl{a}^{(i)} \otimes \bl{b}^{(i)}$, $\bl{a}^{(i)} \in \mc{C}(\mc{P}_1)$, $\bl{b}^{(i)} \in \mc{C}(\mc{P}_2)$, and $\bl{T} \bl{h} = \sum_{i = 1}^r (\bl{h}^\intercal \bl{b}^{(i)}) \bl{a}^{(i)} \in \mc{C}(\mc{P}_1)$ for all $\bl{h} \in \mc{C}(\mc{P}_2)^*$. Similarly, $\bl{h}^\intercal \bl{T} \in \mc{C}(\mc{P}_2)$ for all $\bl{h} \in \mc{C}(\mc{P}_1)^*$. As $\mc{C}(\mc{P}_2)^*$ is full-dimensional \citep{TensorProdConeMulansky} $\bl{T}(\mc{C}(\mc{P}_2)^*) \subseteq \mc{C}(\mc{P}_1)$ is a two-dimensional cone. Let $\bl{v}^{(1)},\bl{v}^{(2)}$ be the extremal rays of the two-dimensional cone $\text{col}(\bl{T}) \cap \mc{C}(\mc{P}_1)$. There exist $\bl{w}^{(1)},\bl{w}^{(2)}$ such that $\bl{T} = \bl{v}^{(1)} \otimes \bl{w}^{(1)} +  \bl{v}^{(2)} \otimes \bl{w}^{(2)}$ and it remains to show that $\bl{w}^{(i)} \in \mc{C}(\mc{P}_2)$ for $i = 1,2$.  There must exist a face of $\mc{C}(\mc{P}_1)$ that contains $\bl{v}^{(1)}$ but not $\bl{v}^{(2)}$, since otherwise $\lambda \bl{v}^{(1)} + (1 - \lambda) \bl{v}^{(2)}$ would be contained in $\mc{C}(\mc{P}_1)$ for a small enough $\lambda > 1$, contradicting the assumption that $\bl{v}^{(1)}$ was extremal in $\text{col}(\bl{T}) \cap \mc{C}(\mc{P}_1)$. Let $\bl{h} \in \mc{C}(\mc{P}_1)^*$ be a dual vector corresponding to the hyperplane that supports this face. Then $\bl{h}^\intercal \bl{T} = (\bl{h}^\intercal \bl{v}^{(1)}) \bl{w}^{(1)} + (\bl{h}^{\intercal} \bl{v}^{(2)}) \bl{w}^{(2)} = (\bl{h}^{\intercal} \bl{v}^{(2)}) \bl{w}^{(2)} \in \mc{C}(\mc{P}_2)$, implying $\bl{w}^{(2)} \in \mc{C}(\mc{P}_2)$ because $\bl{h}^\intercal \bl{v}^{(2)} > 0$. The same argument shows that $\bl{w}^{(1)} \in \mc{C}(\mc{P}_2)$. 
\end{proof}

A ramification of the above theorems is that defining equations for the semialgebraic sets \citep[Def 2.1.4]{RealAlgGeoBochnakCosteRoy} of ND rank-one tensors and ND rank-two matrices are known. In the former case, a tensor $\bl{T}$ is in $\mc{N}_{\leq 1}$ if and only if it is monotonic and if it satisfies the determinantal equations for a rank-one tensor \citep[Sec 3.4.1]{TensorsLandsberg}. In the latter case, a matrix $\bl{T}$ is in $\mc{N}_{\leq 2}$ if and only if every $3 \times 3$ minor of $\bl{T}$ vanishes and if $\bl{T}$ satisfies the inequalities for the $\mc{H}$-representation of $\mc{N}_{< \infty}$ described in Sections \ref{sec:OrderConeGeomExisenceofFactor} and \ref{sec:EquivofNDandNNRanks}. Semialgebraic conditions under which a tensor of rank two also has a nonnegative rank of two are put forward in \citep{AllmanZwiernikRankTwo}. Whenever all order cones $\mc{C}(\mc{P}_j)$ are simplicial such conditions can be easily translated, via Theorem \ref{thm:NDNonnegativeFactEquivalence}, into conditions for a tensor to have an ND rank of two.

\section{ND Border Rank Equals ND Rank}
\label{sec:NDBorderRank}
Due to the presence of noise, in many applications it is unlikely that a data tensor has an exact, low-rank, ND factorization. Instead a solution to an optimization problem of the form $\argmin_{\bs{\theta} \in \mc{N}_{\leq r}}D(\bl{T}, \bs{\theta})$ is sought, where $D$ is a divergence that measures the discrepancy between the observed data tensor $\bl{T}$ and an approximating, low ND rank tensor $\bs{\theta}$. To ensure that a solution to this optimization problem exists it is important that the set $\mc{N}_{\leq r}$ be closed. The set of tensors with real tensor rank at most $r$ is not closed, which necessitates the introduction of the concept of border rank \citep[Sec 2.4.5]{TensorsLandsberg}. It is shown in this section, extending the proof for nonnegative tensor ranks provided in \citep{BorderRanklim2009nonnegative}, that $\mc{N}_{\leq r}$ is closed and there is no need for the additional notion of border rank.

\begin{theorem}[\citep{BorderRanklim2009nonnegative} Thm 6.1]
\label{thm:BorderRank}
The set $\mc{N}_{\leq r}$ is closed. Equivalently, for any $\bl{T} \in \mb{R}^{p_1 \times \cdots \times p_k}$ there exists a solution to the optimization problem $\inf_{\bs{\theta} \in \mc{N}_{\leq r}} \Vert \bl{T} - \bs{\theta} \Vert_F$, where $\Vert \cdot \Vert_F$ is the Frobenius norm. 
\end{theorem}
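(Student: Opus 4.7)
The plan is to mimic the argument of \citep{BorderRanklim2009nonnegative} for nonnegative ranks, leveraging the crucial fact that $\mc{C}(\mc{P}_j)\subseteq\mb{R}^{p_j}_+$. I would take a convergent sequence $\bs{\theta}_n\to\bs{\theta}$ with $\bs{\theta}_n\in\mc{N}_{\leq r}$, choose factorizations $\bs{\theta}_n=\sum_{i=1}^r\otimes_{j=1}^k\bl{v}^{(ij)}_n$ with $\bl{v}^{(ij)}_n\in\mc{C}(\mc{P}_j)$, and aim to produce a limiting factorization $\bs{\theta}=\sum_{i=1}^r\otimes_{j=1}^k\bl{v}^{(ij)}$ with the same cone constraints.

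The first and most important step is to control the size of each rank-one summand. Because every $\bl{v}^{(ij)}_n$ lies in the order cone, hence in $\mb{R}^{p_j}_+$, each summand $\otimes_{j=1}^k\bl{v}^{(ij)}_n$ is entrywise nonnegative. No cancellation can occur, so entrywise each summand is dominated by $\bs{\theta}_n$, whose entries are uniformly bounded by convergence. This immediately gives a uniform bound $\Vert\otimes_{j=1}^k\bl{v}^{(ij)}_n\Vert_F\leq M$ for some constant $M$ independent of $n$.

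Next I would rescale within each rank-one block to distribute this bound to the individual vectors. For each fixed $i$, replace $\bl{v}^{(ij)}_n$ by $c^{(ij)}_n\bl{v}^{(ij)}_n$ with $c^{(ij)}_n>0$ and $\prod_{j=1}^k c^{(ij)}_n=1$, chosen so that $\Vert\bl{v}^{(i1)}_n\Vert=\cdots=\Vert\bl{v}^{(ik)}_n\Vert$. The summand is unchanged, the vectors remain in $\mc{C}(\mc{P}_j)$ because each cone is closed under positive scaling, and now each vector is bounded in norm by $M^{1/k}$. By Bolzano–Weierstrass applied successively over the finitely many indices $(i,j)$, I pass to a subsequence where $\bl{v}^{(ij)}_n\to\bl{v}^{(ij)}$ for all $i,j$. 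Since $\mc{C}(\mc{P}_j)$ is closed (it is polyhedral), each limit vector lies in $\mc{C}(\mc{P}_j)$. Continuity of the tensor product then yields
\begin{equation*}
\bs{\theta}=\lim_{n\to\infty}\bs{\theta}_n=\lim_{n\to\infty}\sum_{i=1}^r\bigotimes_{j=1}^k\bl{v}^{(ij)}_n=\sum_{i=1}^r\bigotimes_{j=1}^k\bl{v}^{(ij)},
\end{equation*}
exhibiting $\bs{\theta}\in\mc{N}_{\leq r}$ as desired. The equivalent formulation about the minimization problem follows at once: a minimizing sequence in $\mc{N}_{\leq r}$ is bounded (it stays in a ball around $\bl{T}$), hence has a convergent subsequence whose limit lies in $\mc{N}_{\leq r}$ and attains the infimum.

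The main obstacle, which is in fact the entire content of the proof, is the boundedness-and-compactness step; everything else is continuity bookkeeping. The argument works precisely because nondecreasing vectors are forced to be nonnegative by the definition of the order cone, so the classical ``no cancellation'' phenomenon that fails for real tensor rank and that rescues the nonnegative rank case carries over verbatim. If one instead allowed signed monotone vectors, this step would break down and one would be forced to define a separate notion of ND border rank.
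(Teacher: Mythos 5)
Your proof is correct and follows essentially the same route as the paper's: both arguments hinge on the fact that $\mc{C}(\mc{P}_j)\subseteq\mb{R}^{p_j}_+$ forbids cancellation among the rank-one summands, which bounds each summand, permits a compactness/subsequence extraction, and then uses closedness of the order cones to land the limit in $\mc{N}_{\leq r}$. The only cosmetic difference is how the bound is obtained (your entrywise domination of each summand by $\bs{\theta}_n$ versus the paper's inner-product estimate $\Vert\bs{\theta}^{(n)}\Vert_F^2\geq(\lambda_i^{(n)})^2$ after unit-normalizing the vectors), which does not change the substance of the argument.
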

\begin{proof}
    Assume that $\bs{\theta}^{(n)} = \sum_{i = 1}^r \lambda_i^{(n)} \otimes_{j = 1}^k  \bl{v}_{ij}^{(n)}$ is a sequence of tensors in $\mc{N}_{\leq r}$ with $\Vert \bl{T} - \bs{\theta}^{(n)} \Vert_F \rightarrow \inf_{\bs{\theta} \in \mc{N}_{\leq r}} \Vert \bl{T} - \bs{\theta} \Vert_F$. We will first show that $\bs{\theta}^{(n)}$ has a convergent subsequence that converges to a point in $\mc{N}_{\leq r}$. By the continuity of the objective function this point of convergence will be a solution to the given optimization problem. The $\bl{v}_{ij}^{(n)} \in \mc{C}(\mc{P}_j)$ are assumed, without loss of generality, to be scaled so that $\Vert \bl{v}_{ij}^{(n)}\Vert_2 = 1$ and $\lambda_i^{(n)} \geq 0$. It is claimed that there exists an $M$ such that $\sup_n \lambda_i^{(n)} \leq M$ for all $i$. To show this, note that $\Vert \bl{T} -  \bs{\theta}^{(n)} \Vert_F \geq \Vert \bs{\theta}^{(n)} \Vert_F - \Vert \bl{T} \Vert_F$ and
\begin{align*}
     \Vert \bs{\theta}^{(n)} \Vert_F^2 &= \sum_{i = 1}^r \sum_{i' = 1}^r \lambda_i^{(n)} \lambda_{i'}^{(n)} \langle \otimes_{j = 1}^k \bl{v}_{ij}^{(n)},  \otimes_{j = 1}^k \bl{v}_{i'j}^{(n)} \rangle 
     \\
     &=  \sum_{i = 1}^r \sum_{i' = 1}^r \lambda_i^{(n)} \lambda_{i'}^{(n)} \prod_{j = 1}^k \langle  \bl{v}_{ij}^{(n)}, \bl{v}_{i'j}^{(n)} \rangle \geq (\lambda_i^{(n)})^2 \prod_{j = 1}^k \Vert \bl{v}_{ij}^{(n)} \Vert_2^2 = (\lambda_i^{(n)})^2,
\end{align*}
where the last inequality follows because $\langle  \bl{v}_{ij}^{(n)}, \bl{v}_{i'j}^{(n)} \rangle \geq 0$ for every $i,i',j$ as $\mc{C}(\mc{P}_j) \subseteq \mb{R}_
+^{p_j}$. If $\lambda_i^{(n)} \rightarrow \infty$ for any $i$ then $\Vert \bl{T} - \bs{\theta}^{(n)} \Vert_F \rightarrow \infty$, a contradiction. It follows that there exists a convergent subsequence with $\lambda_i^{(n_m)} \rightarrow \lambda_i$ and $\bl{v}_{ij}^{(n_m)} \rightarrow \bl{v}_{ij} \in \mc{C}(\mc{P}_j)$, with the latter inclusion following from $\mc{C}(\mc{P}_j)$ being closed. Thus, $\bs{\theta}^{(n_m)} \rightarrow \tilde{\bs{\theta}} = \sum_{i = 1}^r \lambda_i \otimes_{j = 1}^k \bl{v}_{ij} \in \mc{N}_{\leq r}$ is a convergent subsequence. The point of convergence $\tilde{\bs{\theta}} = \argmin_{\bs{\theta} \in \mc{N}_{\leq r}} \Vert \bl{T} - \bs{\theta}\Vert_F$ is a solution to the optimization problem of interest. To show that $\mc{N}_{\leq r}$ is closed, first observe that if $\bl{T}$ is in the closure of $\mc{N}_{\leq r}$, then $\inf_{\bs{\theta} \in \mc{N}_{\leq r}} \Vert \bl{T} - \bs{\theta} \Vert_F = 0$. However, the point $\tilde{\theta} \in \mc{N}_{\leq r}$, obtained from a convergent subsequence as described above, is a solution to the minimization problem, implying that $\Vert \bl{T} - \tilde{\bs{\theta}}\Vert_F = \inf_{\bs{\theta} \in \mc{N}_{\leq r}} \Vert \bl{T} - \bs{\theta} \Vert_F = 0$. That is, $\bl{T} = \tilde{\bs{\theta}} \in \mc{N}_{\leq r}$. 
\end{proof}

\section{Finding Low ND Rank Approximations}
\label{sec:Optimization}
Solving the optimization problem $\inf_{\bs{\theta} \in \mc{N}_{\leq r}} \Vert \bl{T} - \bs{\theta} \Vert_F$ introduced in the previous section is a one possibility for finding a low ND rank approximation to a data tensor $\bl{T}$. However, when the entries of $\bl{T}$ consist of count data or positive data the Frobenius norm objective function may not be the most appropriate criteria to minimize. We examine a likelihood-based approach for estimating the mean of $\bl{T}$ in this section. It is assumed that the entries $T_{i_1\ldots i_k}$ are independently sampled from an exponential family of distributions with mean parameter \citep[Ch 3]{BrownExponentialFamilies} $E(T_{i_1\ldots i_k}) = \theta_{i_1\ldots i_k}$ for every $(i_1,\ldots,i_k) \in \times_{j = 1}^k [p_j]$, where $E(\cdot)$ is the expected value. The tensor $\bs{\theta}$ is constrained to have an ND rank of at most $r$ and is estimated by maximizing the observed likelihood
\begin{align*}
    \hat{\bs{\theta}} = \underset{\bs{\theta} \in \mc{N}_{\leq r}}{\argmax} \; p(\bl{T}|\bs{\theta}) = \underset{\bs{\theta} \in \mc{N}_{\leq r}}{\argmax} \; \sum_{i_1 = 1}^{p_1}\cdots\sum_{i_k = 1}^{p_k} \log\big(p(T_{i_1\ldots i_k}|\theta_{i_1\ldots i_k})\big).
\end{align*}
The function $p(\bl{T}|\bs{\theta})$ is the joint probability distribution or probability mass function of the tensor $\bl{T}$. Different distributional assumptions about $T_{i_1\ldots i_k}$ lead to different choices of $p(T_{i_1\ldots i_k}|\theta_{i_1\ldots i_k})$ and hence different optimization problems. Of primary interest are the following standard distributions:
\begin{enumerate}
    \item If $T_{i_1\ldots i_k} \in \mb{R}$ then we may assume that $T_{i_1\ldots i_k} \sim \mc{N}(\theta_{i_1\ldots i_k},\sigma^2)$ are independent Gaussians. The corresponding optimization problem is
    \begin{align*}
\underset{\bs{\theta} \in \mc{N}_{\leq r}}{\argmin} \; \Vert \bl{T} - \bs{\theta} \Vert_F^2.
    \end{align*}
    Note that the value of the variance $\sigma^2$ does not change the above optimization problem. 
\item If the entries of $\bl{T}$ are counts in $\mb{N}$ that sum to $n$ then we may assume that $\bl{T} \sim \text{Multinomial}(\bs{\theta},n)$. Note that in this case the entries of $\bl{T}$ are not independent. The corresponding optimization problem over the probability simplex $\Delta_{\prod_i p_i - 1} \subset \mb{R}^{p_1 \times \cdots p_k}$ is 
\begin{align*}
\underset{\bs{\theta} \in \mc{N}_{\leq r} \cap \Delta_{\prod_i p_i - 1} }{\argmin} \; -\sum_{i_1 = 1}^{p_1}\cdots\sum_{i_k = 1}^{p_k} T_{i_1\ldots i_k} \log(\theta_{i_1\ldots i_k}).
\end{align*}
\item If the $T_{i_1\ldots i_k} \in \mb{N}$ are counts that do not necessarily have to sum to $n$ we may assume that $T_{i_1\ldots i_k} \sim \text{Poisson}(\theta_{i_1\ldots i_k})$ independently.  The corresponding optimization problem is 
\begin{align*}
  \underset{\bs{\theta} \in \mc{N}_{\leq r}}{\argmin} \; \sum_{i_1 = 1}^{p_1}\cdots\sum_{i_k = 1}^{p_k} \big(\theta_{i_1 \ldots i_k} -T_{i_1\ldots i_k} \log(\theta_{i_1\ldots i_k})\big).
\end{align*}
In both this problem and the multinomial problem whenever $T_{i_1\ldots i_k} = 0$ the respective $T_{i_1\ldots i_k}\log(\theta_{i_1\ldots i_k})$ term does not appear in the objective function. 
\item If $T_{i_1 \ldots i_k} \in (0,\infty)$ are positive then we may assume that $T_{i_1\ldots i_k} \sim \text{Exponential}(\theta_{i_1\ldots i_k}^{-1})$ independently.  The corresponding optimization problem is
\begin{align*}
\underset{\bs{\theta} \in \mc{N}_{\leq r}}{\argmin} \;  \sum_{i_1 = 1}^{p_1}\cdots\sum_{i_k = 1}^{p_k} \big( \log(\theta_{i_1\ldots i_k}) + \tfrac{T_{i_1\ldots i_k}}{\theta_{i_1 \ldots i_k}}\big).
\end{align*}
\end{enumerate}
These four optimization problems represent special cases of $\beta$-divergences, that are commonly applied to NMF problems \citep{BetaDivergenceNMF}, where $\beta$ is respectively equal to $2$, $1$, and $0$ under the distributional assumptions in $1.$ $3.$ and $4.$ above. Similar to Proposition 7.2 in \citep{BorderRanklim2009nonnegative}, which shows that optimal nonnegative approximations exist for Bregman divergences, the next result shows that this also holds for nondecreasing factorizations, but does not require any additional constraints on $\bs{\theta}$.

\begin{lemma}
\label{lem:ExistenceofDivergenceOptimizer}
    If the entries of $\bl{T}$ are in the respective subsets $\mb{R},\mb{N},\mb{N},(0,\infty)$ in each of the four optimization problems above then there exists a minimizer in each of the problems. 
\end{lemma}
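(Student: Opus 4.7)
The overall strategy is a minimizing-sequence argument for each problem: I would take $\bs{\theta}^{(n)} \in \mc{N}_{\leq r}$ with objective values converging to the infimum, extract a convergent subsequence whose limit lies in $\mc{N}_{\leq r}$ (using the closedness statement of Theorem \ref{thm:BorderRank}), and then identify that limit as a minimizer via continuity or lower semicontinuity of the objective. Case $1$ (Gaussian) is immediate from Theorem \ref{thm:BorderRank}. For case $2$ (multinomial), the feasible set $\mc{N}_{\leq r} \cap \Delta_{\prod_j p_j - 1}$ is the intersection of a closed cone with a compact simplex, hence compact. The objective $-\sum T_{i_1\ldots i_k}\log \theta_{i_1\ldots i_k}$ is nonnegative on the simplex (since $\theta_{i_1\ldots i_k} \leq 1$), and extending it by $+\infty$ at $\theta_{i_1\ldots i_k} = 0$ when $T_{i_1\ldots i_k} > 0$ (with the conventions $0 \log 0 = 0$ and $-T \log 0 = +\infty$ for $T > 0$) produces a lower semicontinuous functional. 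A lower semicontinuous function on a compact set attains its infimum; finiteness of that infimum is witnessed by the uniform tensor $\bigl(\prod_j p_j\bigr)^{-1} \otimes_{j=1}^k \bl{1}_{p_j}$, which lies in $\mc{N}_{\leq 1} \cap \Delta_{\prod_j p_j - 1}$ and yields a finite objective value.

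For cases $3$ and $4$ I would establish coercivity of the entrywise loss functions. The Poisson loss $\phi_T(\theta) = \theta - T\log\theta$ satisfies $\phi_T(\theta) \to +\infty$ as $\theta \to \infty$, and also as $\theta \to 0^+$ whenever $T > 0$, and is globally bounded below by $T - T\log T$ (or $0$ when $T = 0$). The exponential loss $\psi_T(\theta) = \log\theta + T/\theta$ for $T > 0$ satisfies $\psi_T(\theta) \to +\infty$ as either $\theta \to \infty$ or $\theta \to 0^+$, and is bounded below by $\log T + 1$. Since each total objective is a sum of such terms and every term is bounded below, a minimizing sequence $\bs{\theta}^{(n)}$ must keep each entry in a compact subset of $[0,\infty)$, and moreover keep entries bounded away from $0$ wherever $T_{i_1\ldots i_k} > 0$ (case $3$) or at every entry (case $4$). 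Bolzano--Weierstrass then yields a convergent subsequence, Theorem \ref{thm:BorderRank} places the limit $\bs{\theta}^*$ in $\mc{N}_{\leq r}$, and by coercivity no coordinate of $\bs{\theta}^*$ sits at the singular boundary where the objective is discontinuous; consequently the objective is continuous at $\bs{\theta}^*$ and equals the limiting value, identifying $\bs{\theta}^*$ as a minimizer. Finiteness of the infimum in these cases follows by evaluating at a strictly positive constant tensor $c \otimes_{j=1}^k \bl{1}_{p_j} \in \mc{N}_{\leq 1}$ for any $c > 0$.

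The main obstacle is that in cases $2$--$4$ the objective is not continuous, or even finite-valued, on all of $\mc{N}_{\leq r}$: logarithmic and reciprocal terms become singular as coordinates approach zero, and minimizing sequences may drift to infinity absent a coercivity estimate. The resolution is uniform across the three cases, pairing two complementary tools. Coercivity (via the elementary one-variable bounds for $\phi_T$ and $\psi_T$) excludes minimizing sequences from approaching the singular boundary and keeps them in a compact region, handling cases $3$ and $4$. In case $2$, where the simplex already provides compactness but the singular boundary is not automatically avoided, lower semicontinuity of the extended objective supplies the missing ingredient. In all three cases, exhibiting a concrete feasible tensor (the uniform distribution or a positive constant) with finite objective value rules out the degenerate possibility that the infimum is itself $+\infty$.
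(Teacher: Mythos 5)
Your proof is correct and follows essentially the same route as the paper: case 1 directly from Theorem \ref{thm:BorderRank}, and for cases 2--4 a restriction to a compact subregion of $\mc{N}_{\leq r}$ (compactness coming from Theorem \ref{thm:BorderRank} plus boundedness forced by divergence of the objective at infinity or by the simplex constraint), together with continuity/lower semicontinuity of the extended objective. The only difference is that you are more explicit than the paper about the coercivity lower bounds, the avoidance of the singular boundary, and the finiteness witnesses, which tightens but does not change the argument.
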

\begin{proof}
    This follows for $1.$ by Theorem \ref{thm:BorderRank}. For the remaining three problems it is seen that there exists an $M$ where any minimizer must be contained in the region $\mc{S} = \{\bs{\theta}:\theta_{i_1\ldots i_k} \leq M, \forall i_1,\ldots,i_k\}$, as the objective functions in $3.$ and $4.$ diverge to $\infty$ whenever $\theta_{i_1\ldots i_k} \rightarrow \infty$. In case $2.$ $\bs{\theta}$ is trivially contained in such an $\mc{S}$ because the probability simplex is compact. As $\mc{S} \cap \mc{N}_{\leq r}$ is compact by Theorem \ref{thm:BorderRank} and the objective functions are continuous on this region, existence of the solutions follows. Here the objective functions are extended continuously to take values in $\mb{R} \cup \{\infty\}$ with $-T_{i_1\ldots i_k} \log(0) \coloneq \infty$ and $T_{i_1\ldots i_k}/0 \coloneq \infty$ when $T_{i_1\ldots i_k} \neq 0$.  
\end{proof}

Before examining a least squares optimization algorithm for finding rank-$r$ ND approximations we discuss a couple of results in low-rank cases where finding an ND approximation is the same as simply performing a singular value decomposition.

The Perron-Frobenius theorem \citep[Thm 3.2]{NonnegativeMatricesBookBerman} that states that nonnegative, square matrices have nonnegative leading eigenvectors, has been extended to nonnegative, cubical tensors in \citep[Thm 1]{Lim2005singularPerronFrobenius}. The next result is reminiscent of the Perron-Frobenius theorem, but is instead concerned with the singular vectors and values of a matrix rather than the eigenvectors and eigenvalues.

\begin{lemma}
If every fibre $\bl{T}_{i_1\ldots i_{j-1} \bullet i_{j+1} \ldots i_k }$ is in $\mc{C}(\mc{P}_j)$ then the best rank-one approximation to a non-zero $\bl{T}$ with respect to any of the log-likelihoods mentioned in this section is equal to the best ND rank-one approximation:
\begin{align*}
    \underset{\bs{\theta}: \mathrm{rank}(\bs{\theta}) = 1}{\mathrm{argmax}} \; p(\bl{T}|\bs{\theta}) = \underset{\bs{\theta} \in \mc{N}_{1}}{\mathrm{argmax}} \;\; p(\bl{T}|\bs{\theta}). 
\end{align*}
In particular, if $\bl{T}$ is a matrix with every column in $\mc{C}(\mc{P}_1)$ and every row in $\mc{C}(\mc{P}_2)$ then, up to sign changes, the first left and right singular vectors can be chosen to be in $\mc{C}(\mc{P}_1)$ and $\mc{C}(\mc{P}_2)$ respectively. The largest singular value of $\bl{T}$ has multiplicity one and the best ND rank-one approximation with respect to the Frobenius norm is unique. 
\end{lemma}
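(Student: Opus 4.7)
The overall plan is to show that any maximizer $\bs{\theta}^\ast = \otimes_{j=1}^k \bl{v}^{(j)}$ of the unconstrained rank-one problem automatically lies in $\mc{N}_{\leq 1}$, which forces equality of the two maximizer sets since $\mc{N}_{\leq 1}$ is contained in the set of rank-one tensors.

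The first step is to reduce to the case where every factor is nonnegative. In the Multinomial, Poisson, and Exponential problems the admissibility of $\bs{\theta}$ forces $\bs{\theta}^\ast \geq 0$ (strictly positive where required), and a nonzero rank-one tensor with nonnegative entries has each factor of constant sign, so after absorbing pairs of minus signs each $\bl{v}^{(j)} \geq 0$. In the Gaussian problem the fibre hypothesis implies $\bl{T} \geq 0$ entry-wise, and the pointwise inequality $(T_i - \theta_i)^2 \geq (T_i - |\theta_i|)^2$ valid for $T_i \geq 0$ shows that replacing $\bs{\theta}^\ast = \otimes_j \bl{v}^{(j)}$ by $\otimes_j |\bl{v}^{(j)}| = |\bs{\theta}^\ast|$ cannot increase the Frobenius error, so again each $\bl{v}^{(j)} \geq 0$ without loss of generality.

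The second step is a first-order condition. Holding the other factors fixed and differentiating the objective with respect to $\bl{v}^{(j)}$, in each of the four cases one finds that the optimum satisfies $\bl{v}^{(j)} \propto \sum_{i_{-j}} w(i_{-j}) \bl{T}_{\bullet, i_{-j}}$ with nonnegative weights: in the Gaussian case $w(i_{-j}) = \prod_{j' \neq j} v^{(j')}_{i_{j'}}$; in the Poisson and Multinomial cases $w \equiv 1$, so $\bl{v}^{(j)}$ is proportional to the mode-$j$ marginal of $\bl{T}$; and in the Exponential case $w(i_{-j}) = \prod_{j' \neq j} (v^{(j')}_{i_{j'}})^{-1}$, which is positive because those factors are strictly positive in that setting. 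Because each mode-$j$ fibre belongs to $\mc{C}(\mc{P}_j)$ and $\mc{C}(\mc{P}_j)$ is a convex cone, $\bl{v}^{(j)}$ itself lies in $\mc{C}(\mc{P}_j)$, giving $\bs{\theta}^\ast \in \mc{N}_{\leq 1}$.

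For the matrix corollary, specializing to $k = 2$ with the Frobenius norm, any best rank-one fit equals $\sigma_1 \bl{u}_1 \bl{v}_1^\intercal$, so the preceding argument supplies left and right singular vectors in $\mc{C}(\mc{P}_1)$ and $\mc{C}(\mc{P}_2)$. For the multiplicity-one and uniqueness claim I would argue via the variational form $\sigma_1 = \max_{\Vert\bl{u}\Vert = \Vert\bl{v}\Vert = 1} \bl{u}^\intercal \bl{T} \bl{v}$: the same conic-combination argument places any maximizer inside $\mc{C}(\mc{P}_1) \times \mc{C}(\mc{P}_2)$, and two non-proportional maximizers would produce two orthogonal top left singular vectors inside $\mc{C}(\mc{P}_1)$. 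I expect this last point to be the main obstacle; the argument will need to exploit that the fibre condition propagates monotonicity structure to the singular vectors so that their supports are nested along the Hasse diagram, which together with nonnegativity prevents two nonzero elements of $\mc{C}(\mc{P}_1)$ from being orthogonal.
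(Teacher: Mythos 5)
Your treatment of the main claim (the four likelihoods) follows the paper's route essentially exactly: reduce to nonnegative factors, then read off from the stationarity conditions that each $\bl{v}^{(j)}$ is a conic combination of mode-$j$ fibres and hence lies in $\mc{C}(\mc{P}_j)$. Your nonnegativity reduction via $|\bs{\theta}^\ast| = \otimes_j |\bl{v}^{(j)}|$ and the pointwise inequality $(T_i-\theta_i)^2 \geq (T_i - |\theta_i|)^2$ is in fact cleaner than the paper's, which replaces $\bs{\theta}$ by $\otimes_j \max(\bl{v}^{(j)},\bl{0})$ and asserts an identity with $\max(\otimes_j\bl{v}^{(j)},\bl{0})$ that does not hold entrywise in general.

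The genuine gap is in the multiplicity-one and uniqueness claim, and it is exactly where you flagged trouble. Your proposed mechanism --- that nonnegativity plus the fibre condition should prevent two nonzero elements of $\mc{C}(\mc{P}_1)$ from being orthogonal --- is false for a general poset: the support of an element of $\mc{C}(\mc{P}_1)$ is an upset, but two upsets need not be nested or even intersecting (for a trivial or disconnected $\mc{P}_1$, the standard basis vectors $\bl{e}_1,\bl{e}_2$ are orthogonal nonzero elements of the order cone). So the orthogonality obstruction only exists when $\mc{P}_1$ has a unique maximal element (e.g.\ a chain), and the ``nested supports'' idea will not rescue the general case. The paper argues differently: it does not try to show that two orthogonal top singular vectors cannot both lie in the cone. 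Instead, supposing the top singular value is repeated, it uses pointedness of $\mc{C}(\mc{P}_1)$ to find a unit vector $\bl{z}^{(1)}$ in the two-dimensional top left singular subspace with $\bl{z}^{(1)} \notin \pm\mc{C}(\mc{P}_1)$, then rotates the pair of singular vector pairs so that $\bl{z}^{(1)} \otimes \bl{z}^{(2)}$ is itself a best rank-one approximation; this contradicts the already-established fact that every best rank-one approximation has its first factor in $\pm\mc{C}(\mc{P}_1)$. If you want to complete your argument, you should replace the orthogonality step with this pointedness-plus-rotation argument (or an equivalent one); note that it leans on the first part of the proof applying to \emph{every} maximizer, not just to some maximizer, so your reduction to nonnegative factors must be phrased so that it characterizes all optima rather than merely exhibiting one good optimum.
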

\begin{proof}
Throughout this proof it is assumed that the entries of $\bl{T}$ are in the sets outlined in Lemma \ref{lem:ExistenceofDivergenceOptimizer} so that there exists a solution to the optimization problem.

Let $\bs{\theta} = \otimes_{j = 1}^k \bl{v}^{(j)}$ be a rank-one tensor, where we seek vectors $\bl{v}^{(j)} \in \mb{R}^{p_j}$ that maximize the likelihood of interest. 
Under the Gaussian likelihood, upon taking a gradient with respect to $\bl{v}^{(j)}$, the first-order conditions, without any monotonicity constraints, for $\otimes_{j = 1}^k \bl{v}^{(j)}$ to be optimal are
\begin{align}
\label{eqn:GaussianRankOneStationaryEquation}
 \bl{v}^{(j)} =  \frac{1}{\prod_{i \neq j} \Vert \bl{v}^{(i)} \Vert_F^2} \sum_{i_1 = 1}^{p_1} \cdots \sum_{i_{j-1} = 1}^{p_{j-1}} \sum_{i_{j+1} = 1}^{p_{j+1}} \cdots \sum_{i_k = 1}^{p_{k}} \big(\prod_{l \neq j} v^{(l)}_{i_l}\big) \bl{T}_{i_1\ldots i_{j-1} \bullet i_{j+1}\ldots i_k}.
\end{align}
We may assume that any optimal rank-one approximation has $\bl{v}^{(j)} \geq \bl{0}$ as the tensor $\max(\otimes_{j = 1}^k \bl{v}^{(j)}, \bl{0}) = \otimes_{j = 1}^k \max(\bl{v}^{(j)},\bl{0})$, where the maximum is taken elementwise, is closer to $\bl{T}$ than $\otimes_{j = 1}^k \bl{v}^{(j)}$ is when $\otimes_{j = 1}^k \bl{v}^{(j)}$ has a negative entry. The optimal $\bl{v}^{(j)}$ is non-zero when $\bl{T}$ is non-zero since if $T_{i_1\ldots i_k} \neq 0$ then $\bs{\theta} = T_{i_1\ldots i_k} \otimes_{j = 1}^k \bl{e}_{i_j}$ has a larger likelihood value than $\bs{\theta} = \bl{0}$. Equation \eqref{eqn:GaussianRankOneStationaryEquation} shows that $\bl{v}^{(j)}$ must be a conic combination of the monotone fibres $\bl{T}_{i_1\ldots i_{j-1} \bullet i_{j+1}\ldots i_k}$, and so it must itself be in $\mc{C}(\mc{P}_j)$. 

When $k = 2$ and  $\bl{v}^{(1)},\bl{v}^{(2)}$ are any choice of the first left and right singular vectors, by the Eckhart-Young theorem, $\bl{v}^{(1)} \otimes \bl{v}^{(2)}$ is a best rank-one approximation of $\bl{T}$. Therefore, by the above derivation, $\bl{v}^{(j)} \in \mc{C}(\mc{P}_j)$, up to a possible sign change. If the largest singular value was repeated then we have pairs $\bl{v}^{(j)},\bl{w}^{(j)}$ of left ($j = 1$) and right ($j = 2$) singular vectors. As $\mc{C}(\mc{P}_1)$ is proper, the subspace $\text{span}(\bl{v}^{(1)},\bl{w}^{(1)})$ contains a unit norm vector $\bl{z}^{(1)} \notin \pm \mc{C}(\mc{P}_1)$. Defining the $p_j \times 2$ matrix $\bl{R}^{(j)} = [\bl{v}^{(j)}| \bl{w}^{(j)}]$, consider the rotation matrix $\bl{V}$ constructed so that the first column of $\bl{R}^{(1)}\bl{V}$ is equal to $\bl{z}^{(1)}$. Set $\bl{z}^{(2)}$ to be the first column of $\bl{R}^{(2)}\bl{V}$. We have that $\bl{R}^{(1)} (\bl{R}^{(2)})^\intercal = \bl{R}^{(1)} \bl{V} \bl{V}^\intercal (\bl{R}^{(2)})^\intercal$ is a best rank-two approximation of $\bl{T}$ and thus $\bl{z}^{(1)} \otimes \bl{z}^{(2)}$ is a best rank-one approximation of $\bl{T}$. This contradicts the above fact that $\bl{z}^{(1)} \in \pm \mc{C}(\mc{P}_1)$ for any best rank-one approximation. 

Under the multinomial model the unconstrained maximum likelihood estimator (MLE) is well-known \citep[Sec 9.6]{AgrestiCategoricalDataAnalysis} to be equal to the averaged marginal distribution along each mode of the tensor: 
\begin{align*}
    \bl{v}^{(j)} = \frac{1}{\prod_{i \neq j}p_i} \sum_{i_1 = 1}^{p_1} \cdots \sum_{i_{j-1} = 1}^{p_{j-1}} \sum_{i_{j+1} = 1}^{p_{j+1}} \cdots \sum_{i_k = 1}^{p_k} \bl{T}_{i_1\ldots i_{j-1} \bullet i_{j+1}\ldots i_k}.
\end{align*}
It is immediate that $\bl{v}^{(j)} \in \mc{C}(\mc{P}_j)$. The Poisson MLE is equal to $c \otimes_{j = 1}^k \bl{v}^{(j)}$ where each $\bl{v}^{(j)}$ is given as above, and $c = \langle \bl{T},\bl{1} \rangle$ is the sum of all of the entries in $\bl{T}$.

Finally, taking the gradient with respect to $\bl{v}^{(j)}$ for the exponential likelihood yields the equations
\begin{align*}
    \bl{v}^{(j)} = \sum_{i_1 = 1}^{p_1} \cdots \sum_{i_{j-1} = 1}^{p_{j-1}} \sum_{i_{j+1} = 1}^{p_{j+1}} \cdots \sum_{i_k = 1}^{p_k} \frac{ \bl{T}_{i_1\ldots i_{j-1} \bullet i_{j+1}\ldots i_k} }{\prod_{l \neq j} v_{i_l}^{(l)}}.
\end{align*}
This is a conic combination of $\bl{T}_{i_1\ldots i_{j-1} \bullet i_{j+1}\ldots i_k}$ and hence is monotone. Note that none of the entries of any $\bl{v}^{(l)}$ can equal zero at an optimal solution, ensuring that the above first-order condition is well-defined. This is because we define the objective function to be continuous as $\theta_{i_1\ldots i_k}$ approaches $0$, and $\lim_{\theta_{i_1\ldots i_k} \rightarrow 0} \big( \log(\theta_{i_1\ldots i_k}) + \tfrac{T_{i_1\ldots i_k}}{\theta_{i_1 \ldots i_k}}\big) = \infty$.
\end{proof}

We remark that the constraint that each tensor fibre be in $\mc{C}(\mc{P}_j)$ can easily be checked and is weaker than requiring that $\bl{T} \in \mc{N}_{< \infty}$ (Theorem \ref{thm:InjectiveEqualsProjective}).

The following lemma will be applied in the next section to easily find a rank-two factorization. In the case of NMF this observation was used for hierarchical clustering in \citep[Sec 2.3]{gillis2014hierarchicalRankTwo}.

\begin{lemma}
If $\bl{T}$ has an optimal, unconstrained, rank-$r$ decomposition $\bl{T}^{(r)} = \argmin_{\bs{\theta}: \mathrm{rank}(\bs{\theta}) \leq r} \Vert \bl{T} - \bs{\theta} \Vert_F^2$ where $\bl{T}^{(r)} \in \mc{N}_{\leq r}$ then $\bl{T}^{(r)} = \mathrm{argmin}_{\bs{\theta} \in \mc{N}_{\leq r}} \; \Vert \bl{T} - \bs{\theta} \Vert_F^2$. For matrices with $\bl{T}^{(2)} \in \mc{N}_{< \infty}$ we have that $\bl{T}^{(2)} = \mathrm{argmin}_{\bs{\theta} \in \mc{N}_{\leq 2}} \; \Vert \bl{T} - \bs{\theta} \Vert_F^2$. 
\end{lemma}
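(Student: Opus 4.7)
The plan is to observe that $\mc{N}_{\leq r}$ is always a subset of the set of (unconstrained) rank-at-most-$r$ tensors, since every ND rank-$r$ factorization is a fortiori a rank-$r$ factorization. Thus if a minimizer over the larger set happens to land in the smaller set, it must also be a minimizer over the smaller set. That gives the first statement immediately: for any $\bs{\theta} \in \mc{N}_{\leq r}$, optimality of $\bl{T}^{(r)}$ over rank-$\leq r$ tensors gives $\Vert \bl{T} - \bl{T}^{(r)}\Vert_F^2 \leq \Vert \bl{T} - \bs{\theta}\Vert_F^2$, and the assumption $\bl{T}^{(r)} \in \mc{N}_{\leq r}$ makes it an admissible point of the constrained problem.

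For the matrix $r=2$ claim, the task reduces to verifying the hypothesis of the first part, namely that $\bl{T}^{(2)} \in \mc{N}_{\leq 2}$. By the Eckart–Young theorem one such optimal unconstrained rank-$2$ approximation exists and has matrix rank at most $2$. Since by hypothesis $\bl{T}^{(2)} \in \mc{N}_{< \infty}$, Theorem \ref{thm:NDRankTwo} applies when the matrix rank equals $2$ and yields $\mathrm{NDrank}(\bl{T}^{(2)}) = 2$, hence $\bl{T}^{(2)} \in \mc{N}_{\leq 2}$. The degenerate cases where $\mathrm{rank}(\bl{T}^{(2)}) \leq 1$ are handled by the rank-one theorem of Section \ref{sec:NDRankOneTwo}: a monotone rank-one tensor (and $\bl{T}^{(2)} \in \mc{N}_{< \infty} \subseteq \mc{C}(\mc{P})$ is monotone) has ND rank at most one, so it also lies in $\mc{N}_{\leq 2}$. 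Once the inclusion $\bl{T}^{(2)} \in \mc{N}_{\leq 2}$ is established the first part of the lemma delivers the conclusion.

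The only subtlety worth flagging is that the unconstrained rank-$2$ minimizer need not be unique, so one must be slightly careful in interpreting ``$\bl{T}^{(2)}$'': the argument shows that \emph{any} unconstrained rank-$2$ optimizer which happens to lie in $\mc{N}_{< \infty}$ is also optimal over $\mc{N}_{\leq 2}$. There is no real obstacle here — the entire proof is a containment-of-feasible-sets argument together with the pre-established coincidence of rank and ND rank in the rank one and two matrix cases.
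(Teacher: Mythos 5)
Your proof is correct and follows essentially the same route as the paper: the first claim is the feasible-set containment $\mc{N}_{\leq r} \subseteq \{\bs{\theta}: \mathrm{rank}(\bs{\theta}) \leq r\}$, and the second claim reduces to Theorem \ref{thm:NDRankTwo} to place $\bl{T}^{(2)}$ in $\mc{N}_{\leq 2}$. Your explicit handling of the degenerate $\mathrm{rank} \leq 1$ cases and the non-uniqueness caveat is slightly more careful than the paper's one-line justification, but the substance is identical.
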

\begin{proof}
    The first statement is clear from 
    \begin{align*}
    \Vert \bl{T} - \bl{T}^{(r)} \Vert^2_F =        \underset{\bs{\theta}: \mathrm{rank}(\bs{\theta}) \leq r}{\mathrm{min}} \; \Vert \bl{T} - \bs{\theta} \Vert_F^2 \leq \underset{\bs{\theta} \in \mc{N}_{\leq r}}{\mathrm{min}} \;\; \Vert \bl{T} - \bs{\theta}\Vert^2_F.
    \end{align*}
    The second statement follows from Theorem \ref{thm:NDRankTwo}. 
\end{proof}

There are a wide variety of algorithms for computing approximate nonnegative factorizations, the two main classes of which are multiplicative update (MU) algorithms and least squares algorithms; see \cite[Ch 8]{GillisNNMF} for a comprehensive account. Below we propose an algorithm for minimizing the Frobenius norm between a data tensor and a low ND rank tensor. This procedure is closely related to the hierarchical alternating least squares (HALS) algorithm \citep{HALScichocki2007hierarchical} and is based off minimizing the following expression

\begin{align}
\label{eqn:BasicHALSObjectiveFunction}
    \bigg\Vert \bl{T} - \sum_{i \neq s}  \otimes_{j = 1}^k \bl{v}^{(ij)} - \otimes_{j = 1}^k \bl{v}^{(sj)} \bigg\Vert_F^2 \coloneq \bigg\Vert  \tilde{\bl{T}} - \otimes_{j = 1}^k \bl{v}^{(sj)} \bigg\Vert_F^2 
\end{align}
with respect to $\bl{v}^{(st)} \in \mc{C}(\mc{P}_t)$, while holding all other vectors fixed. We have defined $\tilde{\bl{T}} \coloneq \bl{T} - \sum_{i \neq s}  \otimes_{j = 1}^k \bl{v}^{(ij)}$ in \eqref{eqn:BasicHALSObjectiveFunction}. Defining 
\begin{align*}
        \tilde{v}_{l}^{(st)} = \frac{\langle \tilde{\bl{T}}_{\bullet \cdots \bullet l \bullet \cdots \bullet} , \otimes_{j \neq t} \bl{v}^{(sj)} \rangle }{\big\Vert \otimes_{j \neq t} \bl{v}^{(sj)}\Vert_F^2} 
\end{align*}
to be the unconstrained minimizer of $v_l^{(st)}$, the objective function in \eqref{eqn:BasicHALSObjectiveFunction} can be written as 
\begin{align}
\label{eqn:HALsQuadraticObjective}
\sum_{l = 1}^{p_t} \bigg\Vert \tilde{\bl{T}}_{\bullet \cdots \bullet l \bullet \cdots \bullet} - \otimes_{j \neq t} \bl{v}^{(sj)} \tilde{v}_l^{(st)}\bigg\Vert_F^2   +  \bigg(\prod_{j \neq t}\Vert \bl{v}^{(sj)}\Vert_2^2 \bigg)\sum_{l = 1}^{p_t}(\tilde{v}_l^{(st)} - v_l^{(st)})^2,
\end{align}
where only the second term involves $\bl{v}^{(st)}$. This decomposition arises from a sum of squares decomposition in linear regression: $\Vert \bl{y} - \bl{x}\beta \Vert^2_F = \Vert \bl{y} - \bl{x}\hat{\beta} \Vert^2_F + \Vert \bl{x}\hat{\beta} - \bl{x}\beta \Vert^2_F$, with the roles of $\bl{y},\bl{x},\hat{\beta}$, and $\hat{\beta}$ respectively being played by $\tilde{\bl{T}}_{\bullet \cdots \bullet l \bullet \cdots \bullet}, \otimes_{j \neq t} \bl{v}^{(sj)}, \tilde{v}_l^{(st)}$, and $v_l^{(st)}$. Importantly, the second term in \eqref{eqn:HALsQuadraticObjective} is separable in $v_l^{(st)}$, $l \in [p_t]$, indicating that the efficient, pool-adjacent-violators algorithm (PAVA) \citep{PAVAde2010isotone,PAVAyu2016exact} can be used to minimize this quadratic program over $\mc{C}(\mc{P}_t)$. Our ND hierarchical least squares algorithm is a block-coordinate descent routine that proceeds by cycling through the PAVA updates for each vector $\bl{v}^{(ij)}$ until convergence.

\begin{algorithm}
\caption{ND Hierarchical Least Squares}\label{alg:cap}
\begin{algorithmic}
\Require A tensor $\bl{T}$ and a rank $r$. 
\State Initialize $(\bl{v}^{(11)},\ldots,\bl{v}^{(1k)},\bl{v}^{(21)},\ldots,\bl{v}^{(2k)},\ldots\ldots,\bl{v}^{(rk)})$ with $\bl{v}^{(ij)} \in \mc{C}(\mc{P}_j)$. 

\Repeat{
\For{$s = 1,\ldots,r$}
\For{$t = 1,\ldots,k$}
\For{$l = 1,\ldots,p_t$} 
\State $\tilde{\bl{T}}_{\bullet \cdots \bullet l \bullet \cdots \bullet} \gets \bl{T}_{\bullet \cdots \bullet l \bullet \cdots \bullet} - \sum_{i \neq s} v^{(it)}_{l} \otimes_{j \neq  t}^k \bl{v}^{(ij)}$. 
\State $\tilde{v}_l^{(st)} \gets \frac{\langle \tilde{\bl{T}}_{\bullet \cdots \bullet l \bullet \cdots \bullet} , \otimes_{j \neq t} \bl{v}^{(sj)} \rangle }{\big\Vert \otimes_{j \neq t} \bl{v}^{(sj)} \big\Vert_F^2 }$.
\EndFor
\State Update $\bl{v}^{(st)} \gets \argmin_{\bl{v}^{(st)} \in \mc{C}(\mc{P}_t)}  \sum_{l = 1}^{p_t}(\tilde{v}_l^{(st)} - v_l^{(st)})^2$ via PAVA.
\EndFor
\EndFor
}
\Until{The sum $\sum_{i = 1}^r \otimes_{j = 1}^k \bl{v}^{(ij)}$ stabilizes.}
\\
\Return{The rank-$r$ ND approximation $(\bl{v}^{(11)},\ldots,\bl{v}^{(1k)},\bl{v}^{(21)},\ldots,\bl{v}^{(2k)},\ldots\ldots,\bl{v}^{(rk)})$.} 
\end{algorithmic}
\end{algorithm}

Due to the Frobenius norm optimization problem being jointly non-convex in the $\bl{v}^{(ij)}$s there is no guarantee that the sequence of updates converges to a global optimum. However, Proposition 2.7.1 of \citep{BertsekasNonLinearProgramming} ensures that any limit point of the $\bl{v}^{(ij)}$ iterates is a stationary point as long as none of the vectors $\bl{v}^{(ij)}$ are updated to the zero vector over the course of the algorithm. The issue that can occur when any $\bl{v}^{(sj)} = \bl{0}$ is that the componentwise objective function \eqref{eqn:HALsQuadraticObjective} is not strictly convex because the relevant quadratic term vanishes. As there can exist multiple stationary points, it is recommended that multiple, distinct initializations be chosen in the ND HALS algorithm. One possible choice of initialization is to first run an unconstrained, rank-$r$ approximation algorithm on $\bl{T}$, such as alternating least squares \citep{AlternatingLeastSquares}, to obtain vectors $\tilde{\bl{v}}^{(ij)}$. These vectors can then respectively be orthogonally projected onto the cone $\mc{C}(\mc{P}_j)$ by again using PAVA to solve $\bl{v}^{(ij)} = \argmin_{\bl{v}^{(ij)} \in \mc{C}(\mc{P}_j)} \Vert \tilde{\bl{v}}^{(ij)} - \bl{v}^{(ij)} \Vert_2^2$.  

\section{Applications of Low ND Rank Decompositions}
\label{sec:Applications}

In this section two applications of low ND rank factorizations are presented. 
\subsection{Pig Weight Data}
The pig weight dataset \citep[Sec 3.2]  {PigWeightSource} available in the \texttt{fds} R package \citep{fdsRPackageshang2018package} records the weights of $48$ growing pigs over the course of nine weeks. Figure \ref{fig:PigGrowthCurves} illustrates that all of the pigs have monotonically increasing weights over time. If $\bl{T} \in \mb{R}^{48 \times 9}$ is the matrix of weights we may assume that the poset corresponding to the columns is a chain, while no ordering constraints, apart from nonnegativity, are placed on the rows. If $\bl{T} = \sum_{i = 1}^r \bl{a}^{(i)} (\bl{b}^{(i)})^\intercal $ is a rank-$r$ ND factorization, the weight profile of pig $j$ is $\bl{T}_{j \cdot} = \sum_{i = 1}^r a^{(i)}_j (\bl{b}^{(i)})^\intercal$. Up to scale factors, the vectors $\bl{b}^{(i)}$ can be interpreted as growth curves for $r$ different ``subpopulations'' of pigs, while $a^{(i)}_j$ indicates the degree of membership of pig $j$ to subpopulation $i$. The growth curve of each pig is modeled to approximately be a conical combination of the subpopulation curves. If only an NMF was applied to this dataset the $\bl{b}^{(i)}$ may not be as easily interpretable as growth curves. Furthermore, the second, right, singular vector certainly cannot be interpreted as a growth curve as it does not even contain entries that have the same sign.

\begin{figure}[h!]
    \centering
    \begin{subfigure}[t]{0.5\textwidth}
        \centering
        \includegraphics[width = .9\textwidth]{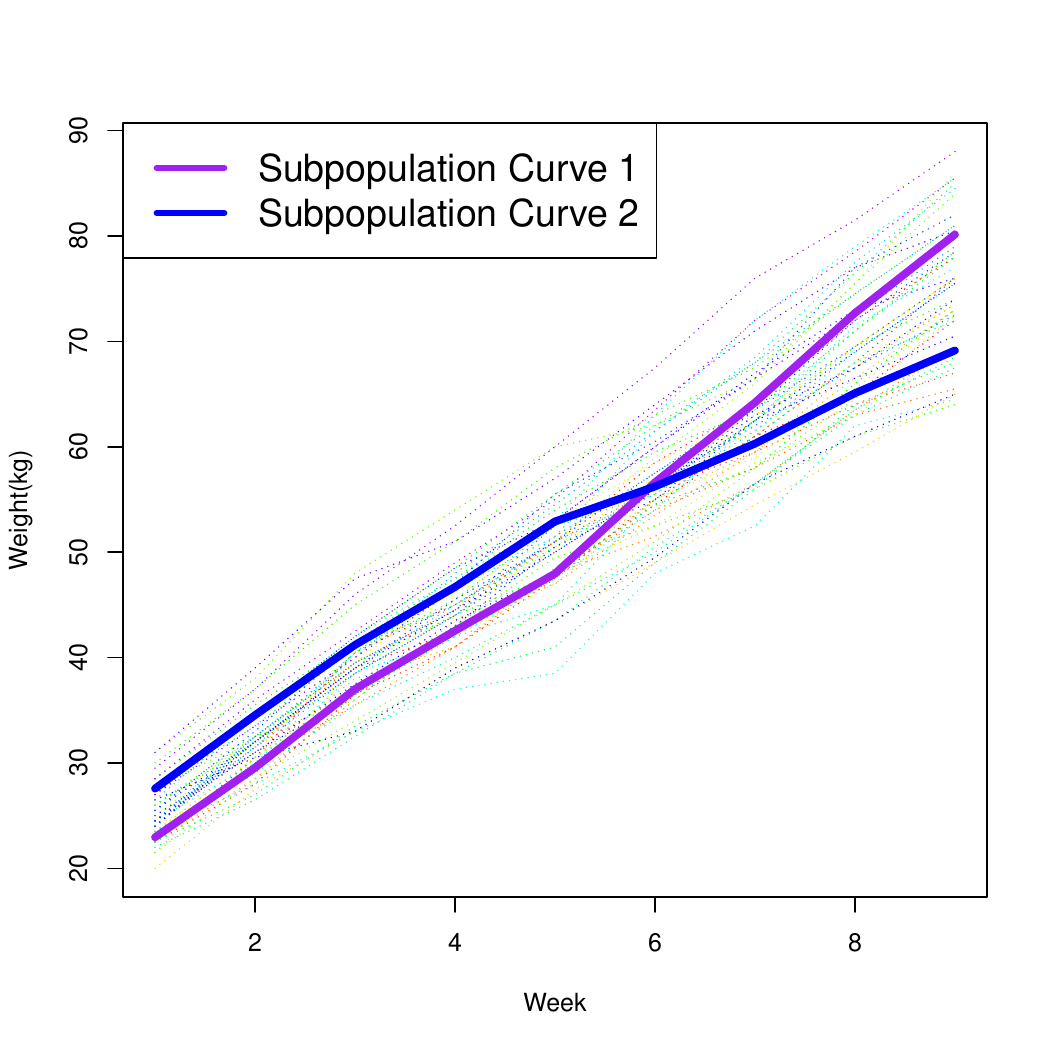}
        \caption{Observed growth curves for each pig, along with the two subpopulation growth curves $\bl{b}^{(1)}$ and $\bl{b}^{(2)}$.}
        \label{fig:PigGrowthCurves}
    \end{subfigure}%
    ~ 
    \begin{subfigure}[t]{0.5\textwidth}
        \centering
        \includegraphics[width = .9\textwidth]{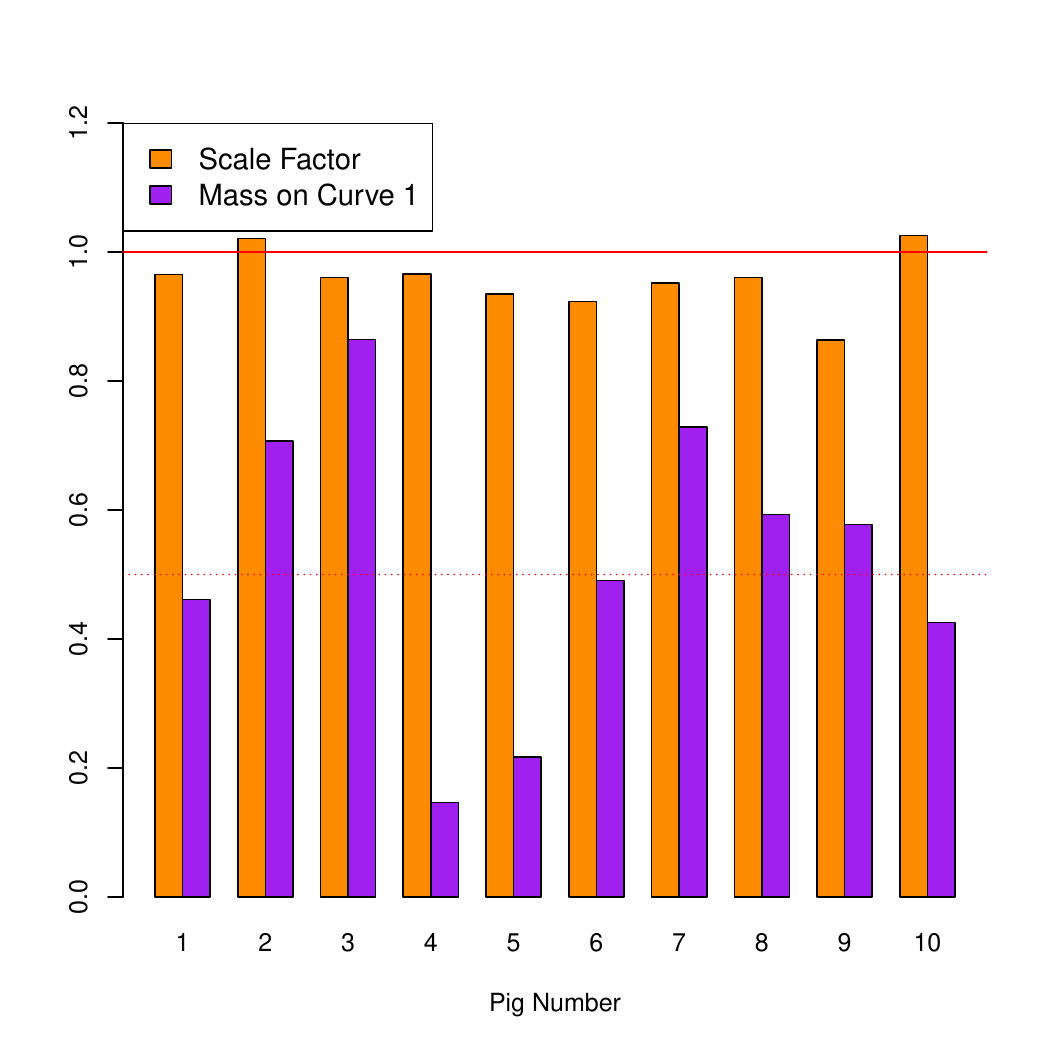}
        \caption{The scale factor $a^{(1)}_i + a^{(2)}_i$ and the relative mass on the first growth curve $a^{(1)}_i/(a^{(1)}_i + a^{(2)}_i)$ for each of the first ten pigs $i = 1,\ldots,10$.}
        \label{fig:PigWeightWeights}
    \end{subfigure}
    \caption{}
        \label{fig:PigWeights}
\end{figure}

To fit this dataset we find an ND rank two approximation $\bl{T} \approx \bl{a}^{(1)} \otimes \bl{b}^{(1)} + \bl{a}^{(2)} \otimes \bl{b}^{(2)}$ with $\bl{a}^{(i)} \in \mb{R}^{48}_+$ and $\bl{b}^{(i)} \in \mc{C}(\{1,\ldots,9\})$. A rank of two was chosen because this rank is sufficient to yield a low-reconstruction error for $\bl{T}$; the first four squared singular values of $\bl{T}$ are  $121{,}000$, $613$, $129$, and $109$. The rank-two SVD approximation $\bl{T}^{(2)}$ of $\bl{T}$ has nonnegative entries and monotone increasing rows, implying that $\bl{T}^{(2)} \in \mc{N}_{\leq 2}$ (Theorem \ref{thm:NDRankTwo}). The vectors $\bl{b}_1,\bl{b}_2$ can be taken to be the pair of rows of $\bl{T}^{(2)}$ that have the largest angle between each other. Such a pair can easily be found by computing dot products between the rows of $\bl{T}^{(2)}$ that are rescaled to have unit norm. This approach has been used in the NMF literature for matrices that have a nonnegative rank of two which have the property of being separable \citep[Sec 4.1]{GillisNNMF}. The corresponding coefficients $\bl{a}^{(1)},\bl{a}^{(2)}$ are found by a linear regression. Lastly, we rescale $\bl{b}^{(1)},\bl{b}^{(2)}$ so that the mean weight of each $\bl{b}^{(i)}$ is equal to the average weight across all the pigs: $\tfrac{1}{9}\sum_{l = 1}^9 b^{(i)}_l = \tfrac{1}{9*48}\sum_{i = 1}^{48} \sum_{j = 1}^9 T_{ij}$. Figure \ref{fig:PigGrowthCurves} displays the two estimated growth curves $\bl{b}^{(1)}$ and $\bl{b}^{(2)}$, overlaid on the observed growth curves. Interestingly, around week five the slope of both estimated curves change, a phenomena that may warrant further study. In Figure \ref{fig:PigWeightWeights} we see that the fitted curve $a_i^{(1)}\bl{b}^{(1)} + a_i^{(2)} \bl{b}^{(2)}$, for pig $i$, is nearly a convex combination of $\bl{b}^{(1)}$ and $\bl{b}^{(2)}$, as the scale factor $a^{(1)}_i + a^{(2)}_i$ is close to one. The relative importance of $\bl{b}^{(1)}$ as compared to $\bl{b}^{(2)}$ noticeably varies across the pigs. 

One issue that has not yet been discussed in this work is that ND factorizations are not unique in general, even after accounting for scaling and permutation of the rank-one factors. Various regularization penalties or additional constraints have been introduced in the NMF literature \citep[Ch 4]{GillisNNMF} to select a factorization that has desirable properties. In the above analysis we have implicitly selected the minimum volume \citep{MinVolumeNMFmiao2007endmember} ND factorization by choosing rows from $\bl{T}^{(2)}$ to serve as the $\bl{b}^{(i)}$. An alternative here is take a maximum volume factorization by choosing the $\bl{b}^{(i)}$ to be the extremal rays of the cone $\mc{C}(\{1,\ldots,9\}) \cap \text{row}(\bl{T}^{(2)})$. Either factorization will yield the same reconstruction error $\Vert \bl{T} - \bl{T}^{(2)} \Vert^2_F$, but will lead to potentially different interpretations with respect to the $\bl{b}^{(i)}$ curves and the $\bl{a}^{(i)}$ membership weights.


\subsection{Perceived Mental Health of Canadians}
We examine a subset of the data from the annual Canadian Community Health Survey (CCHS) \citep{StatCanData} from $2016$-$2022$ that records responses of whether an individual perceives their mental health to be either fair or poor. The survey also records the sex and age group of the participant. The $(5,7,2)$-dimensional tensor $\bl{T}$ in Table \ref{tab:MentalHealthData} displays the proportion of respondents who indicated that they had either poor or fair mental health.

\begin{table}[ht]
\centering
\begin{tabular}{|c||rrrrrrr|}
  \hline
 Age Groups & \multicolumn{7}{|c|}{Female} 
  \\
  \hline
 & 2016 & 2017 & 2018 & 2019 & 2020 &  2021 & 2022   \\ 
  \hline
  \hline
12-17 & 6.0&	7.8&	8.5&	8.4&	12.9&	16.5&	21.0  \\ 
  18-34 &  9.0&	10.1&	12.1&	13.1&	15.3&	17.8&	24.2\\ 
  35-49 & 7.6&	7.9&	8.4&	8.2&	10.8&	14.6&	16.4  \\ 
  50-64 & 8.3&	8.1&	7.9&	7.5&	9.3&	11.6&	14.0 \\ 
  65+ & 5.7&	4.7&	5.4&	5.5&	6.0&	6.8&	9.2 \\ 
  \hline
& \multicolumn{7}{|c|}{Male} 
\\
\hline
  12-17 & 2.9&	3.9&	5.0&	3.8&	3.8&	7.5&	8.7  \\ 
  18-34 & 6.7&	6.6&	7.6&	10.6&	11.2&	13.6&	16.1 \\ 
  35-49 & 5.2&	6.2&	6.8&	7.3&	9.9&	11.6&	13.5 \\ 
  50-64 & 	7.3&	6.4&	7.6&	6.8&	8.7&	9.0&	11.7 \\ 
  65+ & 6.3&	6.0&	4.5&	4.9&	4.9&	6.7&	7.9  \\ 
  \hline
   \hline
\end{tabular}
\caption{Percentage of respondents who indicated that they perceived their mental health to be either fair or poor.}
\label{tab:MentalHealthData}
\end{table}

Using the ND HALs algorithm from Section \ref{sec:Optimization} we fit both rank-two and rank-one approximations to the data tensor of the form $\sum_{i = 1}^r \bl{a}^{(i)} \otimes \bl{b}^{(i)} \otimes\bl{c}^{(i)}$. The residual sum of squares for the respective approximations are $145$ and $55$ with the total sum of squares equaling $6925$. We only impose nonnegativity constraints on the sex mode, while the Hasse diagrams in Figure \ref{fig:MentalHealthHasse} provides orderings on the age group and years that are consistent with the observed data. Specifically, the mental health of respondents deteriorated in 2020 and the subsequent two years, conceivably as a result of the COVID-19 pandemic. Respondents in the second youngest age group were most likely to perceive their mental health as fair or poor while respondents in the oldest age group were most likely to not provide this response.

There is a marked difference between how individuals in the 12-17 age group responded between females and males, with females having poorer perceived mental health. In fitting a rank-two ND decomposition the two rank-one factors provide a low-dimensional summary of this sex-age interaction. The rank-one factors of the two rank-one terms are shown in Table \ref{tab:MentalHealthRankOneFactors}. These factors are scaled so that $\Vert \bl{c}^{(i)}\Vert_1 = 1$ and $\Vert \bl{b}^{(i)} \Vert_1 = 7$. We see that the factor $\bl{a}^{(2)} \otimes \bl{b}^{(2)} \otimes \bl{c}^{(2)}$ only alters the female portion of the tensor and represents a deviation between the male and female responses. In comparison with $\bl{a}^{(1)}$, the 12-17 age group in $\bl{a}^{(2)}$ takes on larger values relative to the 18-34 age group. Furthermore, the impact of pandemic is more apparent in $\bl{b}^{(2)}$ as compared to $\bl{b}^{(1)}$. In summary, the low ND rank decomposition allows for a concise interpretation of the data tensor and illustrates that young females in 2022 were particularly prone to respond that they had fair or poor mental health as compared to young males.

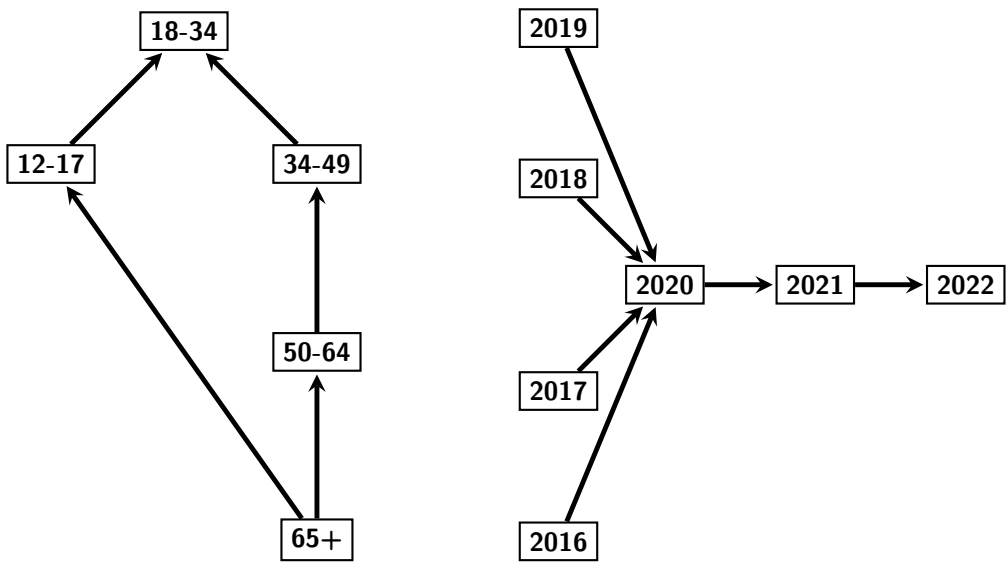
\begin{figure*}[!ht]
    \centering
    \begin{subfigure}[t]{0.5\textwidth}
        \centering
\begin{tikzpicture}[->,>=stealth,shorten >=1pt,auto,node distance=2.5cm,thick,main node/.style={draw,font=\sffamily\small\bfseries}]
  \node[main node] (3) {18-34};
  \node[main node] (1) [below left of=3] {12-17};
  \node[main node] (2) [below right of=3] {34-49};
    \node[main node] (4) [below of=2] {50-64};
      \node[main node] (5) [below of=4] {65+};
  \path[every node/.style={font=\sffamily\small},line width=.65mm]
    (1) edge [left] node[left] {} (3)
    (5) edge (4)
    (5) edge (1)
    (4) edge (2)
    (2) edge [right] node[right] {} (3);
\end{tikzpicture}
    \end{subfigure}%
    ~ 
    \begin{subfigure}[t]{0.4\textwidth}
        \centering
\begin{tikzpicture}[->,>=stealth,shorten >=1pt,auto,node distance=2cm,thick,main node/.style={draw,font=\sffamily\small\bfseries}]
  \node[main node] (20) {2020};
  \node[main node] (21) [right of=20] {2021};
    \node[main node] (22) [right of=21] {2022};
    \node[main node] [above left of=20]  (18) {2018};
    \node[main node] [below left of=20] (17) {2017};
      \node[main node] [below of=17] (16) {2016};
            \node[main node] (19)[above of=18] {2019};
  \path[every node/.style={font=\sffamily\small},line width=.65mm]
    (19) edge  (20)
    (20) edge [right] (21)
        (21) edge [right] (22)
        (18) edge (20)
          (17) edge (20)
            (16) edge (20);
\end{tikzpicture}
    \end{subfigure}
    \caption{The respective Hasse diagrams indicating orderings over the age groups and years.}
    \label{fig:MentalHealthHasse}
\end{figure*}

\begin{table}[h!]
    \centering
    \begin{subtable}[t]{0.3\textwidth}
        \centering
        \begin{tabular}{|c|c|c|}
            \hline
          Age group  & $\bl{a}^{(1)}$  & $\bl{a}^{(2)}$ \\
            \hline
           12-17 & 8.93  & 8.28 \\
           18-34 &  16.77 & 8.28 \\
           35-49 & 14.23 & 5.06 \\
           50-64 & 13.18 & 4.17 \\
           65+ & 8.93 & 2.53 \\
            \hline
        \end{tabular}
        \subcaption{Age group factors.}
        \label{tab:sub_a}   
    \end{subtable}
    \hfill 
    \begin{subtable}[t]{0.3\textwidth}
        \centering
        \begin{tabular}{|c|c|c|}
            \hline
          Year  & $\bl{b}^{(1)}$  & $\bl{b}^{(2)}$ \\
            \hline
           2016 & 0.76  & 0.57 \\
           2017 &  0.75 &  0.70\\
           2018 & 0.82 & 0.78\\
               2019 & 0.90 & 0.75 \\
                   2020 & 1.02 & 1.07\\
                       2021 & 1.25 & 1.34\\
                           2022 & 1.49 & 1.78\\
            \hline
        \end{tabular}
        \subcaption{Year factors.}
        \label{tab:sub_b}
    \end{subtable}
        \begin{subtable}[t]{0.3\textwidth}
        \centering
        \begin{tabular}{|c|c|c|}
            \hline
    Sex     & $\bl{c}^{(1)}$  & $\bl{c}^{(2)}$ \\
            \hline
           Female & 0.38  & 1 \\
           Male &  0.62 & 0 \\
            \hline
        \end{tabular}
        \subcaption{Sex factors.}
        \label{tab:sub_b}
    \end{subtable}
        \caption{Factors appearing in the rank-two approximation $\bl{T} \approx \bl{a}^{(1)} \otimes \bl{b}^{(1)} \otimes \bl{c}^{(1)} +  \bl{a}^{(2)} \otimes \bl{b}^{(2)} \otimes \bl{c}^{(2)}$. }
    \label{tab:MentalHealthRankOneFactors}
\end{table}

\section{Conclusion and Future Directions}
In this article the concept of the nondecreasing rank was introduced and properties of nondecreasing factorizations were examined. There remains a wealth of relevant, open questions to be explored, a few of which we mention here. One property of ND factorizations not extensively discussed is the uniqueness of such factorizations. Like with nonnegative matrix factorizations, it is expected that only under similarly strong conditions, such as separability \citep[Sec 4.2.2]{GillisNNMF}, will a matrix have a unique ND factorization. Also not discussed, is how to choose a suitable rank $r$ for performing an ND factorization. An easy method for doing this is to plot the value of a divergence objective function against $r$ and look for a kink in the plot where the divergence stops decreasing rapidly in response to an increase of $r$ \citep{ElbowPlotchoi2017selecting}. In Section \ref{sec:Optimization} an algorithm is introduced for minimizing the Frobenius norm between a tensor and its ND approximation. Developing efficient multiplicative update \citep{MultiplicativeUpdateleesueng2000algorithms} algorithms for finding ND approximations would be a valuable contribution. To construct such an algorithm more constraints as compared to NMF, stemming from the monotonicity constraints of the order cones, have to be contended with.

Nondecreasing factorizations are closely related to ideas in the field of order constrained statistical inference \citep{OrderRestrictedBarlowBartholomew,OrderRestrictedRobertsonDykstra}. Beyond monotonicity constraints, related shape constraints for functions, such as convexity and unimodality, would be interesting to inspect for low-rank structures. Unimodal matrix factorizations have previously been examined in \citep{ang2021nonnegativeUnimodal,bro1998UnimodalPositivity}. Unimodality is related to the umbrella order where $x_1 \prec \cdots \prec x_{l-1} \prec x_l \succ x_{l+1} \cdots \succ x_p$. The main distinction is that in a unimodal vector it is not known which of the entries is the maximum, while for the umbrella order above the $x_l$th entry is posited to be the maximum. The non-convex cone of unimodal vectors is therefore equal to a union of umbrella-order cones. Problems involving convex, unimodal, and monotone factorizations can also be contextualized from an infinite-dimensional, functional data analysis perspective, where notions of convex sets and rank continue to be well-defined.

\section*{Acknowledgments}
The author acknowledges the support of the Natural Sciences and Engineering Research Council of Canada (NSERC), [RGPIN-2025-03968, DGECR-2025-00237].

Cette recherche a été financée par le Conseil de recherches en sciences naturelles et en génie du Canada (CRSNG), [RGPIN-2025-03968, DGECR-2025-00237].

\bibliographystyle{abbrv}
\bibliography{biblio}

\begin{thebibliography}{10}

\bibitem{AgrestiCategoricalDataAnalysis}
A.~Agresti.
\newblock {\em Categorical data analysis}.
\newblock Wiley Series in Probability and Statistics. Wiley-Interscience, Hoboken, NJ, third edition, 2013.

\bibitem{TsimermanMonotonerank}
B.~Alexeev, M.~A. Forbes, and J.~Tsimerman.
\newblock Tensor rank: some lower and upper bounds.
\newblock In {\em 26th {A}nnual {IEEE} {C}onference on {C}omputational {C}omplexity}, pages 283--291. IEEE Computer Soc., Los Alamitos, CA, 2011.

\bibitem{AllmanZwiernikRankTwo}
E.~S. Allman, J.~A. Rhodes, B.~Sturmfels, and P.~Zwiernik.
\newblock Tensors of nonnegative rank two.
\newblock {\em Linear Algebra Appl.}, 473:37--53, 2015.

\bibitem{ang2021nonnegativeUnimodal}
A.~M.~S. Ang, N.~Gillis, A.~Vandaele, and H.~De~Sterck.
\newblock Nonnegative unimodal matrix factorization.
\newblock In {\em ICASSP 2021-2021 IEEE International Conference on Acoustics, Speech and Signal Processing (ICASSP)}, pages 3270--3274. IEEE, 2021.

\bibitem{InjectiveTensorAubrunConeEntagleFeasbility}
G.~Aubrun, L.~Lami, C.~Palazuelos, and M.~Pl\'avala.
\newblock Entangleability of cones.
\newblock {\em Geom. Funct. Anal.}, 31(2):181--205, 2021.

\bibitem{BarkerTheoryofCones}
G.~P. Barker.
\newblock Theory of cones.
\newblock {\em Linear Algebra Appl.}, 39:263--291, 1981.

\bibitem{OrderRestrictedBarlowBartholomew}
R.~E. Barlow, D.~J. Bartholomew, J.~M. Bremner, and H.~D. Brunk.
\newblock {\em Statistical inference under order restrictions. {T}he theory and application of isotonic regression}.
\newblock Wiley Series in Probability and Mathematical Statistics. John Wiley \& Sons, London-New York-Sydney, 1972.

\bibitem{NonnegativeMatricesBookBerman}
A.~Berman and R.~J. Plemmons.
\newblock {\em Nonnegative matrices in the mathematical sciences}, volume~9 of {\em Classics in Applied Mathematics}.
\newblock Society for Industrial and Applied Mathematics (SIAM), Philadelphia, PA, 1994.

\bibitem{BertsekasNonLinearProgramming}
D.~P. Bertsekas.
\newblock {\em Nonlinear programming}.
\newblock Athena Scientific Optimization and Computation Series. Athena Scientific, Belmont, MA, third edition, 2016.

\bibitem{blekhermanteitlerranks}
G.~Blekherman and Z.~Teitler.
\newblock On maximum, typical and generic ranks.
\newblock {\em Math. Ann.}, 362(3-4):1021--1031, 2015.

\bibitem{RealAlgGeoBochnakCosteRoy}
J.~Bochnak, M.~Coste, and M.-F.~c. Roy.
\newblock {\em Real algebraic geometry}, volume~36 of {\em Ergebnisse der Mathematik und ihrer Grenzgebiete (3)}.
\newblock Springer-Verlag, Berlin, 1998.

\bibitem{BoydConvex}
S.~Boyd and L.~Vandenberghe.
\newblock {\em Convex optimization}.
\newblock Cambridge University Press, Cambridge, 2004.

\bibitem{bro1998UnimodalPositivity}
R.~Bro and N.~D. Sidiropoulos.
\newblock Least squares algorithms under unimodality and non-negativity constraints.
\newblock {\em Journal of Chemometrics: A Journal of the Chemometrics Society}, 12(4):223--247, 1998.

\bibitem{BrownExponentialFamilies}
L.~D. Brown.
\newblock {\em Fundamentals of statistical exponential families with applications in statistical decision theory}, volume~9 of {\em Institute of Mathematical Statistics Lecture Notes---Monograph Series}.
\newblock Institute of Mathematical Statistics, Hayward, CA, 1986.

\bibitem{ElbowPlotchoi2017selecting}
Y.~Choi, J.~Taylor, and R.~Tibshirani.
\newblock Selecting the number of principal components: Estimation of the true rank of a noisy matrix.
\newblock {\em The Annals of Statistics}, pages 2590--2617, 2017.

\bibitem{HALScichocki2007hierarchical}
A.~Cichocki, R.~Zdunek, and S.-i. Amari.
\newblock Hierarchical als algorithms for nonnegative matrix and 3d tensor factorization.
\newblock In {\em International Conference on Independent Component Analysis and Signal Separation}, pages 169--176. Springer, 2007.

\bibitem{CohenNonNegative}
J.~E. Cohen and U.~G. Rothblum.
\newblock Nonnegative ranks, decompositions, and factorizations of nonnegative matrices.
\newblock {\em Linear Algebra Appl.}, 190:149--168, 1993.

\bibitem{deBruynTensorProducts}
J.~v.~D. de~Bruyn.
\newblock Tensor products of convex cones.
\newblock {\em arXiv preprint arXiv:2009.11843}, 2020.

\bibitem{PAVAde2010isotone}
J.~De~Leeuw, K.~Hornik, and P.~Mair.
\newblock Isotone optimization in r: pool-adjacent-violators algorithm (pava) and active set methods.
\newblock {\em Journal of Statistical Software}, 32:1--24, 2010.

\bibitem{PigWeightSource}
P.~J. Diggle, P.~J. Heagerty, K.-Y. Liang, and S.~L. Zeger.
\newblock {\em Analysis of Longitudinal Data}, volume~25 of {\em Oxford Statistical Science Series}.
\newblock Oxford University Press, Oxford, second edition, 2013.

\bibitem{JordanDing2008convexSemi}
C.~H. Ding, T.~Li, and M.~I. Jordan.
\newblock Convex and semi-nonnegative matrix factorizations.
\newblock {\em IEEE Transactions on Pattern Analysis and Machine Intelligence}, 32(1):45--55, 2008.

\bibitem{DrtonLecturesonAlgstats}
M.~Drton, B.~Sturmfels, and S.~Sullivant.
\newblock {\em Lectures on algebraic statistics}, volume~39 of {\em Oberwolfach Seminars}.
\newblock Birkh\"auser Verlag, Basel, 2009.

\bibitem{KubjasNMFRank3Geometry}
R.~H. Eggermont, E.~Horobe\c~t, and K.~Kubjas.
\newblock Algebraic boundary of matrices of nonnegative rank at most three.
\newblock {\em Linear Algebra Appl.}, 508:62--80, 2016.

\bibitem{AtomicconeRankParrilo}
H.~Fawzi and P.~A. Parrilo.
\newblock Self-scaled bounds for atomic cone ranks: applications to nonnegative rank and cp-rank.
\newblock {\em Math. Program.}, 158(1-2):417--465, 2016.

\bibitem{BetaDivergenceNMF}
C.~F\'evotte and J.~Idier.
\newblock Algorithms for nonnegative matrix factorization with the {$\beta$}-divergence.
\newblock {\em Neural Comput.}, 23(9):2421--2456, 2011.

\bibitem{OrderConeFacesgarcia2020order}
P.~Garc{\'\i}a-Segador and P.~Miranda.
\newblock Order cones: a tool for deriving k-dimensional faces of cones of subfamilies of monotone games.
\newblock {\em Annals of Operations Research}, 295(1):117--137, 2020.

\bibitem{gawrilow2000Polymake}
E.~Gawrilow and M.~Joswig.
\newblock Polymake: a framework for analyzing convex polytopes.
\newblock In {\em Polytopes—Combinatorics and Computation}, pages 43--73. Springer, 2000.

\bibitem{OrderedANOVA}
J.~Gertheiss.
\newblock A{NOVA} for factors with ordered levels.
\newblock {\em J. Agric. Biol. Environ. Stat.}, 19(2):258--277, 2014.

\bibitem{GillisNNMF}
N.~Gillis.
\newblock {\em Nonnegative matrix factorization}, volume~2 of {\em Data Science}.
\newblock Society for Industrial and Applied Mathematics (SIAM), Philadelphia, PA, 2021.

\bibitem{gillis2014hierarchicalRankTwo}
N.~Gillis, D.~Kuang, and H.~Park.
\newblock Hierarchical clustering of hyperspectral images using rank-two nonnegative matrix factorization.
\newblock {\em IEEE Transactions on Geoscience and Remote Sensing}, 53(4):2066--2078, 2014.

\bibitem{Seleniumjeske2009testing}
D.~R. Jeske, H.~K. Xu, T.~Blessinger, P.~Jensen, and J.~Trumble.
\newblock Testing for the equality of ec50 values in the presence of unequal slopes with application to toxicity of selenium types.
\newblock {\em Journal of Agricultural, Biological, and Environmental Statistics}, 14(4):469--483, 2009.

\bibitem{KoldaTensor}
T.~G. Kolda and B.~W. Bader.
\newblock Tensor decompositions and applications.
\newblock {\em SIAM Rev.}, 51(3):455--500, 2009.

\bibitem{KubjasRobevaExpMaxNMF}
K.~Kubjas, E.~Robeva, and B.~Sturmfels.
\newblock Fixed points {EM} algorithm and nonnegative rank boundaries.
\newblock {\em Ann. Statist.}, 43(1):422--461, 2015.

\bibitem{TensorsLandsberg}
J.~M. Landsberg.
\newblock {\em Tensors: geometry and applications}, volume 128 of {\em Graduate Studies in Mathematics}.
\newblock American Mathematical Society, Providence, RI, 2012.

\bibitem{MultiplicativeUpdateleesueng2000algorithms}
D.~Lee and H.~S. Seung.
\newblock Algorithms for non-negative matrix factorization.
\newblock {\em Advances in Neural Information Processing Systems}, 13, 2000.

\bibitem{AlternatingLeastSquares}
N.~Li, S.~Kindermann, and C.~Navasca.
\newblock Some convergence results on the regularized alternating least-squares method for tensor decomposition.
\newblock {\em Linear Algebra Appl.}, 438(2):796--812, 2013.

\bibitem{Lim2005singularPerronFrobenius}
L.-H. Lim.
\newblock Singular values and eigenvalues of tensors: a variational approach.
\newblock In {\em 1st IEEE International Workshop on Computational Advances in Multi-Sensor Adaptive Processing, 2005.}, pages 129--132. IEEE, 2005.

\bibitem{BorderRanklim2009nonnegative}
L.-H. Lim and P.~Comon.
\newblock Nonnegative approximations of nonnegative tensors.
\newblock {\em Journal of Chemometrics: A Journal of the Chemometrics Society}, 23(7-8):432--441, 2009.

\bibitem{RecommenderSystemNMF}
X.~Luo, M.~Zhou, Y.~Xia, and Q.~Zhu.
\newblock An efficient non-negative matrix-factorization-based approach to collaborative filtering for recommender systems.
\newblock {\em IEEE Transactions on Industrial informatics}, 10(2):1273--1284, 2014.

\bibitem{MinVolumeNMFmiao2007endmember}
L.~Miao and H.~Qi.
\newblock Endmember extraction from highly mixed data using minimum volume constrained nonnegative matrix factorization.
\newblock {\em IEEE Transactions on Geoscience and Remote Sensing}, 45(3):765--777, 2007.

\bibitem{TensorProdConeMulansky}
B.~Mulansky.
\newblock Tensor products of convex cones.
\newblock In {\em Multivariate Approximation and Splines}, volume 125 of {\em Internat. Ser. Numer. Math.}, pages 167--176. Birkh\"auser, Basel, 1997.

\bibitem{PearlCausality}
J.~Pearl.
\newblock {\em Causality}.
\newblock Cambridge University Press, Cambridge, second edition, 2009.

\bibitem{ComonXrank}
Y.~Qi, P.~Comon, and L.-H. Lim.
\newblock Semialgebraic geometry of nonnegative tensor rank.
\newblock {\em SIAM J. Matrix Anal. Appl.}, 37(4):1556--1580, 2016.

\bibitem{RamsayMonotone}
J.~O. Ramsay.
\newblock Estimating smooth monotone functions.
\newblock {\em J. R. Stat. Soc. Ser. B Stat. Methodol.}, 60(2):365--375, 1998.

\bibitem{ResselMonotonicFunctions}
P.~Ressel.
\newblock Higher order monotonic functions of several variables.
\newblock {\em Positivity}, 18(2):257--285, 2014.

\bibitem{DRCPackage}
C.~Ritz, F.~Baty, J.~C. Streibig, and D.~Gerhard.
\newblock Dose-response analysis using r.
\newblock {\em PLOS ONE}, 10(e0146021), 2015.

\bibitem{OrderRestrictedRobertsonDykstra}
T.~Robertson, F.~T. Wright, and R.~L. Dykstra.
\newblock {\em Order restricted statistical inference}.
\newblock Wiley Series in Probability and Mathematical Statistics: Probability and Mathematical Statistics. John Wiley \& Sons, Ltd., Chichester, 1988.

\bibitem{fdsRPackageshang2018package}
H.~L. Shang, R.~J. Hyndman, and M.~H.~L. Shang.
\newblock Package ‘fds’.
\newblock {\em Journal of the Korean Statistical Society}, 38(3):199--221, 2018.

\bibitem{ConstrainedStatInferenceSilvapuleSen}
M.~J. Silvapulle and P.~K. Sen.
\newblock {\em Constrained statistical inference}.
\newblock Wiley Series in Probability and Statistics. John Wiley and Sons, 2005.

\bibitem{TreamentOrderConstrainedsnapinn1987evaluating}
S.~M. Snapinn.
\newblock Evaluating the efficacy of a combination therapy.
\newblock {\em Statistics in Medicine}, 6(6):657--665, 1987.

\bibitem{StanleyPosetPolytopes}
R.~P. Stanley.
\newblock Two poset polytopes.
\newblock {\em Discrete Comput. Geom.}, 1(1):9--23, 1986.

\bibitem{StatCanData}
{Statistics Canada}.
\newblock Table 13-10-0096-01 health characteristics, annual estimates,, 2025.

\bibitem{sumi2015typical}
T.~Sumi, M.~Miyazaki, and T.~Sakata.
\newblock Typical ranks of m$\times$n$\times$(m- 1)n tensors with 3 $\leq$ m $\leq$ n over the real number field.
\newblock {\em Linear and Multilinear Algebra}, 63(5):940--955, 2015.

\bibitem{sumi2018maximal}
T.~Sumi, M.~Miyazaki, and T.~Sakata.
\newblock Maximal and typical nonnegative ranks of nonnegative tensors.
\newblock {\em arXiv preprint arXiv:1801.04086}, 2018.

\bibitem{thomas1974rank}
L.~Thomas.
\newblock Rank factorization of nonnegative matrices.
\newblock {\em SIAM Review}, 16(3):393, 1974.

\bibitem{TriNMFwang2011fast}
H.~Wang, F.~Nie, H.~Huang, and F.~Makedon.
\newblock Fast nonnegative matrix tri-factorization for large-scale data co-clustering.
\newblock In {\em IJCAI Proceedings-International Joint Conference on Artificial Intelligence}, volume~22, page 1553, 2011.

\bibitem{PAVAyu2016exact}
Y.-L. Yu and E.~P. Xing.
\newblock Exact algorithms for isotonic regression and related.
\newblock In {\em Journal of Physics: Conference Series}, volume 699, page 012016. IOP Publishing, 2016.

\bibitem{HyperspectralUnmixing}
C.~H. Zhao, B.~Z. Cheng, and W.~C. Yang.
\newblock Algorithm for hyperspectral unmixing using constrained nonnegative matrix factorization.
\newblock {\em J. Harbin Eng. Univ.}, 33(3):377--382, 2012.

\bibitem{ZieglerPolytopes}
G.~M. Ziegler.
\newblock {\em Lectures on polytopes}, volume 152 of {\em Graduate Texts in Mathematics}.
\newblock Springer-Verlag, New York, 1995.

\end{thebibliography}

\end{document}